\begin{document}

%

%

\twocolumn[

\aistatstitle{Online Batch Decision-Making with High-Dimensional Covariates}



\aistatsauthor{Chi-Hua Wang \And Guang Cheng}

\aistatsaddress{ Purdue University \And  Purdue University  } 
 ]

\begin{abstract}
We propose and investigate a class of new algorithms for sequential decision making that interacts with \textit{a batch of users} simultaneously instead of \textit{a user} at each decision epoch. This type of batch models is motivated by interactive marketing and clinical trial, where a group of people are treated simultaneously and the outcomes of the whole group are collected before the next stage of decision. In such a scenario, our goal is to allocate a batch of treatments to maximize treatment efficacy based on observed high-dimensional user covariates. We deliver a solution, named \textit{Teamwork LASSO Bandit algorithm}, that resolves a batch version of explore-exploit dilemma via switching between teamwork stage and selfish stage during the whole decision process. This is made possible based on statistical properties of LASSO estimate of treatment efficacy that adapts to a sequence of batch observations. In general, a rate of optimal allocation condition is proposed to delineate the exploration and exploitation trade-off on the data collection scheme, which is sufficient for LASSO to identify the optimal treatment for observed user covariates. An upper bound on expected cumulative regret of the proposed algorithm is provided.
\end{abstract}

\vspace{-3mm}
\section{Introduction}
\vspace{-1mm}

We consider a high-dimensional online batch decision making problem, a setting in which the decision-maker must interacts with \textit{a group of users}, instead of \textit{a user}, at each decision epoch.
Such setting arises very naturally in real-world applications but received less attention in the literature. In interactive marketing \cite{bertsimas2007learning}, marketers choose among a set of marketing messages to be sent to several customers simultaneously; updating marketing strategy once receiving a feedback is computationally impractical, and 
a more practical approach is to aggregate data in a prescribed length of period before adopting new strategy. In clinical trials \cite{ahuja2016response}, physicians choose among a set of available treatments to be administered to a group of patients simultaneously; updating treatment policy once measuring a response is unrealistic, and therapy in real practice is to collect data in a pre-approved length of period before designing new policy. In  real-world practice, applicability of online decision-making methodology turns out to be impeded by such \textit{limited adaptivity}. Similar issue has been addressed by \cite{bai2019provably} in reinforcement learning setting, but it remains open in the setting of online decision making under bandit feedback.

\begin{figure*}
  \centering \includegraphics[width=0.95\textwidth]{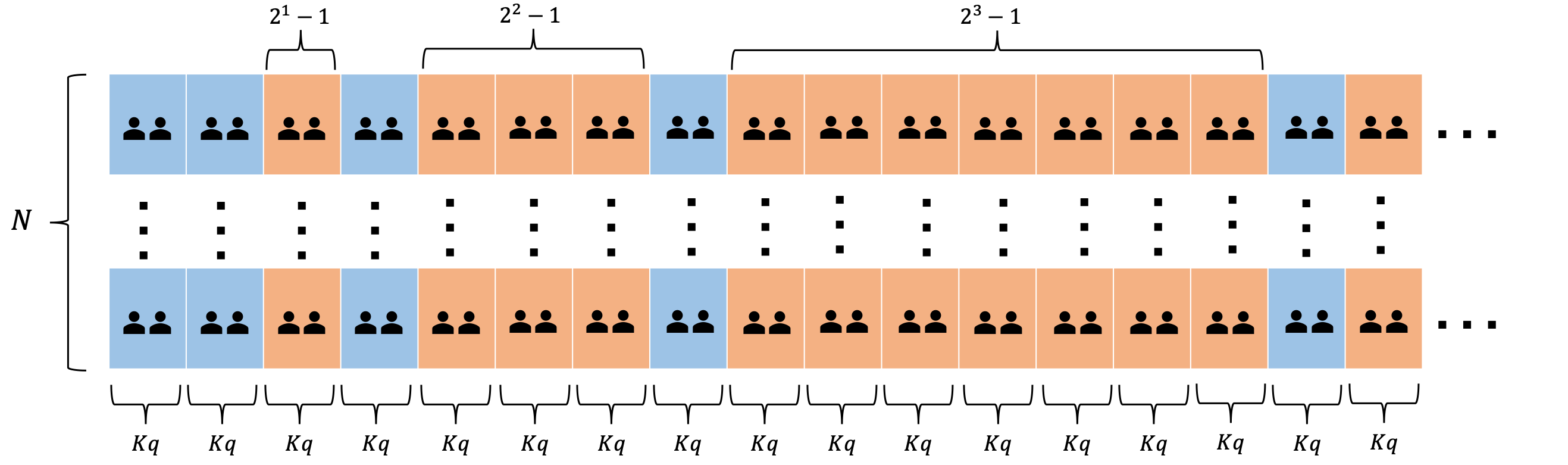}
  \caption{A realization of proposed Teamwork-Selfish policy in online batch decision making. N is the batch size. K is the number of available treatments. q is the number of repetition in a block. The agent runs teamwork mode in blue blocks and runs selfish mode in red blocks. The samples collected from blue blocks is called Teamwork sample. The samples collected from red blocks is called Selfish sample.}
  \label{online_batch_decision_making}
 \vspace{-3mm}
\end{figure*}

One feature of modern economy digitization for decision makers is to deliver {\em personalized} products, services, and solutions based on individual-level data. Additionally, in many practical settings, individual-level data are \textit{high-dimensional} but typically only a small number of the observed covariates are decisive. An additional layer of complexity in online batch decision-making is that, the whole decision making process is learned from {bandit feedback}: decision makers only observe the outcome for the product that was delivered, but not for any other available products that could have been delivered. Taking heterogeneity and bandit feedback into account, approaches of approximate dynamic programming or Markov decision process in \cite{ahuja2016response} addressed the dimensionality issue by transforming the covariate space into a finite number of states and then solving the corresponding Bellman equation. However, how to construct such a transformation is often unspecified in this line of approach and particularly ambiguous in the case of high-dimensional covariate space, impeding itself to embrace the blessing of modern economy digitization. 

In this paper, we propose a new class of approaches, named \textit{Teamwork LASSO Bandit algorithm}, for online batch decision-making to tackle the curse of heterogeneity and high-dimensional covariate space under bandit feedback. To achieve both goals simultaneously, a deliberate balance between exploration and exploitation is required to efficiently learn a personalize decision-making rule that maximizes the cumulative treatment efficacy. In particular, in \textit{Teamwork LASSO Bandit} algorithm, the agent switches between teamwork mode and selfish mode during the whole decision process to alternatively perform pure exploration and pure exploitation. By the proposed Teamwork-Selfish policy, the resulting sample set strikes a balance on optimal allocation in the sense that the rate of optimal allocation is beyond certain level for each treatment (see Definition \ref{templateCond}) with high probability. This strategy ensures that the LASSO regression accommodates dependency arising in a sequence of batch observations and enjoys certain statistical properties in order to achieve optimal allocation for each incoming user.

\textbf{Contributions.} 
Our contribution is three-fold. First, we propose the ``Teamwork LASSO Bandit algorithm,'' that solves the online batch decision-making problem with high-dimensional user covariates via learning LASSO estimates of treatment efficacy. 
Second, we propose a general ``optimal allocation rate condition'' on sample set as a theoretical guideline in designing a data collection scheme to identify the optimal treatment given observed user covariates. Such a scheme is illustrated by the proposed Teamwork-Selfish policy. Last, we establish an upper bound of the expected cumulative regret that scales linearly with the batch size. As a technical by-product, we develop a deviation inequality of LASSO regression that adaptes to a sequence of batch observations. To our knowledge, this is the first work addressing batch version of explore-exploitation dilemma in the setting of online batch decision-making.

\textbf{Related works}
Batched bandit problem has received less research attention over the last decade. The special case of two-armed bandit setting has been addressed in \cite{perchet2016batched}; recently, \cite{gao2019batched} extends the setting to K-armed bandit. While these works addressed the problem by mean model, we addressed batched bandit problem with high-dimensional linear model. In non-batch setting, \cite{rusmevichientong2010linearly} addressed the bandit problem by linear model in low dimensional setting; \cite{bastani2020online} then extends the setting to high-dimensional linear model. Comparing to these works, our model extends such setting to batched bandit problem.

\textbf{A toy example for illustration}

Consider the simplest scenario where the agent allocates one of two available treatments to each user in a size-2 batch at every decision epoch (That is, $N=2, K=2$ in Figure \ref{online_batch_decision_making}). In this case, each block contains $2q$ epochs and each epoch has two user. In the blue block, the agent runs in a teamwork mode: it sends both users to the first treatment at the first $q$ epochs and to the second treatment at the second $q$ epochs; in the red block, the agent runs in a selfish mode: it sends each user to her estimated optimal treatment based on all historical information over all $2q$ epochs. The agent runs in a teamwork mode (blue) on the block whose number is of power of $2$, that is, $1,2,4,8,16,\cdots$ and runs in a selfish mode on the red block. The samples collected from blue blocks are called Teamwork sample; the samples collected from red blocks are called Selfish sample. More specifically, in red block, the agent runs a selfish mode by a two-step procedure: the first step is to run LASSO regression on Teamwork sample to output a set of optimal treatment candidates for the current user; the second step is to run LASSO regression on the union of Teamwork sample and Selfish sample to decide the optimal treatment for the current user.
\paragraph{Notation}
For any positive integer $n$, define $[n] = \{1,\cdots, n\}$ and for any $n_1 < n_2, [n_1:n_2] = \{n_1,\cdots, n_2\}$ and $(n_1:n_2] = \{n_1+1,\cdots, n_2\}$. $|A|$ denotes the number of elements for any collection $A$. For any vector $v$, notation $\text{supp}(v) \equiv \{i| v_i \neq 0\}$ denotes the indexes of non-zero coordinate. For a vector $u=(u_1,\cdots, u_{d})$, $\|u\|_{\infty}\equiv \max_{i \in [d]}|u_i|$ denotes the maximum absolute value of its entries; $\|u\|_{1}\equiv \sum_{i=1}^{d}|u_i|$ the $L_1$-norm. 
\vspace{-3mm}
\section{Online Batch Decision Making}
\vspace{-3mm}

\textbf{Treatment efficacy model}
We consider an agent, who needs to allocate one of $K$ different available treatments (arms), denoted as $\mathcal{W}=\{w_k| k \in[K]\}$, for a size-$N$ batch of users in each decision epoch $t=1,2,\cdots, T,$ where $T$ denotes the length of the decision epoch horizon. The users in the current batch are represented by $N$ {observable} vectors of features (covariates) $x_{i,t}\in \mathcal{X} \subseteq \mathbb{R}^{d}, i \in [N]$. We assume that feature vectors $x_{i,t}$ may vary across decision epoch and are sampled independently from a fixed, but a priori {unknown} distribution $\mathcal{P}_{X}$ with bounded support $\mathcal{X}$.

The efficacy of treatment $w$ for the i\textit{th} user (feedback) at epoch $t$ has value $y(x_{i,t}, w)$, where the function $y$ is unknown. At each decision epoch $t$, the agent allocates a batch of treatments $\{w(x_{i,t}): i\in [N]\}$, 
and then observes a batch of efficacy $\{y(x_{i,t}, w(x_{i,t})): i \in [N]\}$.
The objective is to design an \textit{allocation policy}, which maps users to their own treatments, that maximizes the cumulative sum of observed efficacy.

We assume that the  efficacy of treatment $w$ for user is a linear function of her covariates $x_{i,t}$, namely
\begin{equation}\label{model}
y(x_{i,t}) = 
\langle \beta_{w},
x_{i,t} \rangle +\epsilon_{(i, t)},
\end{equation}
where $\{\epsilon_{(i,t)}:i\in[N], t \in [T]\}$ are martingale difference noise. At each epoch $t$, $\epsilon_{(i,t)}$'s are drawn independently from a mean zero $\sigma$-sub-Gaussian distribution (that is, $E[\exp(\lambda \epsilon_{(i,s)})]\le \exp(\sigma^2\lambda^2/2)$for all real $\lambda$) and
$\{(\epsilon_{i,t}, \mathcal{F}_{t})\}_{t \in [T]}$  forms a martingale difference sequence (that is, $E[\epsilon_{(i,t)}|\mathcal{F}_{t-1}]=0$ for all $i \in [N]$).
The noise is often introduced to account for the features that are not included in the model.

Efficacy parameters  $\mathbf{B}\equiv\{\beta_{w_k}: w \in \mathcal{W}\}$ are a prior unknown to agent. Therefore, the agent deals with exploration-exploitation trade-off as it needs to choose between learning $\mathbf{B}$ and exploiting what has been learned so far to maximize  treatment efficacy.

Our proposed algorithm exploits the structure (sparsity) of the feature space to improve its performance. For this purpose, let $s_{0, w}:= \|\beta_{w}\|_0$ denote the number of nonzero coordinates of $\beta_{w}$. Define $s_{0}=\max_{w \in \mathcal{W}} s_{0, w} $
and note that $s_{0}$ is a priori unknown to the agent.

\paragraph{Technical assumptions}

For ease of presentation, we assume that $\|x_{i,t}\|_{\infty} \le x_{\max}$, for all $x_{i,t} \in \mathcal{X}$, and $\max_{w \in \mathcal{W}}\|\beta_{w}\|_1 \le b$ for a known constant $b$. 

We denote by $\Omega$ the set of feasible parameters, that is,
\begin{equation}\label{feasible_para_set}
\Omega \equiv \{\beta \in \mathbb{R}^{d}: 
\|\beta\|_{0} \le s_{0}, \|\beta\|_1 \le b\},  
\end{equation}
and we write $\mathbf{B} \subseteq \Omega$ if $\beta \in \Omega$ for all $\beta \in \mathbf{B}$.

To measure the performance of proposed bandit algorithm, we present three technical assumptions.

\begin{assumption}{(Margin Condition)}\label{as:Margin_Condition}
There exists a constant $C_0 >0$ such that for $w_i \neq w_j$ in $\mathcal{W}$, $P(0 < |X^\top\beta_{w_i} - X^\top \beta_{w_j}| \le \kappa)
    \le C_0 \kappa$ for all $\kappa >0$.
\end{assumption}
Assumption \ref{as:Margin_Condition} is referred to the Margin Condition in the classification literature \cite{tsybakov2004optimal} and is introduced in multi-armed linear bandit literature to ensure only a small fraction of features can be drawn near the classification boundary $\{x:x^T(\beta_{w_i}-\beta_{w_j})=0\}$ in which efficacy of both treatments are almost equivalent; see \cite{rusmevichientong2010linearly}, \cite{bastani2020online}.

\begin{assumption}{(Treatment Optimality Condition)} \label{as:TreatOptCond}
There exist some constant $h>0$ and two mutually exclusive sets, denoted as $\mathcal{W}_{\text{opt}}$ and $\mathcal{W}_{\text{sub}}$ with $\mathcal{W}=\mathcal{W}_{\text{opt}}\cup \mathcal{W}_{\text{sub}}$ such that
\vspace{-3mm}
\begin{itemize}[noitemsep]
    \item[(a)] For each treatment $w_i$ in $\mathcal{W}_{\text{sub}}$, it holds for every covariate vector $x \in \mathcal{X}$ that
    \begin{equation}
    \langle \beta_{w_i},
    x\rangle< \max_{w \in \mathcal{W}\setminus\{w_{i}\}} \langle \beta_{w},
    x\rangle - h.
    \end{equation}
\vspace{-5mm}
    \item[(b)] For each treatment $w_i$ in $\mathcal{W}_{\text{opt}}$, there exists a constant $p_* > 0$ such that
    \begin{equation}
        \min_{w_i \in \mathcal{W}_{\text{opt}}}P(X \in U_{w_i}) \ge p_*,
    \end{equation}
\vspace{-3mm}
where 
    $$U_{w_i} \equiv \{x \in \mathcal{X}| 
    \langle \beta_{w_i},
    x\rangle > \max_{w \in \mathcal{W}\setminus\{w_{i}\}} 
    \langle \beta_{w},
    x\rangle+h
    \}.$$
\end{itemize}
\vspace{-3mm}
\end{assumption}

Assumption \ref{as:TreatOptCond} is referred to the Treatment Optimality Condition in \cite{rusmevichientong2010linearly}, \cite{bastani2020online} and is to separate available treatments into an optimal subset $\mathcal{K}_{opt}$ and a sub-optimal subset $\mathcal{K}_{sub}$ such that every optimal treatment $w \in \mathcal{K}_{opt}$ is strictly optimal for \textit{some} users (denoted by the set $U_{w}$) and every sub-optimal treatment is strictly sub-optimal for \textit{every} users.

\begin{assumption}
{(Compatibility Condition)}\label{as:CompatabilityCondi}
There exists a constant $\phi_0>0$ such that for each optimal treatment $w \in \mathcal{W}_{opt}$, the population covariance matrix $\Sigma_{w}\equiv E[XX^\top| X \in U_{w}]$ belongs to the compatibility set of its treatment efficacy parameter $\beta_{w}$, that is, 
\begin{equation*}
    \Sigma_{w}
    \in
    \mathcal{C}( \text{supp}(\beta_{w_i}), \phi_0),
\end{equation*}
where $\mathcal{C}(I, \phi)$ is defined as
\begin{equation*}
\begin{aligned}
\mathcal{C}(I, \phi)
    &\equiv
    \{
    M \in \mathbb{R}^{p\times p}_{\succeq 0}
    |\forall v \in \mathbb{R}^{p} \text{ such that} \\
\|v_{I^{c}}\|_1 \le &3\|v_{I}\|_1,
\text{ we have} \|v_{I}\|^2_1\le|I| \frac{v^\top M v}{\phi^2}\}. 
\end{aligned} 
\end{equation*}
\end{assumption}
Assumption \ref{as:CompatabilityCondi} is referred to as the Compatibility Condition in high-dimensional statistics literature \cite{buhlmann2011statistics} and is to ensure that LASSO estimate trained on samples $X \in U_{w}$ converges to the true parameter $\beta_{w}$ with high probability as the number of samples grows to infinity.

\textbf{Oracle Policy and Performance Metric}
We evaluate the performance of our algorithm using the usual notion of regret: the expected treatment efficacy loss compared with the oracle allocation policy that has full knowledge of $\mathbf{B}$, but not of the realizations of noise $\{\epsilon_{(i,t)}: i\in[N], t\in[T]\}$). Let us first characterize this benchmark policy. Based on the model (\ref{model}), the optimal treatment allocation, denoted by $w^{*}$, is defined as
\begin{equation}
    w^{*}(x_{i,t}) = \arg\max_{w \in \mathcal{W}}\{\langle \beta_{w},
x_{i,t} \rangle \}.
\end{equation}
Throughout the paper, $w_{i,t}^{*}$  denotes the optimal treatment allocation for the $i$\textit{th} user at epoch $t$.

We now define regret of a policy. Let $\pi$ be the agent's policy that allocates the treatment $w_{i,t}$ to user $x_{i,t}$, and the choice of $w_{i,t}$ may depend on the information up to decision epoch $t-1$. The worst-case regret is defined as:
\begin{equation*}
\begin{aligned}
&\text{Regret}_{\pi}(T)
\equiv\\
&\max_{\mathbf{B} \in \Omega, \mathbb{P}_{X} \in Q(\mathcal{X})}
E[
\sum_{t=1}^{T}\sum_{i=1}^{N}
(\langle \beta_{w^{*}_{i,t}},
x_{i,t} \rangle 
-
\langle \beta_{w_{i,t}},
x_{i,t} \rangle)
],
\end{aligned}
\end{equation*}
where the expectation is with respect to the distribution of martingale difference noise, $\epsilon_{i,t}$, and the distribution of feature vector, $\mathbb{P}_{X}$. The notation $Q(\mathcal{X})$ denotes a set of probability distributions with bounded support in $\mathcal{X}$. We want to point out that $\text{Regret}_{\pi}(T)$ corresponds to the usual cumulative expected regret in online learning when $N=1$ and the empirical expected estimation error in batch learning when $T=1$.

\textbf{Rate of Optimal Allocation  Condition}
Suppose the agent has collected a sample set $\mathcal{A}_{w} \equiv \{(x_{(i,t)}, y_{(i,t)})\}_{i=1,2,\cdots ; t = 1,2,\cdots}$ for treatment $w$. The set $\mathcal{A}_{w}$ consists of both optimal and sub-optimal allocation. The optimal allocation subsample set is defined as
\begin{equation}
\mathcal{A}^{\sharp}_{w}\equiv \{(\mathbf{X}_{(i,t)}, Y_{(i,t)})\in \mathcal{A}_{w}| \mathbf{X}_{(i,t)} \in  U_{w}\}.
\end{equation}

Intuitively, bigger size of $\mathcal{A}^{\sharp}_{w}$ improves the accuracy of statistical procedure, which is the prize of achieving optimal allocation. On the contrary, bigger size of $\mathcal{A}_{w}$ undermines
the accuracy of statistical procedure, which is
the price of misallocating users to any
suboptimal treatment. The ratio $|\mathcal{A}^{\sharp}_{w}|/|\mathcal{A}_{w}|$
matters and is termed as the \textit{rate of optimal allocation} of sample set $\mathcal{A}_{w}$.

To gain a deeper insight of such balance into high-dimensional setting, consider the LASSO regression:
\begin{equation}\label{optimal allocationLasso}
\widehat{\beta }_{w}(\mathcal{A}_{w}, \lambda)
\equiv\arg\min_{\beta}\left\{ \frac{\|Y-X\beta\|^2_2}{|\mathcal{A}_{w}|}
+\lambda \|\beta\|_1\right\},
\end{equation}
where $Y$ is a $|\mathcal{A}_{w}|$-dimension response vector and $X$ is a $|\mathcal{A}_{w}| \times d$ covariate matrix.

Note that, 
if $\hat{\Sigma}(\mathcal{A}^{\sharp}_{w})$ satisfies the compatability condition with constant $\phi$, then $\hat{\Sigma}(\mathcal{A}_{w})$ satisfies the compatability condition with constant $\phi \sqrt{|\mathcal{A}^{\sharp}_{w}|/|\mathcal{A}_{w}|}$. The above observation suggests that certain balance between $|\mathcal{A}^{\sharp}_{w_k}|$ and $|\mathcal{A}_{w}|$ should be stricken during the decision process. We make this intuition precise for sample sets characterization by introducing the \textit{optimal allocation rate condition}:
\begin{definition}\label{templateCond}
A sample set $\mathcal{A}_{w_k}$ satisfies the \textbf{rate $\mathbf{r}$ optimal allocation condition}, if it satisfies both conditions
\begin{enumerate}[noitemsep]
\item[(i)] Size of sample set: $|\mathcal{A}_{w}| \ge \frac{6 \log d}{r C_2(\phi_1)^2}$
\item[(ii)] Rate of Optimal Allocation :$\frac{|\mathcal{A}^{\sharp}_{w}| }{|\mathcal{A}_{w}| }\ge \frac{r}{2}$.
\end{enumerate}
\end{definition}
We briefly call $\mathcal{A}_{w}$ a \textbf{rate r optimal allocation sample set} if $\mathcal{A}_{w}$ satisfies a optimal allocation condition of certain rate $r$. Note that the optimal allocation subsample set $\mathcal{A}^{\sharp}_{w}$ is not directly observable. The rate $r$ optimal allocation condition is to lower 
bound the size of 
$\mathcal{A}^{\sharp}_{w}$ to ensure enough accuracy of LASSO estimator.

Our first contribution is a deviation inequality for LASSO regression based on the sample set collected in online batch decision making setting:
\begin{theorem}{(\textbf{Deviation inequality for batch-adapted LASSO})}\label{MainOra}
Given a rate r optimal allocation sample set $\mathcal{A}_{w_k}$ follows the dependence structure of online batch decision making problem. If $\lambda = \chi \phi^2/4s_0,$
then for any $\chi > 0$, the oracle inequality holds that
\begin{equation}
\begin{aligned}
&&P(\|\widehat{\beta}(\mathcal{A}_{w_k}, \lambda) - \beta_{\omega_{k}}\|_1>\chi)\\
&\le&
2
\exp[
-C_1(\frac{\phi_1 \sqrt{r}}{2})
|\mathcal{A}_{w_k}|\chi^2
+\log d
]\\
&+&
\exp[
-|\mathcal{A}_{w_k}^{\sharp}|C_2(\phi_1)^2
].
\end{aligned}
\end{equation}
\end{theorem}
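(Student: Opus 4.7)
The plan is to follow the standard LASSO oracle argument and reduce the claim to controlling two favorable events, then bound each probability using, respectively, the structure of the rate-$r$ optimal allocation sample and a martingale concentration inequality that accommodates the sequential batch design. Write $n=|\mathcal{A}_{w_k}|$ and let $X,Y,\epsilon$ denote the aggregated design matrix, response, and noise. Define $\mathcal{E}_1 = \{\hat\Sigma(\mathcal{A}_{w_k}) \in \mathcal{C}(\text{supp}(\beta_{w_k}), \phi_1\sqrt{r}/2)\}$ and $\mathcal{E}_2 = \{\|X^\top\epsilon/n\|_\infty \le \lambda/2\}$. On $\mathcal{E}_1\cap\mathcal{E}_2$ the basic inequality $\|Y-X\hat\beta\|_2^2/n + \lambda\|\hat\beta\|_1 \le \|Y-X\beta_{w_k}\|_2^2/n + \lambda\|\beta_{w_k}\|_1$ combined with compatibility gives $\|\hat\beta-\beta_{w_k}\|_1 \le 4\lambda s_0/\phi^2$ for $\phi = \phi_1\sqrt{r}/2$, and the choice $\lambda = \chi\phi^2/(4s_0)$ collapses this to $\|\hat\beta-\beta_{w_k}\|_1 \le \chi$. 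It remains to bound $P(\mathcal{E}_1^c)$ and $P(\mathcal{E}_2^c)$.

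For $\mathcal{E}_1$, I would invoke the transfer property already flagged in the excerpt: since $v^\top \hat\Sigma(\mathcal{A}_{w_k}) v \ge (|\mathcal{A}_{w_k}^\sharp|/|\mathcal{A}_{w_k}|)\, v^\top \hat\Sigma(\mathcal{A}_{w_k}^\sharp) v$, compatibility of $\hat\Sigma(\mathcal{A}_{w_k}^\sharp)$ with constant $\phi_1$ immediately upgrades to compatibility of $\hat\Sigma(\mathcal{A}_{w_k})$ with constant $\phi_1\sqrt{|\mathcal{A}_{w_k}^\sharp|/|\mathcal{A}_{w_k}|}$, which is at least $\phi_1\sqrt{r}/2$ by Definition \ref{templateCond}(ii). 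So it suffices to bound the probability that compatibility fails on the optimally-allocated subsample. Because the conditional population covariance $\Sigma_{w_k}$ satisfies Assumption \ref{as:CompatabilityCondi}, a Hoeffding-type bound on $\|\hat\Sigma(\mathcal{A}_{w_k}^\sharp) - \Sigma_{w_k}\|_\infty$ together with a standard compatibility-perturbation lemma (in the spirit of \cite{buhlmann2011statistics}) yields the second tail $\exp(-|\mathcal{A}_{w_k}^\sharp| C_2(\phi_1)^2)$.

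For $\mathcal{E}_2$ I would union-bound over the $d$ coordinates of $X^\top\epsilon/n$. For each $j \in [d]$, the partial sums $M_j(t) = \sum_{(i,s):\,s\le t} X_{i,s,j}\,\epsilon_{(i,s)}$ form a martingale with respect to $\{\mathcal{F}_t\}$: $X_{i,s,j}$ is $\mathcal{F}_{s-1}$-measurable and bounded by $x_{\max}$, while $\epsilon_{(i,s)}\mid\mathcal{F}_{s-1}$ is mean-zero $\sigma$-sub-Gaussian, so the increments are conditionally sub-Gaussian with parameter $\sigma x_{\max}$. An Azuma--Hoeffding-type bound then gives $P(|M_j(T)/n| > \lambda/2) \le 2\exp(-c\,n\lambda^2/(\sigma x_{\max})^2)$; plugging in $\lambda = \chi\phi_1^2 r/(16 s_0)$ and union-bounding over $j$ produces $2\exp(-C_1(\phi_1\sqrt{r}/2)\,n\chi^2 + \log d)$, with $C_1$ absorbing $s_0$, $\sigma$, $x_{\max}$, and numerical constants. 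Summing the two tails yields the stated inequality.

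The main obstacle is the martingale step in the presence of batch-adaptive allocation: the event that epoch $(i,t)$ contributes to $\mathcal{A}_{w_k}$ (i.e.\ that $w_{i,t}=w_k$) is $\mathcal{F}_{t-1}$-measurable because it depends on LASSO estimators built from all past batches, so one cannot treat the noise sum as an i.i.d.\ average. Handling this correctly requires expressing $X^\top \epsilon$ as a sum over the random index set $\mathcal{A}_{w_k}$ and verifying that the filtration is rich enough to support Azuma--Hoeffding on $M_j$ — this is exactly what the ``batch-adapted'' qualifier captures and is what distinguishes the theorem from the off-line LASSO oracle inequality.
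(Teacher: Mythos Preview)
Your proposal is correct and mirrors the paper's proof: the same two-event decomposition (noise event $\mathcal{F}(\lambda_0)$ handled by a sub-Gaussian martingale MGF bound in Lemma~\ref{EPbatch}, compatibility event handled by the transfer $\hat\Sigma(\mathcal{A}_{w_k}^\sharp)\to\hat\Sigma(\mathcal{A}_{w_k})$ plus a Hoeffding/perturbation argument in Lemma~\ref{SamCompa}), followed by solving for $\gamma$ under $\lambda\ge 2\lambda_0(\gamma)$ with $\lambda=\chi\phi^2/4s_0$. The only slip is that the freshly drawn covariates $X_{i,s}$ are not $\mathcal{F}_{s-1}$-measurable as you assert; the paper fixes this by first conditioning on the enlarged $\sigma$-field $\mathcal{G}_s=\mathcal{F}_s\cup\{X_{(s,i)}\}_{i}$ before applying sub-Gaussianity of the noise, a cosmetic adjustment to your martingale step.
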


Theorem \ref{MainOra} is a general version of LASSO deviation inequality for adapted observations
(see Proposition 1 in \cite{bastani2020online}). Our contribution is to extend the deviation inequality
for
adapted
sequence of batch observations with martingale difference noise.

\section{Teamwork LASSO Bandit Algorithm}
\vspace{-3mm}
In this section, we propose the Teamwork LASSO Bandit algorithm that runs in an interactive fashion: the agent switches between \textit{Teamwork} mode (for pure exploration) and \textit{Selfish} mode (for pure exploitation). The collection of decision epochs that the agent runs in teamwork mode and selfish mode are called teamwork stage and selfish stage, respectively.
In the former stage, the agent allocates the current batch of users to a prescribed and possibly sub-optimal treatment for studying treatment efficacy. In the latter stage, the agent allocates each user in the current batch to her estimated optimal treatment for maximizing treatment efficacy. Epochs in teamwork and selfish stages are collected into sets $\mathbb{T}$ and $\mathbb{T}^{c}(\equiv  [T] \setminus \mathbb{T})$, respectively. 
\vspace{-3mm}
\subsection{Allocation in Teamwork mode}
\vspace{-2mm}
The teamwork stage $\mathbb{T}$ consists of exploration on all available treatments; that is, 
$\mathbb{T} = \cup_{w_{k}\in\mathcal{W}}\mathbb{T}_{\cdot, w_k},$
where $\mathbb{T}_{\cdot, w_k}$
denotes the prescribed decision epochs for studying efficacy of treatment $w_k$.
We defer the exact specification of $\mathbb{T}_{\cdot, w_k}$ to Section \ref{TeamworkPlanning}.

Our treatment allocation policy now runs in Teamwork mode that, in any decision epoch $t \in \mathbb{T}_{\cdot, w_k}$, the agent allocates the whole batch of users to the  treatment $w_k$:  
\begin{equation*}\label{policy:teamwork}
\pi(x_{i,t}) \equiv w_k
\text{   if   } t \in \mathbb{T}_{\cdot, w_k}~~(\text{Teamwork Stage}),
\end{equation*}
for all  $i \in [N]$. Such an allocation results in a batch of covariate-efficacy pairs, $\mathcal{T}_{t, w_k}\equiv \{(x_{i,t}, y(x_{i,t}, w_k)): i \in [N]\}$, which is aggregated into the {teamwork sample set up to epoch t} $\mathcal{T}_{[t], w_k} \equiv \mathcal{T}_{[t-1], w_k}\cup \mathcal{T}_{t, w_k}$.

\vspace{-3mm}

\subsection{Allocation in Selfish mode}
\vspace{-2mm}

The selfish stage $\mathbb{T}^{c}$
deploys exploitation by allocating the current best-possible treatment to each user in the current batch. This is done by a two-step procedure: estimating candidates of personal optimal treatment and then committing selfish allocation. 
Such an allocation results in another batch of covariate-efficacy pairs, $\{(x_{i,t}, y(x_{i,t}, w_{i,t})): i \in [N]\}$. Then, for each treatment $w_k$, the set $\mathcal{E}_{t, w_k}\equiv \{(x_{i,t}, y(x_{i,t}, w_{i,t})): i \in [N], w_{i,t}=w_{k}\}$ is aggregated into the selfish sample set up to epoch t  $\mathcal{E}_{[t],w_k}\equiv \mathcal{E}_{[t-1],w_k}\cup \mathcal{E}_{t,w_k}$.

Here we outline the above two-step procedure whose details are deferred to Section \ref{Selfish2step}. At a selfish decision epoch $t \in \mathbb{T}^{c}$, the agent first estimates $\beta_{w}$ by using LASSO regression based on teamwork sample set $\mathcal{T}_{[t], w}$ and the resulting estimator $\widehat{\beta}_{w}(\mathcal{T}_{[t], w})$
is called Teamwork LASSO. The Teamwork LASSO screens out the sub-optimal treatments for each user $x_{i,t}$ in current batch and then outputs a set of optimal treatment candidates $\hat{\mathcal{K}}(x_{i,t})$(See Eq.\eqref{OptCandidate} for detailed description). 

The agent then estimates $\beta_{w}$ by using LASSO regression based on full sample set $(\mathcal{T} \cup \mathcal{E})_{[t], w}$ and the resulting estimator $\widehat{\beta}_{w}((\mathcal{T}\cup \mathcal{E})_{[t], w})$
is called All LASSO. The All LASSO then  selects the  treatment with highest expected efficacy for the user $x_{i,t}$:
\begin{equation*}\label{policy:selfish}
\begin{aligned}
\pi(x_{i,t}) &\equiv& \arg\max_{w_k\in\widehat{\mathcal{K}}(x_{i,t})}
\langle 
\widehat{\beta}_{w_k}((\mathcal{T}\cup \mathcal{E})_{[t], w})
,
x_{i,t}
\rangle
\\
&\text{   if   }& t 
\notin 
\cup_{w_k \in \mathcal{W}} \mathbb{T}_{\cdot, w_k}
~~(\text{Selfish Stage}).
\end{aligned}
\end{equation*}

\begin{algorithm}[t]
  \caption{\texttt{TeamworkLASSOBandit}$(q, h)$}
  \label{alg:blockstab}
  Given parameters $q$, $h$, $\lambda_1$, $\lambda_{2,0}$
  
  Initialize $\mathcal{T}_{[0],t}, \mathcal{E}_{[0],t}$
  
  \For{$t \in \{1,\ldots, T\}$} {
    \For{$w \in \mathcal{W}$}{
        $\mathcal{T}_{[t], w}=\mathcal{T}_{[t-1], w}$
        ;
        $\mathcal{E}_{[t], w}=\mathcal{E}_{[t-1], w}$
     }
    \eIf{$t \in \mathbb{T}$
    }{
     
    \If{$t \in \mathbb{T}_{\cdot, w}$}{
    
    \For{$i \in \{1,\ldots, N\}$}{
    Observe $x_{i,t}$
    
    Allocate $w_{i,t} = w$
    
    Observe $y(x_{i,t}, w)$
    
    $\mathcal{T}_{[t], w}=\mathcal{T}_{[t], w}\cup \{(x_{i,t},y(x_{i,t}, w))\}$
    
        }
    
    }
    }{
    \For{$w \in \mathcal{W}$}{
    Compute  $\widehat{\beta}_{w}(\mathcal{T})=\widehat{\beta}_{w}(\mathcal{T}_{[t-1], w})$
    
    Compute 
    $\widehat{\beta}_{w}(\mathcal{S})=\widehat{\beta}_{w}((\mathcal{T}\cup \mathcal{E})_{[t-1], w})$
    }
    \For{$i \in \{1,\ldots, N\}$} {
      Observe $x_{i,t}$
      
      Select $\widehat{K}(x_{i,t})
      =\{w|
      \langle x_{i,t},
      \widehat{\beta}_{w}(\mathcal{T}) \rangle 
      \ge 
      \max_{\tilde{w} \in \mathcal{K} \setminus \{w\}}
      \langle x_{i,t},
      \widehat{\beta}_{\tilde{w}}(\mathcal{T}) \rangle 
      -\frac{h}{2}
      \}$
      
      Allocate $w_{i,t}
      = \arg\max_{w \in \widehat{K}(x_{i,t})}
      \langle x_{i,t},
      \widehat{\beta}_{w}(\mathcal{S}) \rangle 
      $
      
      Observe $y(x_{i,t}, w_{i,t})$
      
      $\mathcal{E}_{[t], w_{i,t}}=\mathcal{E}_{[t], w_{i,t}}\cup \{(x_{i,t},y(x_{i,t}, w_{i,t}))\}$
    }
    }
  }
\end{algorithm}
\vspace{-3mm}

Observe that by design of our interactive policy (see Figure \ref{TSpolicy_flowchart}), the agent maintains for each treatment $w$ two different sample sets (teamwork and greedy). The independence in teamwork sample set is preserved by the pure exploration nature of teamwork stage, facilitating the subsequent analysis of LASSO estimate. However, the full sample set mixes independence from teamwork stage with the dependency arising in the selfish stage, complicating the subsequent analysis of LASSO estimate. Tackling such dependency requires a closer look at every epoch of decision making process.

\section{A Teamwork-Selfish Policy}\label{sec:TSpolicy}
\vspace{-3mm}

\subsection{Planing Teamwork Stage}\label{TeamworkPlanning}
\vspace{-2mm}

One factor contributes to 
the success of our Teamwork-Selfish policy is the design of $\mathbb{T}_{[t], w_k}$ in teamwork mode (See eq.\eqref{policy:teamwork}). Here we explain our design of $\mathbb{T}_{[t], w_k}$ and reveal the quantity $q$ that characterized the complexity of online batch decision making problem.

Recall that the collection of teamwork decision epochs up to epoch t is the set $\mathbb{T}_{[t],w_k}$.
Such set is a truncation to epoch range $[t]$ of the collection of treatment $w_k$ teamwork decision epoch $\mathbb{T}_{\cdot,w_k},$ which is developed by the \textit{teamwork rounds} $\{\mathbb{T}_{n,w_k}\}_{n=1}^{\infty}$.

The \textit{$n$th teamwork round for treatment $w_k$} $\mathbb{T}_{n, w_k}$ 
is defined as a prescribed range of decision epochs
\begin{equation}
    \mathbb{T}_{n, w_k} \equiv \{(2^n-1)\times Kq +j| j \in [q(k-1)+1,qk]\}.
\end{equation}
(Refer to the blue block in Figure.\ref{online_batch_decision_making}) Observe that each teamwork round prescribes $q$ decision epochs for treatment pure exploration. The magnitude $q$ characterizes the complexity of a online batch decision problem.

Our theoretical results suggests a lower bound of $q$ by $4 \lceil q_0 \rceil$(See Eq.\eqref{q_zero} for $q_0$). Such lower bound ensures both teamwork sample set 
and full sample set 
collected from the Teamwork-Selfish policy 
to satisfy the rate of optimal allocation  condition (Definition \ref{templateCond}) with high probability. Consequently, the Teamwork LASSO 
and ALL LASSO 
enjoy their statistical properties and the regret bound (\ref{RegretBound}) can be guaranteed.

Remark that our lower bound on $q$ is proportional to $1/N$,  where $N$ is the batch size . Compare to the full adaptive setting in \cite{bastani2020online}, we have $q_{\text{batch}} \approx q_{\text{non-batch}}/N$. In terms of update frequency, LASSO Bandit requires $Kq_{\text{non-batch}}(NT-log NT)$, while \texttt{Teamwork LASSO Bandit} requires $K(q_{\text{non-batch}}/N)(T-log T)$. In terms of regret, LASSO Bandit is of rate $K[log(NT)]^2$, while \texttt{Teamwork LASSO Bandit} is of rate $ KN(logT)^2$. Thus, there is a trade off between regret and update frequency.

Our design of teamwork stage $\mathbb{T}_{\cdot, w}$ is a generalization of the forced sampling  for the two-arm and non-batch setting in \cite{rusmevichientong2010linearly} and  for the $K$-arm and non-batch setting in \cite{bastani2020online}. 
Our contribution is to redesign such type of policy when batch is the fundamental sampling unit to accommodate the restrictions from limited adaptivity.

\begin{figure}
    \centering
    \includegraphics[width=0.5\textwidth]{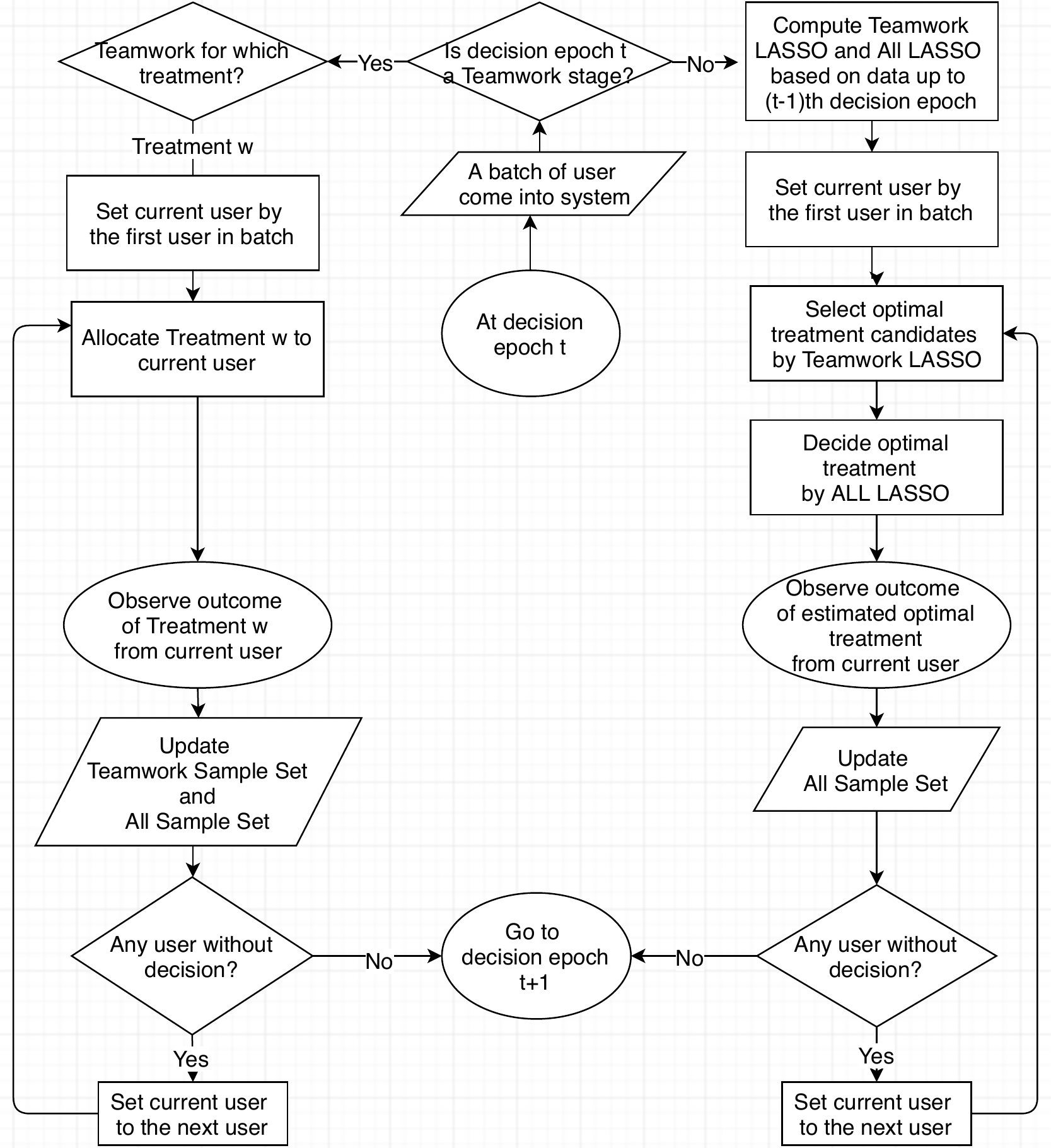}
    \caption{Teamwork-Selfish policy.}
    \label{TSpolicy_flowchart}
\vspace{-4mm}
\end{figure}

\vspace{-3mm}

\subsection{Two Step Procedure in Selfish Stage}\label{Selfish2step}
\vspace{-2mm}

Another factor contributing to 
the success of our Teamwork-Selfish policy is the two step procedure to commit selfish allocation.
In a selfish decision epoch $t$, the agent first estimates every treatment's efficacy based on information up to decision epoch $(t-1)$ by Teamwork LASSO $\widehat{\beta}_{w}(\mathcal{T}_{[t-1], w}, \lambda_1)$
to separate optimal treatments from suboptimal treatments.
Then, the agent estimates candidate treatments' efficacy by ALL LASSO $\widehat{\beta}_{w}((\mathcal{T}\cup 
\mathcal{E})_{[t-1], w}
, \lambda_{2,t-1})$ to identify the true optimal treatment for current user. In particular, the success of step 1 relies on a "good event`` $E_{t}$, defied as
\begin{equation*}\label{goodEvent}
E_{t} \equiv \cap_{w_k \in \mathcal{W}} \{\|\widehat{\beta}_{w_k}(\mathcal{T}_{[t],k},\lambda_1) - \beta_{w_k}\|_1 \le \frac{h}{4 x_{\max}} \},
\end{equation*}
which marks the case that every Teamwork LASSO are accurate enough to screen out suboptimal treatments. 

Now we give a high-probability statement for $E_{t-1}$ to access the performance of Step 1.
\begin{lemma}\label{eventE}
For $t \ge (Kq)^2$, 
    $P(E_{t}) \ge 1-5K/t^4$.
\end{lemma}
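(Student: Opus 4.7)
The plan is to reduce $E_t^c$ to per-arm events via a union bound and then invoke Theorem \ref{MainOra} on each. Let $B_k \equiv \{\|\widehat{\beta}_{w_k}(\mathcal{T}_{[t], w_k}, \lambda_1) - \beta_{w_k}\|_1 > h/(4 x_{\max})\}$ denote the single-arm bad event, so that $E_t^c = \bigcup_{k \in [K]} B_k$. It then suffices to prove $P(B_k) \le 5/t^4$ for each $k \in [K]$; summing yields the claimed bound $P(E_t^c) \le 5K/t^4$.

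For a fixed arm $w_k$, the bound on $P(B_k)$ comes in three substeps. First, I would count teamwork samples: the $n$th teamwork round for $w_k$ ends at epoch $(2^n - 1)Kq + qk \le 2^n K q$, so the hypothesis $t \ge (Kq)^2$ forces at least $\lfloor \log_2(t/(Kq)) \rfloor$ completed rounds and hence $|\mathcal{T}_{[t], w_k}| \ge Nq \lfloor \log_2(t/(Kq)) \rfloor$. Second, I would verify the rate-of-optimal-allocation condition of Definition \ref{templateCond}: since teamwork allocates to $w_k$ regardless of the covariate, the elements of $\mathcal{T}_{[t], w_k}$ are i.i.d.\ draws from $\mathcal{P}_X$, and Assumption \ref{as:TreatOptCond}(b) gives $P(X \in U_{w_k}) \ge p_*$ for optimal arms. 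A one-sided Chernoff bound then ensures $|\mathcal{T}^{\sharp}_{[t], w_k}|/|\mathcal{T}_{[t], w_k}| \ge p_*/2$ with failure probability exponentially small in $|\mathcal{T}_{[t], w_k}|$, which the logarithmic sample growth from the first substep drives below $1/t^4$; the size prerequisite $|\mathcal{T}_{[t], w_k}| \ge 6 \log d / (p_* C_2(\phi_1)^2)$ is met for the same reason. Third, on the event that the rate-$p_*$ condition holds, I would apply Theorem \ref{MainOra} with $\chi = h/(4 x_{\max})$ and $r = p_*$. Each of the two terms in the oracle inequality is of the form $\exp(-c\,|\mathcal{T}_{[t],w_k}| + \log d)$ or $\exp(-c\,|\mathcal{T}^{\sharp}_{[t],w_k}|)$, which the lower bound $q \ge 4\lceil q_0 \rceil$ from Section \ref{sec:TSpolicy} forces below $2/t^4$ and $1/t^4$ respectively; combining with the Chernoff tail from the second substep gives $P(B_k) \le 5/t^4$.

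The main obstacle I anticipate concerns suboptimal arms $w_k \in \mathcal{W}_{\text{sub}}$, for which Assumption \ref{as:TreatOptCond} makes $U_{w_k}$ empty, so the conditional-covariance compatibility of Assumption \ref{as:CompatabilityCondi} is not literally in force. The natural workaround is either (i) to invoke a compatibility hypothesis on the unconditional covariance $E[XX^\top]$, which is all that governs the teamwork samples of a suboptimal arm (since teamwork ignores $x$), or (ii) to introduce a surrogate $U_{w_k}$ as any positive-probability region lying inside the suboptimal margin $\{x : \langle \beta_{w_k}, x\rangle < \max_{w \neq w_k}\langle \beta_w, x\rangle - h\}$ and apply Assumption \ref{as:CompatabilityCondi} there. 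A secondary bookkeeping task is calibrating $\lambda_1$ and the hidden constants so that the three failure probabilities sum to exactly $5/t^4$ rather than a larger multiple, which amounts to checking that the threshold $4 \lceil q_0 \rceil$ is large enough to absorb the $\log d$ penalty in the first oracle-inequality term.
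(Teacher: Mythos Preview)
Your approach is essentially identical to the paper's: the paper states that Lemma~\ref{eventE} is an immediate consequence of Corollary~\ref{TeamworkOracle} via a union bound over the $K$ arms, and the proof of Corollary~\ref{TeamworkOracle} is exactly your three substeps (size bound on $|\mathcal{T}_{[t],w_k}|$ via Lemma~\ref{TeamSize}, Chernoff for the rate-$p_*$ condition via Lemma~\ref{team2condi}/\ref{TeamTemplate}, then Theorem~\ref{MainOra} with $\chi=h/(4x_{\max})$ and $r=p_*$). Your anticipated obstacle about suboptimal arms is real and the paper does not address it either: Corollary~\ref{TeamworkOracle} is asserted ``for all treatments $w_k\in\mathcal{W}$'' while Assumptions~\ref{as:TreatOptCond}(b) and~\ref{as:CompatabilityCondi} are stated only for $w\in\mathcal{W}_{\text{opt}}$, so your workaround~(i) (compatibility of the unconditional covariance $E[XX^\top]$, which is what governs teamwork samples regardless of arm) is the natural implicit reading.
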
 
Lemma \ref{eventE} is 
an immediate consequence of Corollary \ref{TeamworkOracle} 

\begin{corollary}{(Deviation inequality for Teamwork LASSO)}\label{TeamworkOracle} For all treatments $w_k \in \mathcal{W}$, if $t \ge (Kq)^2$ for $q \ge 4\lceil q_0 \rceil$, the Teamwork LASSO estimator satisfies the deviation inequality (set $\lambda_1 = \frac{\phi_0^2p_{*}h}{64s_0x_{\max}}$)
\begin{equation*}
    P(\|
    \hat{\beta}_{w_k}(\mathcal{T}_{[t],w_k}, \lambda_1)
    -
    \beta_{w_k}\|_1 
    > 
    \frac{h}{4x_{\max}})
    \le \frac{5}{t^4},
\end{equation*}
where 
\begin{equation}\label{q_zero}
q_{ 0 } \asymp \frac{1}{N}\max\left\{\frac{\log d}{p_{*}} , \frac{x_{ \max }^{ 2 } \log d }{ h ^ { 2 } p _ { * }^ { 2 }} \right\}
\end{equation}
\end{corollary}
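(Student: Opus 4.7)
The plan is to derive Corollary \ref{TeamworkOracle} as a direct consequence of Theorem \ref{MainOra} applied to the teamwork sample set $\mathcal{T}_{[t], w_k}$, after first verifying that this set is a rate-$r$ optimal allocation sample set (with $r \asymp p_*$) with high probability. The argument has three moving parts: (a) a sample-counting step to lower bound $|\mathcal{T}_{[t],w_k}|$ from the definition of the teamwork schedule, (b) a Chernoff step to lower bound $|\mathcal{T}^\sharp_{[t],w_k}|$, and (c) a tuning step where $q_0$ is chosen so that all error probabilities drop to $O(1/t^4)$.

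First I would count the teamwork samples. Since the $n$th teamwork round for $w_k$ ends at epoch $(2^n-1)Kq + qk \le 2^n Kq$, the condition $t \ge (Kq)^2$ guarantees that at least $\lfloor \log_2(Kq) \rfloor$ full rounds have elapsed, so $|\mathcal{T}_{[t],w_k}| \ge Nq\lfloor\log_2(Kq)\rfloor \gtrsim Nq \log t$. Next, because teamwork allocations are made without inspecting covariates, the $|\mathcal{T}_{[t],w_k}|$ covariates are i.i.d.\ draws from $\mathcal{P}_X$, and each lies in $U_{w_k}$ with probability at least $p_*$ by Assumption \ref{as:TreatOptCond}(b). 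A multiplicative Chernoff bound for a sum of Bernoullis therefore gives
\begin{equation*}
P\bigl(|\mathcal{T}^\sharp_{[t],w_k}| < \tfrac{p_*}{2}|\mathcal{T}_{[t],w_k}|\bigr) \le \exp\bigl(-p_* |\mathcal{T}_{[t],w_k}|/8\bigr),
\end{equation*}
which, by the $Nq \log t$ lower bound, is at most $1/t^4$ once $q \ge 4\lceil q_0\rceil$ with $q_0$ meeting the $\log d/(Np_*)$ branch of the stated maximum.

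Conditioned on this good event, $\mathcal{T}_{[t],w_k}$ satisfies the rate-$p_*$ optimal allocation condition of Definition \ref{templateCond}: the size clause follows from $|\mathcal{T}_{[t],w_k}| \gtrsim Nq \log t$ combined with the other branch of $q_0$, and the ratio clause holds by construction of the event. I would then apply Theorem \ref{MainOra} with $\chi = h/(4 x_{\max})$, which forces $\lambda = \chi \phi^2/(4 s_0)$ to match $\lambda_1 = \phi_0^2 p_* h/(64 s_0 x_{\max})$ under the identification $\phi = \phi_0 \sqrt{p_*}/2$ (the compatibility constant inherited by the unconditional covariance through the $\sqrt{|\mathcal{A}^\sharp|/|\mathcal{A}|}$ downgrade discussed after \eqref{optimal allocationLasso}). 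The first exponential in Theorem \ref{MainOra} becomes $\exp\bigl(-C_1(\phi_0\sqrt{p_*}/2)\,|\mathcal{T}_{[t],w_k}|\,(h/4x_{\max})^2 + \log d\bigr)$, and the $x_{\max}^2 \log d/(h^2 p_*^2)$ branch of $q_0$ is precisely what is required to drive this below $1/t^4$; the second exponential $\exp(-|\mathcal{T}^\sharp_{[t],w_k}| C_2(\phi_1)^2)$ falls below $1/t^4$ by the same calculation on the good event. Summing the three failure probabilities gives at most $5/t^4$ (the paper's constant has slack to absorb the Chernoff term and any small factors picked up in matching constants).

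The main obstacle will be the constant-matching in step four: carefully tracking how the compatibility constant $\phi_1$ for the teamwork design relates to the assumed $\phi_0$ for the conditional covariance $E[XX^\top \mid X \in U_{w_k}]$, and simultaneously tuning $q_0$ so that both exponential terms in Theorem \ref{MainOra} and the Chernoff bound each decay at least as fast as $t^{-4}$. The two branches in the $\max$ defining $q_0$ correspond cleanly to these two demands — the $\log d/(Np_*)$ branch drives the Chernoff/second-exponential tail, and the $x_{\max}^2 \log d/(Nh^2 p_*^2)$ branch drives the first exponential through the $\chi^2 = (h/4x_{\max})^2$ factor — so once this dictionary is set up, the numeric bookkeeping follows mechanically.
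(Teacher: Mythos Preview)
Your approach is essentially identical to the paper's: verify that $\mathcal{T}_{[t],w_k}$ is a rate-$p_*$ optimal allocation sample set via a size lower bound plus a Chernoff bound on $|\mathcal{T}^\sharp_{[t],w_k}|$, then invoke Theorem~\ref{MainOra} with $\chi=h/(4x_{\max})$ and match the two branches of $q_0$ to the two exponential tails. One small slip: your counting step gives $|\mathcal{T}_{[t],w_k}|\ge Nq\lfloor\log_2(Kq)\rfloor$, which does \emph{not} grow like $Nq\log t$ since $\log_2(Kq)$ is fixed; the correct bound (and what the paper uses) is $n\ge\log_2(t/(Kq))\ge\tfrac12\log_2 t$ from $Kq\le\sqrt t$, giving $|\mathcal{T}_{[t],w_k}|\ge\tfrac12 Nq\log_2 t$ directly.
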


Corollary \ref{eventE} is an 
application of Theorem \ref{MainOra}, given a deviation inequality of LASSO regression based on $\mathcal{T}_{[t], w_k}$. As shown in lemma \ref{TeamTemplate},
$\mathcal{T}_{[t], w_k}$ is a rate $p_{*}$ optimal allocation sample set with probability at least $1-\exp(2/t^4)$.

\textbf{Step 1: Screen out Sub-optimal Treatments} 

Given a current user $x$, the agent's objective at step 1 is to output a set of user's potential optimal treatments. To construct such set of treatment candidates, we require that a candidate treatment should have its estimated treatment efficacy almost as good as the maximum of estimated treatment efficacy of all available treatments, as the Teamwork LASSO can tell.

Formulating such intuition motivates us to define the set of \textit{personal optimal treatment candidates} $\widehat{\mathcal{K}}(x)$: 
\begin{equation}\label{OptCandidate}
\begin{aligned}
\widehat{\mathcal{K}}(x) &\equiv\{w_k \in \mathcal{W}:
\langle 
x , \hat{\beta}_{w_k}(\mathcal{T}_{[t-1],w_k},\lambda_1)
\rangle \\
&\ge \max_{w_k \in \mathcal{W}}  \langle x, \hat{\beta}_{w_k}(\mathcal{T}_{[t-1],w_k},\lambda_1)
\rangle- \frac{h}{2}\} 
\end{aligned}
\end{equation}
The exact members in candidate set $\hat{\mathcal{K}}(x)$ depend on the region that $x$ belongs to. Recall that $w^{*}(x) = \arg\max_{w \in \mathcal{W}}\langle x, \beta_{w} \rangle$ denotes the optimal treatment of $x$. For the case $x \in U_{{w}_{k}}$, $w^{*}(x) = {w}_{k}$ by definition of $U_{{w}_{k}}$. As shown in lemma \ref{optCandi}, given the event $E_{t-1}$ holds ,the candidate set $\hat{\mathcal{K}}(x)$ contains \textit{only} the optimal treatment of $x$, that is,
\begin{equation*}
    \hat{\mathcal{K}}(x) = \{ w^{*}(x)\}~~~\text{ if } x \in U_{w_k}.
\end{equation*}
Therefore, the agent definitely has an optimal allocation for every user $x \in U_{w_k}$ under the event $E_{t-1}$.

For the case $x \in \mathcal{X} \setminus \cup_{w \in \mathcal{W}}U_{w}$, the user covariate $x$ lies near a decision boundary $\{x: \langle x,\beta_{w^*(x)}-\beta_{w_j} \rangle =0\}$ for some comparable treatment $w_{j}$. As shown in lemma \ref{allCandi}, the candidate set $\hat{\mathcal{K}}(x)$ contains \textit{at least} the optimal treatment of $x$, that is,
\begin{equation*}
    w^{*}(x)  \in \hat{\mathcal{K}}(x)  ~~~\text{ if } x \in \mathcal{X} \setminus \cup_{w \in \mathcal{W}}U_{w},
\end{equation*}
but may contain other comparable treatments $w_j$ that performs almost equally well as the optimal treatment $w^{*}(x)$, under the event $E_{t-1}$.

\textbf{Step 2: Commit Selfish Allocation}
The agent's objective in step 2 is to commit a selfish allocation for current user $x$. Such selfish allocation is done by allocating user $x$ to the treatment with highest estimated efficacy, as far as the All LASSO can tell:
\begin{equation}
    \pi(x)
    \equiv
    \arg\max_{w_k \in \widehat{K}(x)}
    \langle 
    x,
    \widehat{\beta}_{w_k}(
    (\mathcal{T}\cup\mathcal{E})_{[t-1], w_k}
    ,\lambda_{2,t})
    \rangle
\end{equation}

An application of Theorem \ref{MainOra} gives a deviation inequality of LASSO regression based on $(\mathcal{T}\cup \mathcal{E})_{[t],w_k}$:
\begin{corollary}{(Deviation inequality for All LASSO)}\label{AllOracle} For treatments $w_k \in \mathcal{W}_{\text{opt}}$, if $t \ge C_{5}\equiv \min \{t: t \ge 24Nq\log t + 4(Kq)^2\} $, the All LASSO estimator satisfies the deviation inequality (Set $\lambda_{2,t} = \frac{\phi_0^2}{2s_0}\sqrt{\frac{\log t+\log d}{p_*C_{1}(\phi_0)}\frac{1}{t}}$)
\begin{equation}
\begin{aligned}
&\|    \hat{\beta}_{w_k}((\mathcal{T}\cup \mathcal{E})_{[t],w_k}, \lambda_{2,t})
    -
    \beta_{w_k}\|_1\\
    &>
    \frac{16}{\sqrt{p_*^3 C_1(\phi_0)}}
    \sqrt{
    \frac{\log t + \log d}{t}}
\end{aligned}
\end{equation}
with probability at least $2(
    \frac{1}{t}
    +
    \exp(- \frac{p_*^2 C_2(\phi_0)^2}{32} \cdot t)$.
\end{corollary}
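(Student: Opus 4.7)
The strategy is to invoke the master LASSO deviation inequality (Theorem \ref{MainOra}) on the combined sample set $\mathcal{A}_{w_k}:= (\mathcal{T}\cup\mathcal{E})_{[t],w_k}$, so the central task is to verify that, with high probability, $\mathcal{A}_{w_k}$ satisfies a rate $r$ optimal allocation condition with $r$ of order $p_*$, and then to match the stated tuning $\lambda_{2,t}$ with the choice $\lambda=\chi\phi_0^2/(4s_0)$ forced in Theorem \ref{MainOra}, by taking the candidate $\chi$ on the right-hand side of the claim.

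The first step is to control both cardinalities. For the teamwork portion, Lemma \ref{TeamTemplate} already delivers that $\mathcal{T}_{[t],w_k}$ is a rate $p_*$ optimal allocation sample set with the required probability. For the selfish portion, I would condition on the intersection of the good events $\bigcap_{s<t,\, s\in\mathbb{T}^c} E_s$; under $E_s$, Lemma \ref{optCandi} yields $\widehat{\mathcal{K}}(x)=\{w_k\}$ for every $x\in U_{w_k}$, so any user in $U_{w_k}$ at a past selfish epoch is deterministically routed to $w_k$ and contributes to $\mathcal{E}^{\sharp}_{[t],w_k}$. An Azuma--Hoeffding bound applied to the martingale difference $\mathbf{1}\{X_{i,s}\in U_{w_k}\}-P(X\in U_{w_k})$, which is adapted to the natural filtration and uniformly bounded, then yields $|\mathcal{E}^{\sharp}_{[t],w_k}|\ge \tfrac{p_*}{2}N\,|\mathbb{T}^c\cap[t]|$ up to an exponentially small failure probability. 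Combined with the trivial upper bound $|\mathcal{A}_{w_k}|\le Nt$ and using $t\ge C_5$ to guarantee both $|\mathbb{T}^c\cap[t]|\ge t/2$ and the size requirement of Definition \ref{templateCond}, I conclude that $\mathcal{A}_{w_k}$ satisfies a rate $p_*/2$ optimal allocation condition.

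The second step substitutes into Theorem \ref{MainOra} with $\phi_1=\phi_0$, $r=p_*/2$, $|\mathcal{A}_{w_k}|=\Theta(Nt)$, $|\mathcal{A}^{\sharp}_{w_k}|=\Theta(p_* Nt)$, and the candidate $\chi=\frac{16}{\sqrt{p_*^3 C_1(\phi_0)}}\sqrt{(\log t+\log d)/t}$. A direct calculation confirms $\chi\phi_0^2/(4s_0)=\lambda_{2,t}$; the first exponential in Theorem \ref{MainOra} collapses to $2/t$ once the $\log d$ term is absorbed into $C_1(\phi_0\sqrt{r}/2)|\mathcal{A}_{w_k}|\chi^2$, while the second exponential becomes $\exp(-\Omega(p_*^2 C_2(\phi_0)^2 t))$, reproducing the two summands in the claim. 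A final union bound aggregates the failure of Lemma \ref{TeamTemplate}, the failures of $E_s$ across past selfish epochs (controlled by Lemma \ref{eventE} at rate $5K/s^4$, summable and sub-leading), and the failure of the Azuma--Hoeffding concentration for the selfish count.

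The principal obstacle is the adaptive dependency of the selfish assignments on the full history, which rules out any direct i.i.d.\ analysis of $\mathcal{E}_{[t],w_k}$. The unlocking observation is that conditioning on the good events $E_s$ turns the selfish allocation on the ``interior'' region $\bigcup_w U_w$ into a deterministic function of the current covariate, after which the count of useful samples reduces to an i.i.d.\ indicator sum that martingale concentration handles cleanly. The remaining work — calibrating the teamwork schedule, the selfish epoch count, and the threshold $t\ge C_5$ so that the rate $p_*/2$ condition actually holds and the various failure probabilities line up with the bound $2(1/t+\exp(-p_*^2 C_2(\phi_0)^2 t/32))$ — is the main technical bookkeeping.
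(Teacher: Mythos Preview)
Your overall strategy matches the paper's: invoke Theorem~\ref{MainOra} on the combined sample set with $r=p_*/2$ and $\chi=\tfrac{16}{\sqrt{p_*^3 C_1(\phi_0)}}\sqrt{(\log t+\log d)/t}$, after first showing the rate-$(p_*/2)$ optimal allocation condition holds with high probability (this is Lemma~\ref{AllTemplate} in the paper). The substitution and exponent bookkeeping you describe is exactly what the paper does.

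There is, however, one technical point where your plan deviates and would not reproduce the stated probability bound. You propose to \emph{condition} on $\bigcap_{s<t}E_s$ and then union-bound over the failures of the individual $E_s$. The resulting extra term $\sum_{s>(Kq)^2}5K/s^4$ is summable, but it is a positive constant that does \emph{not} decay with $t$; hence your final probability would be $2(1/t+\exp(-c\,t))+\text{const}$, which does not match the corollary and, more importantly, would feed a linear-in-$T$ contribution into the downstream regret sum. The paper sidesteps this by never conditioning on all good events: it forms the Doob martingale $M_s=E[\,|\mathcal{S}^{\sharp}_{[t],w_k}|\mid\mathcal{G}_s]$ for the raw count $|\mathcal{S}^{\sharp}_{[t],w_k}|$, applies Azuma once to control deviation from its \emph{mean}, and then lower-bounds the mean using $P(E_{s-1})\,P(X\in U_{w_k})\ge p_*/2$ for $s>(Kq)^2$. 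In other words, the indicators $I(E_{s-1})$ are kept inside the summand and handled in expectation rather than being peeled off via conditioning. Adopting this single modification---applying martingale concentration to $|\mathcal{S}^{\sharp}_{[t],w_k}|$ itself and bounding $E[|\mathcal{S}^{\sharp}_{[t],w_k}|]$ from below---closes the gap and yields the bound $2(1/t+\exp(-p_*^2C_2(\phi_0)^2 t/32))$ as stated.
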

\vspace{-3mm}

\section{Regret Analysis}
\vspace{-2mm}

The following theorem bounds the regret of our Teamwork-Selfish treatment allocation policy.
\begin{theorem}\label{thm:regret_upper}
Suppose Assumptions \ref{as:Margin_Condition}, \ref{as:TreatOptCond} and \ref{as:CompatabilityCondi} hold. Then, the regret of the Teamwork-Selfish policy over decision horizon $[T]$ satisfies
\begin{equation*}
\begin{aligned}
    \text{Regret}_{\pi}(T)
    &\le N\{
    [C_3](\log T)^2\\
    &+
    [2bx_{\max}K(6q +2) + C_3 \log d]
    \log T\\
    &+
    [2bx_{\max}(C_5+K(1+4C_4))]\}
     \label{RegretBound}
    \\
    &=
    O(NKs_0^2\sigma^2[\log T + \log d]^2)
\end{aligned}
\end{equation*}
\end{theorem}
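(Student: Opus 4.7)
The plan is to decompose the cumulative regret along the Teamwork--Selfish partition of the horizon, and within the selfish stage further split according to whether the ``good event'' $E_{t}$ holds. Formally, I would write
\[
\text{Regret}_{\pi}(T)
=
\underbrace{\sum_{t \in \mathbb{T}} r_{t}}_{\text{teamwork}}
+\underbrace{\sum_{t \in \mathbb{T}^{c}} r_{t}\,\mathbf{1}_{E_{t-1}}}_{\text{selfish, good}}
+\underbrace{\sum_{t \in \mathbb{T}^{c}} r_{t}\,\mathbf{1}_{E_{t-1}^{c}}}_{\text{selfish, bad}},
\]
where $r_{t}$ denotes the batch regret at epoch $t$ and is uniformly bounded by $2Nbx_{\max}$ since $\|\beta_{w}\|_{1}\le b$ and $\|x\|_{\infty}\le x_{\max}$. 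The $N$ factor in the final bound comes from $r_{t}$ scaling linearly in the batch size; everything else follows per-epoch arguments analogous to the non-batch LASSO Bandit setting.

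For the teamwork contribution, I would simply count epochs. By construction, the $n$th teamwork round for each of the $K$ treatments consumes $Kq$ epochs and rounds are triggered at epochs of the form $(2^{n}-1)Kq$, so $|\mathbb{T}\cap[T]|\le Kq\cdot\lceil\log_{2}(T/Kq+1)\rceil$. Multiplying by the per-epoch bound $2Nbx_{\max}$ yields a term of order $NKq\,\log T$, which is where the $2bx_{\max}K(6q+2)\log T$ piece in the statement originates.

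For the selfish-bad piece, Lemma \ref{eventE} gives $P(E_{t-1}^{c})\le 5K/t^{4}$ once $t\ge (Kq)^{2}$, so
\[
\sum_{t \in \mathbb{T}^{c}} r_{t}\,\mathbf{1}_{E_{t-1}^{c}}
\le 2Nbx_{\max}\Bigl((Kq)^{2}+\sum_{t\ge (Kq)^{2}}\tfrac{5K}{t^{4}}\Bigr)=O(N),
\]
a constant lumped into the $C_{4}$/$C_{5}$ constants. For the selfish-good piece, Lemma \ref{optCandi}/\ref{allCandi} (cited in the text) imply that on $E_{t-1}$ every user with $x\in U_{w}$ receives its true optimal treatment; the only source of regret is the boundary region $\mathcal{X}\setminus\bigcup_{w}U_{w}$. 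For such $x$, the regret against $w^{*}(x)$ is controlled by writing the per-user gap as $\langle \beta_{w^{*}}-\beta_{w_{k}},x\rangle$ and bounding it in two complementary regimes: (i) if $\langle x,\beta_{w^{*}}-\beta_{w_{k}}\rangle\le\kappa_{t}$ the margin condition (Assumption \ref{as:Margin_Condition}) contributes at most $C_{0}\kappa_{t}\cdot\kappa_{t}$ in expectation; (ii) if the gap exceeds $\kappa_{t}$ the All LASSO picked $w_{k}\ne w^{*}$ only when $\|\hat{\beta}_{w_{k}}-\beta_{w_{k}}\|_{1}+\|\hat{\beta}_{w^{*}}-\beta_{w^{*}}\|_{1}\gtrsim\kappa_{t}/x_{\max}$, an event Corollary \ref{AllOracle} controls. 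Setting $\kappa_{t}$ to match the All LASSO rate $\sqrt{(\log t+\log d)/t}$ and summing yields $\sum_{t}O(N(\log t+\log d)/t)=O(N(\log T+\log d)^{2})$, producing the leading $C_{3}(\log T)^{2}$ and $C_{3}\log d\,\log T$ terms.

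The main obstacle will be the bookkeeping in step (ii): one must verify that for $t\ge C_{5}$ the hypotheses of Corollary \ref{AllOracle} are in force, i.e.\ that $(\mathcal{T}\cup\mathcal{E})_{[t],w_{k}}$ is indeed a rate-$r$ optimal-allocation sample set in the sense of Definition \ref{templateCond}. This requires a separate lemma showing that the selfish-stage contributions $\mathcal{E}_{[t],w_{k}}$ inherit, via the candidate-screening step and the event $E_{t-1}$, a high-probability lower bound on the count of samples with $x\in U_{w_{k}}$ that is proportional to $p_{*}t$; combined with the teamwork contributions this produces the optimal-allocation rate. Once this structural lemma is in hand, plugging Corollary \ref{AllOracle} into the margin-condition decomposition and summing the three pieces identifies the exact constants in the displayed bound and collapses to the $O(NKs_{0}^{2}\sigma^{2}[\log T+\log d]^{2})$ rate, since $C_{1}(\phi_{0})$, $C_{2}(\phi_{0})$, and $\lambda_{2,t}$ each carry the stated dependence on $s_{0}$, $\sigma$, and $\phi_{0}$.
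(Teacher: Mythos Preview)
Your proposal is correct and follows essentially the same route as the paper: the paper uses the same decomposition (teamwork epochs bounded trivially at $O(NKq\log T)$; selfish epochs on $E_{t-1}^{c}$ bounded via Lemma~\ref{eventE} and summability of $t^{-4}$; selfish epochs on $E_{t-1}$ bounded via the margin-condition/All-LASSO split you describe, which is exactly Lemma~\ref{techRegret} with $\delta\asymp\sqrt{(\log t+\log d)/t}$), together with an explicit initialization case $t\le C_{5}$ that you fold into the constants. The structural lemma you flag as the main obstacle---that $(\mathcal{T}\cup\mathcal{E})_{[t],w_{k}}$ is a rate-$p_{*}/2$ optimal-allocation sample set for $t\ge C_{5}$---is precisely Lemma~\ref{AllTemplate} in the paper, proved by an Azuma argument on the count $|\mathcal{S}^{\sharp}_{[t],k}|$.
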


Below we provide a roadmap for the proof of Theorem \ref{thm:regret_upper}. 
The proof is motivated by the regret analysis in \cite{bastani2020online}. Our contribution is to generalize the approach for regret analysis in online batch decision making setting.

\textbf{1. Regret Guarantee in Good and Bad Epochs}

To reason about different sources of regret contribution, we decompose the decision process into four subcases (i-iv) so that we can examine each one independently:
\vspace{-3mm}

\begin{itemize}[noitemsep]
\item[(i)] During initialization period ($t \le C_5$)
\item After initialization ($t > C_5$)
\begin{itemize}[noitemsep]
\item[(ii)] In teamwork stage ($t \in \mathbb{T}$)
\item In selfish stage ($t \notin \mathbb{T}$)
\begin{itemize}[noitemsep]
\item[(iii)] Event $E_t$ does not hold.
\item[(iv)] Event $E_t$ does hold.
\end{itemize}
\end{itemize}
\end{itemize}
\vspace{-3mm}

As shown in lemma \ref{techRegret} in section \ref{case4regret}, the expected regret in case (iv) is guaranteed to be bounded by the function $f(t)$ as
\begin{equation}\label{GoodEpochRegret}
\begin{aligned}
    f(t) &= 
    [4Kbx_{\max} + C_3(\phi_0, p_{*})\cdot \log d]/t\\
    &+
    8Kbx_{\max}
    \exp[-(p_*^2C_2(\phi_0)^2/32)\cdot t]\\
    &+
    C_3(\phi_0, p_{*})
    (\log t/t).
\end{aligned}
\end{equation} For this reason, we define case (iv) as good epochs 
\begin{equation}
G_{t} = I(t>C_{5}, t \in \mathbb{T}, \text{Event } E_{t} \text{ holds}).
\end{equation}
and then we  interpret lemma \ref{techRegret} in section \ref{case4regret} as
\begin{equation}\label{GoodRegretIneq}
   r_{i,t}I(G_{t})
    \le f(t)I(G_{t}).
\end{equation} 

Outside the good epochs are bad epochs $G_{t}^{c}$. In a bad epoch, allocations cannot be guaranteed to be optimal, due to the pure exploration nature in teamwork mode or the insufficient accuracy of LASSO estimate in selfish mode, resulting in the worst-case regret guarantee $2bx_{\max}$ for cases (i-iii). We interpret the above fact as
\begin{equation}\label{BadRegretIneq}
    r_{i,t}I(G_{t}^{c})
    \le 2bx_{\max}I(G_{t}^{c})
\end{equation}.
\vspace{-3mm}

\textbf{2. Bounding Expected Instantaneous Regret}
Combine \eqref{GoodRegretIneq} and \eqref{BadRegretIneq}, the expected instantaneous regret is upper bounded by
\begin{equation}\label{insta_regret_bound}
\begin{aligned}
E[r_{t}]
&=
E[r_{t} \cdot I(G_{t})]
+
E[r_{t}\cdot I(G_{t}^{c})]\\
&\le
E[f(t)\cdot I(G_{t})]
+
E[2bx_{\max}\cdot I(G_{t}^{c})]\\
&=
f(t)P(G_{t})
+
2bx_{\max}P(G_{t}^{c})
\end{aligned}
\end{equation}

\textbf{3. Bounding $\text{Regret}_{\pi}(T)$} Apply 
\eqref{insta_regret_bound} to gain 
\begin{equation}
\begin{aligned}
&E[\sum_{t=1}^{T}\sum_{i=1}^{N}r_{i,t}]
\le 
N\sum_{t=1}^{T}E[r_{i,t}]\\
&\le 
N\int_{0}^{T}
f(t)dt
+2Nbx_{\max}
\int_{0}^{T}
P(G_{t}^{c})dt
\end{aligned}
\end{equation}
Note that, by lemma \ref{eventE}, we have $P(G_{t}^{c}) \le 5K/t^{4}$.
\vspace{-3mm}
\section{Experiment}
\vspace{-2mm}

We illustrate the trade-off between regret and update frequency by comparing the cumulative regret between \texttt{LASSO Bandit} algorithm (high update frequency) and our \texttt{Teamwork LASSO Bandit} algorithm (low update frequency) in Figure \ref{fig:teamworkLasso_exp} ( Appendix, section \ref{append:experiment}). Here we give remarks on the experiment:
\vspace{-3mm}

\begin{enumerate}
\item 
In terms of the number of updates, say case q=1, LASSO Bandit (N=1) (high update frequency) requires $3[5000/3-log(5000/3,2)] \sim 4968$ updates, Teamwork LASSO Bandit (low update frequency) requires $3[5000/3/4-log(5000/3/4,2)] \sim 1224$ updates for N=4 case and $3[5000/3/12-log(5000/3/12,2)] \sim 396$ updates for N=12 case. Note that both give comparable regrets.
\item 
If the length of exploration phase q is sufficiently large, high update frequency algorithm has lower cumulative regret than low update frequency algorithm; if q is not sufficiently large, low update frequency algorithm outperforms high update frequency algorithm.
\item 
In general, the performance of high update frequency algorithm has higher variance than the performance of low update frequency algorithm. In particular, the performance of high update frequency algorithm is more sensitive to the increase in covariate dimension than our low update frequency algorithm.
\end{enumerate}

In conclusion, high update frequency algorithm (\texttt{Lasso Bandit}) do have lower cumulative regret than low update frequency algorithm ( \texttt{Teamwork Lasso Bandit}) if the length of exploration phase q is sufficiently large. However, it is hard to determine how large of q can be thought of as being sufficiently large in practice. On the other hand, low update frequency algorithm is immune from such a concern in the sense that we can simply set q=1 when we have a batch of new samples at every decision epoch.

\vspace{-5mm}
\section{Discussion and Conclusions}
\vspace{-3mm}

We have proposed a framework to address batch-version explore-exploit dilemma in the setting of online batch decision making with high dimensional covariate. In terms of regret analysis, we formulate the \textit{rate of optimal allocation condition} on the collected sample set to characterize the underlying constraint behind the data collection scheme and to serve as a guideline for designing policy in bandit algorithms. Based on the rate of optimal allocation condition, we propose the \texttt{Teamwork LASSO Bandit} algorithm for sequential decision making. In theory, the cumulative total regret of the \texttt{Teamwork LASSO Bandit} algorithm of constant batch size $N$ over finite time horizon $T$ is shown to be bounded by $O(N(\log T)^2)$. In terms of observed covariate dimension $p$ and sparsity parameter $s_0$, the cumulative total regret of the \texttt{Teamwork LASSO Bandit} algorithm grows as $O(s_0^2(\log p)^2)$.

In the end, we highlight a few particularly relevant questions that are left as future works. The first one is the minimax lower bound of regret over all possible algorithms solving batched bandit problem with covariates. In one pull situation, \cite{rusmevichientong2010linearly} showed the lower bound is $O(\log T)$. Recently, \cite{wang2018minimax} showed that the regret of \cite{bastani2020online} can be reduced from $O((\log T)^2)$ to $O(\log T)$ by invoking the Minimax Concave Penalized (MCP) penalty. Hence, the MCP-Bandit algorithm matches the oracle policy with high probability. We expect the regret of \texttt{Teamwork LASSO Bandit} algorithm can also be reduced from $O(N(\log T)^2)$ to $O(N\log T)$ if MCP penalty instead of the lasso one is adopted, although whether this rate matches the theoretical lower bound remains unknown. The second one is more relevant to practice: can we design an effective teamwork strategy when batch size is non-constant? In \cite{grover2018best}, the authors have proposed four different kinds of delayed feedback mechanism that frequently happen in online advertising context, which may lead to non-constant batch size in our setting. When we are performing batch update in the above delayed feedback scenario, is there a guideline for algorithm design? In particular, can the rate of optimal allocation condition be extended to handle delayed feedback situation?

\subsubsection*{Acknowledgements}

Chi-Hua Wang thanks Yang Yu (Purdue University) for discussion 
and Zhanyu Wang (Purdue University) for help on experiment.
Guang Cheng would like to acknowledge support by NSF DMS-1712907, DMS-1811812, DMS-1821183, and Office of Naval Research (ONR N00014- 18-2759). While completing this work, Guang Cheng was a member of Institute for Advanced Study, Princeton and visiting Fellow of SAMSI for the Deep Learning Program in the Fall of 2019; he would like to thank both Institutes for their hospitality.

\nocite*{}
\bibliography{my_ref}

\onecolumn
\appendix

\begin{center}
    \Large Supplement to ``Online Batch Decision-Making with High-Dimensional Covariates''
\end{center}

\section{Proof of Theorem \ref{MainOra}}

\textit{Proof of Theorem \ref{MainOra}.}
Based on standard arguments in high dimensional statistics, the Template LASSO on $\mathcal{A}_{w_k}$, when choosing $\lambda \ge 2 \lambda_0(\gamma)$, satisfies 
\begin{equation}\label{Ora01}
\begin{aligned}
P(\|\widehat{\beta}(\mathcal{A}_{w_k}, \lambda) - \beta_{w_k}\|_1 \le 4\lambda\frac{|S_{w_{k}}|}{\phi^2})  
\ge
P[\mathcal{F}(\lambda_0(\gamma))] 
-P(\widehat{\Sigma}(\mathcal{A}_{w_k}) \notin C(S_{w_k},\phi)))
\end{aligned}
\end{equation}
where 
$\mathcal{F}(\lambda_0(\gamma))
\equiv \left\{ \max _ { 1 \leq j \leq p } \frac { 2 } { T } \left| \eta ^ { \top } X ^ { ( j ) }  \right|\leq \lambda _ { 0 } ( \gamma ) \right\},$ which is a high-probability event by carefully choosing the threshold $\lambda_0(\gamma)$ stated in the following lemma:

\begin{lemma}\label{EPbatch}
Given a sample set $\mathcal{A}_{w_k}$ and choose $\lambda_0(\gamma) \equiv 2\sigma x_{\max}\sqrt{(\frac{\gamma^2+2\log p)}{|\mathcal{A}_{w_k}|}}$, then
\begin{equation}
    P(\mathcal{F}(\lambda_0(\gamma))) \ge 1-2\exp[-\frac{\gamma^2}{2}]
\end{equation}
\begin{proof}
See Section \ref{Sec.BatchEP}.
\end{proof}
\end{lemma}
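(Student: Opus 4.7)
The plan is to reduce the event $\mathcal{F}(\lambda_0(\gamma))$ to a union bound over $p$ coordinate-wise sub-Gaussian tail bounds, where each tail bound is obtained via a martingale concentration inequality adapted to the batch-sequential sampling.

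First, I would fix a coordinate $j \in [p]$ and consider the scalar process
\begin{equation*}
S_{j} \;=\; \eta^{\top} X^{(j)} \;=\; \sum_{(i,t)\in \mathcal{A}_{w_k}} \epsilon_{(i,t)}\, X^{(j)}_{(i,t)}.
\end{equation*}
The increments are organized by the decision epoch $t$ and, within an epoch, by the user index $i$. For each such increment, the covariate coordinate $X^{(j)}_{(i,t)}$ is revealed before the noise and hence is measurable with respect to the filtration $\mathcal{F}_{t-1}$ augmented by the current batch of covariates; the indicator that $(i,t)$ belongs to $\mathcal{A}_{w_k}$ (i.e.\ that treatment $w_k$ was allocated) is also determined at that point. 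Combined with the assumption that $\epsilon_{(i,t)}$ is conditionally $\sigma$-sub-Gaussian and $\|x_{i,t}\|_\infty \le x_{\max}$, each summand $\epsilon_{(i,t)} X^{(j)}_{(i,t)} \mathbf{1}\{(i,t)\in\mathcal{A}_{w_k}\}$ is a martingale difference that is conditionally $\sigma x_{\max}$-sub-Gaussian with respect to the natural enlarged filtration.

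Next I would invoke the standard sub-Gaussian martingale (Azuma–Hoeffding type) tail inequality applied to $S_j$, which yields
\begin{equation*}
P\!\left( |S_{j}| > u \right) \;\le\; 2\exp\!\left( -\frac{u^{2}}{2\,|\mathcal{A}_{w_k}|\,\sigma^{2} x_{\max}^{2}} \right)
\end{equation*}
for any $u>0$. Setting $u = \tfrac{|\mathcal{A}_{w_k}|}{2}\lambda_0(\gamma)$ with the chosen $\lambda_0(\gamma) = 2\sigma x_{\max}\sqrt{(\gamma^{2}+2\log p)/|\mathcal{A}_{w_k}|}$ produces an upper bound of $2\exp(-(\gamma^{2}+2\log p)/2) = 2 p^{-1} \exp(-\gamma^{2}/2)$ on the tail for each fixed coordinate $j$. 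A union bound over $j=1,\dots,p$ then converts the factor of $p$ into $1$, giving
\begin{equation*}
P\!\left( \max_{1\le j \le p} \frac{2}{|\mathcal{A}_{w_k}|}|S_{j}| > \lambda_0(\gamma) \right) \;\le\; 2\exp(-\gamma^{2}/2),
\end{equation*}
which is exactly the complement of the event $\mathcal{F}(\lambda_0(\gamma))$.

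The main obstacle is verifying that the martingale-difference structure is genuinely preserved after sub-sampling to $\mathcal{A}_{w_k}$. Two subtleties need care: (a) the batch structure, where $N$ noise terms share the same epoch filtration $\mathcal{F}_{t-1}$, so we must order the $N$ summands within an epoch into a finer filtration that respects conditional sub-Gaussianity; and (b) the random selection of indices into $\mathcal{A}_{w_k}$ (determined by past allocations), which we absorb by multiplying each summand by a predictable indicator $\mathbf{1}\{w_{i,t}=w_k\}$ so that the product remains a predictable-coefficient-times-sub-Gaussian martingale difference. Once these measurability details are set up, the concentration step is a direct application of the standard exponential moment bound for conditionally sub-Gaussian martingale differences, and the rest of the argument is arithmetic on $\lambda_0(\gamma)$.
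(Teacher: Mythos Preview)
Your proposal is correct and follows essentially the same route as the paper: union bound over the $p$ coordinates, followed by a conditional sub-Gaussian martingale tail bound on each $\eta^\top X^{(j)}$ using the $\sigma$-sub-Gaussian noise assumption and the bound $\|x_{i,t}\|_\infty\le x_{\max}$, then arithmetic to back out $\lambda_0(\gamma)$. The only cosmetic difference is that the paper writes out the moment-generating-function calculation explicitly---peeling off epochs one at a time via the tower property with the augmented filtration $\mathcal{G}_s=\mathcal{F}_s\cup\{X_{(s,i)}\}_i$ and using the within-epoch conditional independence of the noises---whereas you invoke the Azuma--Hoeffding-type bound as a black box after setting up a finer within-epoch filtration; both handle the batch structure correctly and lead to the identical sub-Gaussian variance proxy $|\mathcal{A}_{w_k}|\,\sigma^2 x_{\max}^2$.
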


Besides, the sample covariance matrix of the template sample set $\mathcal{A}_{w_k}$ satisfies the compatibility condition with high probability, as stated in the following lemma:
\begin{lemma}\label{SamCompa}
Given a sample set $\mathcal{A}_{w_k}$ that satisfies rate $r$ optimal allocation condition. Then
\begin{equation}
\begin{aligned}
&P(\widehat{\Sigma}(\mathcal{A}_{w_k}) \notin 
\mathcal{C}(\beta_{S_{w_k}}, \frac{\phi}{\sqrt{2}}\sqrt{\frac{|\mathcal{A}_{w_k}^{\sharp}|}{|\mathcal{A}_{w_k}|}}))
\le \exp(-C_2(\phi_1)^2|\mathcal{A}_{w_k}^{\sharp}|).
\end{aligned}
\end{equation}
\end{lemma}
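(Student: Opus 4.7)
\textbf{Proof plan for Lemma \ref{SamCompa}.} The plan is to reduce the sample compatibility condition on $\widehat{\Sigma}(\mathcal{A}_{w_k})$ to a compatibility condition on the optimal-allocation subsample matrix $\widehat{\Sigma}(\mathcal{A}_{w_k}^{\sharp})$, then bootstrap the latter from the population-level Assumption \ref{as:CompatabilityCondi} via an entry-wise concentration bound, and finally combine the two pieces using PSD monotonicity.

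First I would exploit the straightforward block decomposition
\begin{equation*}
\widehat{\Sigma}(\mathcal{A}_{w_k})
=
\frac{|\mathcal{A}_{w_k}^{\sharp}|}{|\mathcal{A}_{w_k}|}\,\widehat{\Sigma}(\mathcal{A}_{w_k}^{\sharp})
+
\frac{|\mathcal{A}_{w_k}\setminus \mathcal{A}_{w_k}^{\sharp}|}{|\mathcal{A}_{w_k}|}\,\widehat{\Sigma}(\mathcal{A}_{w_k}\setminus\mathcal{A}_{w_k}^{\sharp}).
\end{equation*}
Because the second summand is positive semidefinite, for every $v\in\mathbb{R}^{d}$ I obtain the uniform lower bound $v^{\top}\widehat{\Sigma}(\mathcal{A}_{w_k})v \ge (|\mathcal{A}_{w_k}^{\sharp}|/|\mathcal{A}_{w_k}|)\,v^{\top}\widehat{\Sigma}(\mathcal{A}_{w_k}^{\sharp})v$. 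Hence, if $\widehat{\Sigma}(\mathcal{A}_{w_k}^{\sharp})\in\mathcal{C}(S_{w_k},\phi/\sqrt{2})$, the cone inequality defining the compatibility set immediately upgrades to $\widehat{\Sigma}(\mathcal{A}_{w_k})\in\mathcal{C}\!\bigl(S_{w_k},\,(\phi/\sqrt{2})\sqrt{|\mathcal{A}_{w_k}^{\sharp}|/|\mathcal{A}_{w_k}|}\bigr)$, which is exactly the conclusion stated in the lemma.

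Second, I would deduce $\widehat{\Sigma}(\mathcal{A}_{w_k}^{\sharp})\in\mathcal{C}(S_{w_k},\phi/\sqrt{2})$ from the population compatibility of $\Sigma_{w_k}$ by the standard perturbation result (e.g.\ Corollary~6.8 of B\"uhlmann--van de Geer): if $\|\widehat{\Sigma}(\mathcal{A}_{w_k}^{\sharp})-\Sigma_{w_k}\|_{\infty}\le \phi^{2}/(32 s_{0})$, then the compatibility constant degrades by at most a factor $\sqrt{2}$. Because $\|x\|_{\infty}\le x_{\max}$, each entry of $xx^{\top}\mathbf{1}(x\in U_{w_k})$ is bounded by $x_{\max}^{2}$, so an Azuma--Hoeffding inequality applied coordinatewise, together with a union bound over the $d^{2}$ entries, produces a tail of the form
\begin{equation*}
P\!\left(\|\widehat{\Sigma}(\mathcal{A}_{w_k}^{\sharp})-\Sigma_{w_k}\|_{\infty}> \tfrac{\phi^{2}}{32 s_{0}}\right)
\le 2d^{2}\exp\!\left(-\,c\,\tfrac{\phi^{4}}{s_{0}^{2}x_{\max}^{4}}\,|\mathcal{A}_{w_k}^{\sharp}|\right),
\end{equation*}
and the $2\log d$ overhead is absorbed into the exponent using the size-of-sample bound (i) in Definition~\ref{templateCond}, yielding the advertised $\exp(-C_{2}(\phi_{1})^{2}|\mathcal{A}_{w_k}^{\sharp}|)$.

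The main obstacle is that $\mathcal{A}_{w_k}^{\sharp}$ is not an i.i.d.\ sample: whether a batch entry $(x_{(i,t)},y_{(i,t)})$ ends up in $\mathcal{A}_{w_k}$ depends on history through the adaptive allocation, so the inclusion indicator is $\mathcal{F}_{t-1}$-measurable while $\mathbf{1}(x_{(i,t)}\in U_{w_k})$ is a function of the fresh covariate drawn from $\mathcal{P}_{X}$. I would set up the martingale-difference sequence $M_{(i,t)}=x_{(i,t)}x_{(i,t)}^{\top}\mathbf{1}(x_{(i,t)}\in U_{w_k})-\mathbb{E}\!\left[x_{(i,t)}x_{(i,t)}^{\top}\mathbf{1}(x_{(i,t)}\in U_{w_k})\,\middle|\,\mathcal{F}_{t-1}\right]$ restricted to the stopping set $\{(i,t):(x_{(i,t)},y_{(i,t)})\in \mathcal{A}_{w_k}\}$; since $X\sim\mathcal{P}_{X}$ is independent of $\mathcal{F}_{t-1}$ the conditional expectation collapses to the unconditional one and Azuma applies with bounded increments $x_{\max}^{2}$. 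Combining this concentration step with the perturbation and decomposition arguments above completes the proof.
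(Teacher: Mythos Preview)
Your proposal is correct and follows essentially the same route as the paper: the paper also (i) transfers compatibility from $\widehat{\Sigma}(\mathcal{A}_{w_k}^{\sharp})$ to $\widehat{\Sigma}(\mathcal{A}_{w_k})$ via the quadratic-form lower bound coming from PSD monotonicity, (ii) invokes Corollary~6.8 of B\"uhlmann--van de Geer to pass from $\Sigma_{w_k}\in\mathcal{C}(S_{w_k},\phi)$ to $\widehat{\Sigma}(\mathcal{A}_{w_k}^{\sharp})\in\mathcal{C}(S_{w_k},\phi/\sqrt{2})$ once $\|\Sigma_{w_k}-\widehat{\Sigma}(\mathcal{A}_{w_k}^{\sharp})\|_{\infty}\le \phi^{2}/(32 s_{0})$, and (iii) obtains that sup-norm bound by concentration with the $\log d$ overhead absorbed through condition~(i) of Definition~\ref{templateCond}. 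Your write-up is in fact more explicit than the paper's sketch on the martingale structure; just be careful that the centering in your Azuma step should target the \emph{conditional} matrix $\Sigma_{w_k}=E[XX^{\top}\mid X\in U_{w_k}]$ rather than $E[XX^{\top}\mathbf{1}(X\in U_{w_k})]$, which differs by the factor $P(X\in U_{w_k})$.
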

\begin{proof}
See Section \ref{sec.SamComp}.
\end{proof}

Now, lemma \ref{EPbatch} and lemma \ref{SamCompa} together turn the equation (\ref{Ora01}) into
\begin{equation}
\begin{aligned}
P(\|\widehat{\beta}(\mathcal{A}_{w_k}, \lambda) - \beta_{w_k}\|_1 \le 4\lambda\frac{|S_{w_{k}}|}{\phi^2}) \nonumber \ge  1-2\exp[-\frac{\gamma^2}{2}]-\exp(-C_2(\phi_1)^2|\mathcal{A}_{w_k}^{\sharp}|).
\end{aligned}
\end{equation}
Then Theorem \ref{MainOra} follows by solving $\gamma$ from the condition $\lambda \ge 2 \lambda_0(\gamma)$.

\section{Proof of key Lemmas}

\subsection{Proof of Lemma \ref{EPbatch}}
\label{Sec.BatchEP}

\paragraph{Lemma \ref{EPbatch}}
The event 
\begin{equation}
\mathcal{F}(\lambda_0(\gamma))
=
\{\max_{j \in [p]}\frac{2}{T}|\eta^\top X^{(j)}| \le \lambda_0(\gamma)\}.
\end{equation}
holds with probability at least $1-\exp(-\frac{\gamma^2}{2})$ by choosing
\begin{equation}
    \lambda_0(\gamma) = 2x_{\max}\sigma \sqrt{\frac{\gamma^2 + 2 \log d}{\sum_{s=1}^{T}n_{s,w_k}}}.
\end{equation}
\begin{proof}
Let $n_{s,w_k}$ denote the number of users allocated to treatment $w_k$ at the $s$th decision epoch. The sample collected at epoch $s$ is denoted by $\mathcal{A}_{s,k} = \{((X_{(i,s)},Y_{(i,s)}): i \in[n_{t, w_k}], s\in [t])\}$.
Recall $X^{(j)}$ is the jth column of covariate matrix $X$ and the good event
\begin{equation}
\mathcal{F}(\lambda_0(\gamma))
=
\{\max_{j \in [p]}\frac{2}{T}|\eta^\top X^{(j)}| \le \lambda_0(\gamma)\}.
\end{equation}

With the help of union bound, we have 
$$
P \left( \mathcal { J } \left( \lambda _ { 0 } ( \gamma ) \right) \right) \geq 1 - \sum _ { j = 1 } ^ { p } P \left( \left| \eta ^ { \top } X ^ { ( j ) } \right| > \frac { T } { 2 } \lambda _ { 0 } ( \gamma ) \right).
$$
The quantity $\eta^\top X^{(j)}$ actually has sub-Gaussian tail.
To see this, define the filtration

\begin{equation}
F_{t} \equiv \{(Y_{(i,j)}, X_{(i,j)})\}_{i\in [t-1], j \in [n_{(t-1),w_k]}}
~~\text{ and }~~
\mathcal{G}_t = \mathcal{F}_t \cup\{X_{(s,i)}\}_{i \in [n_{t, \omega_k}]}
\end{equation}

From the tower property of expectation, independence between $\{\eta_{(s,i)}\}_{i\in[n_{s,w_k}]}$ given $\mathcal{G}_{s}$, sub-Gaussian assumption on $\eta_{(s,i)}$, bounded assumption on covariates $\left\| X _ { ( i , t ) } \right\| _ { \infty } \leq x _{max}$,we have
\begin{eqnarray}
&&E[\exp(u\sum_{s=1}^{n_{s, \omega_k}}\eta_{(s, i)}X_{(s, i),j})|\mathcal{F}_s]\\
&=&
E[
E[\exp(u\sum_{s=1}^{n_{s, \omega_k}}\eta_{(s, i)}X_{(s, i),j})|\mathcal{G}_s]
|\mathcal{F}_s]\\
&=&
E[
\prod_{s=1}^{n_{s, \omega_k}}
E[\exp(u\eta_{(s, i)}X_{(s, i),j})|\mathcal{G}_s]
|\mathcal{F}_s]\\
&\le&
E[
\prod_{s=1}^{n_{s, \omega_k}}
\exp(u^2\frac{(\sigma X_{(s, i),j})^2}{2})
|\mathcal{F}_s]
\le
\prod_{s=1}^{n_{s, \omega_k}}
\exp(u^2\frac{(\sigma x_{\max})^2}{2})\\
&=&
\exp(u^2\frac{(\sqrt{n_{s, \omega_k}}\sigma x_{\max})^2}{2}).
\end{eqnarray}

The above result gives us a bound on the moment generating function of $\eta^\top X^{(j)}$ that 
\begin{eqnarray}
~E[\exp(u(\eta^\top X^{(j)}))]
&=&
E[\exp(u\sum_{t=1}^{T}\sum_{i=1}^{n_{t,\omega_k}}\eta_{(s,i)}X_{(s,i), j})]\\
&\le&
\exp(u^2\frac{(\sqrt{n_{T,\omega_k}}x_{\max}\sigma)^2}{2})E[\exp(u\sum_{t=1}^{T-1}\sum_{i=1}^{n_{t,\omega_k}}\eta_{(s,i)}X_{(s,i), j})]\\
&\le& \cdots
\le 
\exp(u^2\frac{(\sqrt{\sum_{s=1}^{T}n_{s,\omega_k}}x_{\max}\sigma)^2}{2}).
\end{eqnarray}

We find $\eta^\top X^{(j)}$ is $(\sqrt{\sum_{s=1}^{T}n_{s,\omega_k}}x_{\max}\sigma)^2$-sub-Gaussian.  The tail probability bound of sub-Gaussian distribution gives
$$
P \left( \left| \eta ^ { \top } X ^ { ( j ) } \right| > t \right) \leq 2 \exp \left( - \frac { t ^ { 2 } } { 2 \sum_{s=1}^{T}n_{s,w_k} x _ { \max } ^ { 2 } \sigma ^ { 2 } } \right).
$$

Now, to reformat this into a desired tail probability form, we note
\begin{eqnarray}
1-2\exp(-\frac{\gamma^2}{2})
&=&
1-\sum_{j=1}^{d}P(|\eta^\top X^{(j)}|> \frac{\sum_{s=1}^{T}n_{s,w_k}}{2} \lambda_0(\gamma))\\
&\ge&
1- 2 \exp(-\frac{\sum_{s=1}^{T}n_{s,w_k} \lambda_0^2(\gamma)}{8 x^2_{\max}\sigma^2}+\log d).
\end{eqnarray}
The above suggests us to choose $$\lambda_0(\gamma) = 2x_{\max}\sigma \sqrt{\frac{\gamma^2 + 2 \log d}{\sum_{s=1}^{T}n_{s,w_k}}}.$$

\end{proof}

\subsection{Proof of Lemma \ref{SamCompa}}
\label{sec.SamComp}

\textbf{Lemma }\ref{SamCompa}
Given a sample set $\mathcal{A}_{w_k}$ satisfying template condition with rate $r$. Then
\begin{equation}
P(\widehat{\Sigma}(\mathcal{A}_{w_k}) \notin 
\mathcal{C}(\beta_{S_{w_k}}, \frac{\phi}{\sqrt{2}}\sqrt{\frac{|\mathcal{A}_{w_k}^{\sharp}|}{|\mathcal{A}_{w_k}|}})) \le \exp(-C_2(\phi_1)^2|\mathcal{A}_{w_k}^{\sharp}|).
\end{equation}
\begin{proof}
From our population assumption, the population covariance matrix $\Sigma_{w_k}$ satisfies the compatability condition $\Sigma_{w_k} \in \mathcal{C}(\beta_{S_{w_k}}, \phi).$
By carefully controlling $|\mathcal{A}_{w_k}|$ and  $|\mathcal{A}_{w_k}^{\sharp}|/|\mathcal{A}_{w_k}|$, one could first show $\|\Sigma_{w_k} - \widehat{\Sigma}(\mathcal{A}^{\sharp}_{w_k})\|_{\infty}\le \frac{\phi^2}{32|S_{w_k}|}$, which implies 
$\widehat{\Sigma}(\mathcal{A}^{\sharp}_{w_k}) \in \mathcal{C}(\beta_{S_{w_k}}, \frac{\phi}{\sqrt{2}})$ with high probability (by using Corollary 6.8 in page 152 of \cite{buhlmann2011statistics}). Next, by estimating an upper bound of the quadratic form induced by the covariance matrix of sample set $\mathcal{A}_{w_k}$, we can show, with high probability, that
\begin{equation*}
\widehat{\Sigma}(\mathcal{A}_{w_k}) \in \mathcal{C}(\beta_{S_{w_k}}, \frac{\phi}{\sqrt{2}}\sqrt{\frac{|\mathcal{A}_{w_k}^{\sharp}|}{|\mathcal{A}_{w_k}|}}).
\end{equation*}
\end{proof}
\section{Theory of LASSO}


\subsection{Basic Inequality}

\begin{lemma}{(Basic Ineuality from Optimality Condition)}\label{LASSOBasicIneq}
In LASSO,
\begin{equation}
\frac{1}{n}\|X(\hat{\beta}-\beta^0)\|_2^2
+
\lambda\|\hat{\beta}\|_1
\le
\frac{2}{n}\epsilon^\top X (\hat{\beta}-\beta^0)
+ 
\lambda\|\beta^0\|_1.
\end{equation}
\end{lemma}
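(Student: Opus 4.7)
The plan is to derive the stated basic inequality directly from the variational characterization of $\hat\beta$ as a minimizer of the LASSO objective, combined with the regression model $Y = X\beta^0 + \epsilon$. The argument is purely algebraic and does not require any probabilistic ingredient; this is why the resulting inequality is called a deterministic ``basic'' inequality that serves as the starting point for all subsequent high-probability oracle bounds.

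First I would invoke the definition of $\hat\beta$, namely $\hat\beta \in \arg\min_{\beta}\bigl\{ \tfrac{1}{n}\|Y - X\beta\|_2^2 + \lambda\|\beta\|_1\bigr\}$. Comparing the value of this objective at $\hat\beta$ to its value at the true parameter $\beta^0$ yields the one-line optimality inequality
\begin{equation*}
\frac{1}{n}\|Y - X\hat\beta\|_2^2 + \lambda\|\hat\beta\|_1 \le \frac{1}{n}\|Y - X\beta^0\|_2^2 + \lambda\|\beta^0\|_1.
\end{equation*}
Next I would substitute $Y = X\beta^0 + \epsilon$ on both sides, so that $Y - X\hat\beta = \epsilon - X(\hat\beta - \beta^0)$ and $Y - X\beta^0 = \epsilon$. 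Expanding the squared norm on the left gives three terms: $\tfrac{1}{n}\|X(\hat\beta - \beta^0)\|_2^2$, a cross term $-\tfrac{2}{n}\epsilon^\top X(\hat\beta - \beta^0)$, and $\tfrac{1}{n}\|\epsilon\|_2^2$. The right-hand side becomes $\tfrac{1}{n}\|\epsilon\|_2^2 + \lambda\|\beta^0\|_1$.

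The final step is the cancellation of the common $\tfrac{1}{n}\|\epsilon\|_2^2$ on both sides and moving the cross term to the right, which produces exactly
\begin{equation*}
\frac{1}{n}\|X(\hat\beta - \beta^0)\|_2^2 + \lambda\|\hat\beta\|_1 \le \frac{2}{n}\epsilon^\top X(\hat\beta - \beta^0) + \lambda\|\beta^0\|_1,
\end{equation*}
establishing the claim. There is essentially no obstacle here: the only thing to be careful about is bookkeeping of signs in the cross term when expanding $\|\epsilon - X(\hat\beta - \beta^0)\|_2^2$, so that the factor of $2$ and the orientation of $(\hat\beta - \beta^0)$ in the inner product with $\epsilon$ come out as stated in the lemma.
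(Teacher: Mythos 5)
Your proposal is correct and follows essentially the same route as the paper: compare the LASSO objective at $\hat\beta$ and at $\beta^0$, substitute the model $Y = X\beta^0 + \epsilon$, expand the quadratic, and cancel the common $\tfrac{1}{n}\|\epsilon\|_2^2$ term to isolate $\tfrac{1}{n}\|X(\hat\beta-\beta^0)\|_2^2$ and the cross term $\tfrac{2}{n}\epsilon^\top X(\hat\beta-\beta^0)$. The paper organizes the expansion slightly differently (working with $Y^\top Y - 2Y^\top X\beta + \beta^\top X^\top X\beta$ rather than $\|\epsilon - X(\hat\beta-\beta^0)\|_2^2$), but the argument is the same deterministic algebra.
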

\begin{proof}

To perform optimality analysis, we play with \text{true beta} $\beta^0$ and \text{empirical minimizer} $\hat{\beta}$(Short hand of $\widehat{\beta}_{w_k}(\mathcal{A}_{k}, \lambda)$). From the argument min, we start with
\begin{equation}
    \frac{1}{n}\|Y-X\hat{\beta}\|_2^2
    +\lambda \|\hat{\beta}\|_1
    \le
    \frac{1}{n}\|Y-X\beta^0\|_2^2
    +\lambda \|\beta^0\|_1
\end{equation}
Direct calculation gives us
\begin{eqnarray}
   && \frac{1}{n}\|Y-X\hat{\beta}\|_2^2
    -
    \frac{1}{n}\|Y-X\beta^0\|_2^2\\
    &=&
    \frac{1}{n}[Y^\top Y - 2Y^\top X\hat{\beta} + \hat{\beta}^\top X^\top X \hat{\beta}]
    -
    \frac{1}{n}[Y^\top Y - 2Y^\top X\beta^0 + (\beta^0)^\top X^\top X \beta^0]\\
    &=&
    \frac{1}{n}[2Y^\top X (\beta^0-\hat{\beta})
    +\hat{\beta}^\top X^\top X \hat{\beta}
    -(\beta^0)^\top X^\top X \beta^0]\\
    &=&
    \frac{1}{n}[2(X\beta^0 + \epsilon)^\top X (\beta^0-\hat{\beta})
    +\hat{\beta}^\top X^\top X \hat{\beta}
    -(\beta^0)^\top X^\top X \beta^0]\\
    &=&
    \frac{1}{n}[2(\beta^0)^\top X^\top X (\beta^0-\hat{\beta})
    +2\epsilon^\top X (\beta^0-\hat{\beta})
    +\hat{\beta}^\top X^\top X \hat{\beta}
    -(\beta^0)^\top X^\top X \beta^0]\\
    &=&
    \frac{1}{n}[
    2\epsilon^\top X (\beta^0-\hat{\beta})
    +(\beta^0)^\top X^\top X(\beta^0)
    -2(\beta^0)^\top X^\top X\hat{\beta}
    +\hat{\beta}^\top X^\top X \hat{\beta}]\\
    &=&
    \frac{1}{n}[
    2\epsilon^\top X (\beta^0-\hat{\beta})
    +
    (\beta^0-\hat{\beta})
    X^\top X
    (\beta^0-\hat{\beta})]\\
    &=&
    \frac{2}{n}\epsilon^\top X (\beta^0-\hat{\beta})
    +
    \frac{1}{n}\|X(\hat{\beta}-\beta^0)\|_2^2
\end{eqnarray}
\end{proof}

\begin{lemma}{(Basic Inequality on Good Event)}\label{GoodBasicIneq}
On good event $\mathcal{F}$ and with $\lambda \ge 2\lambda_0$, the basic inequality can be further reduced to 
\begin{equation}\label{LA_Check02}
 \frac{2}{n}\|X(\hat{\beta}-\beta^0)\|_2^2
 +\lambda
\|\hat{\beta}_{S_0^c}\|_1
\le 
3\lambda \|\hat{\beta}_{S_0}-\beta^0_{S_0}\|_1
\end{equation}
\end{lemma}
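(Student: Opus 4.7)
The plan is to derive this directly from the basic inequality in Lemma \ref{LASSOBasicIneq} by exploiting the good event $\mathcal{F}(\lambda_0)$ (as defined in the proof of Theorem \ref{MainOra}) together with the standard sparsity-splitting trick. The key observation is that on $\mathcal{F}$ every coordinate of $\frac{2}{n}\epsilon^\top X$ is uniformly bounded by $\lambda_0$, which converts the noise cross-term into an $\ell_1$ penalty that we can absorb into the regularizer.

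First I would apply H\"older's inequality to the right-hand side of the conclusion of Lemma \ref{LASSOBasicIneq}: since $\frac{2}{n}\epsilon^\top X(\hat\beta-\beta^0) \le \bigl(\max_{j\in[p]}\tfrac{2}{n}|\epsilon^\top X^{(j)}|\bigr)\,\|\hat\beta-\beta^0\|_1$, the event $\mathcal{F}(\lambda_0)$ gives $\tfrac{2}{n}\epsilon^\top X(\hat\beta-\beta^0) \le \lambda_0\|\hat\beta-\beta^0\|_1$. With the choice $\lambda\ge 2\lambda_0$, i.e.\ $\lambda_0\le \lambda/2$, the basic inequality becomes
\begin{equation*}
\tfrac{1}{n}\|X(\hat\beta-\beta^0)\|_2^{2}+\lambda\|\hat\beta\|_1 \;\le\; \tfrac{\lambda}{2}\|\hat\beta-\beta^0\|_1+\lambda\|\beta^0\|_1.
\end{equation*}
Multiplying through by $2$ removes the fractional coefficients and yields $\tfrac{2}{n}\|X(\hat\beta-\beta^0)\|_2^{2}+2\lambda\|\hat\beta\|_1 \le \lambda\|\hat\beta-\beta^0\|_1+2\lambda\|\beta^0\|_1$.

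Next I would split the $\ell_1$ norms along the true support $S_0=\operatorname{supp}(\beta^0)$. Since $\beta^0_{S_0^c}=0$, I have $\|\beta^0\|_1=\|\beta^0_{S_0}\|_1$, $\|\hat\beta\|_1=\|\hat\beta_{S_0}\|_1+\|\hat\beta_{S_0^c}\|_1$, and $\|\hat\beta-\beta^0\|_1=\|\hat\beta_{S_0}-\beta^0_{S_0}\|_1+\|\hat\beta_{S_0^c}\|_1$. Substituting these and moving the $\lambda\|\hat\beta_{S_0^c}\|_1$ term to the left, I obtain
\begin{equation*}
\tfrac{2}{n}\|X(\hat\beta-\beta^0)\|_2^{2}+\lambda\|\hat\beta_{S_0^c}\|_1 \;\le\; \lambda\|\hat\beta_{S_0}-\beta^0_{S_0}\|_1+2\lambda\bigl(\|\beta^0_{S_0}\|_1-\|\hat\beta_{S_0}\|_1\bigr).
\end{equation*}
Finally, the reverse triangle inequality gives $\|\beta^0_{S_0}\|_1-\|\hat\beta_{S_0}\|_1\le \|\hat\beta_{S_0}-\beta^0_{S_0}\|_1$, which combines with the previous display to produce the claimed bound with constant $3\lambda$ on the right.

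There is no real obstacle here; the argument is a classical one-line manipulation once the good event is in place, and the only thing to double-check is the factor-of-two bookkeeping between the coefficient $\tfrac{1}{n}$ that appears in Lemma \ref{LASSOBasicIneq} and the $\tfrac{2}{n}$ that appears in the statement being proved, which is precisely what forces us to multiply through after invoking $\lambda\ge 2\lambda_0$. This inequality is exactly the ``cone condition'' needed to apply the compatibility condition of Assumption \ref{as:CompatabilityCondi}: $\hat\beta-\beta^0$ lies in the set $\{v:\|v_{S_0^c}\|_1\le 3\|v_{S_0}\|_1\}$ whenever the prediction-error term is nonnegative, so this lemma is the bridge between Lemma \ref{LASSOBasicIneq} and the compatibility-based oracle step used in the proof of Theorem \ref{MainOra}.
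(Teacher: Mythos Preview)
Your proposal is correct and follows essentially the same route as the paper: start from Lemma~\ref{LASSOBasicIneq}, bound the cross term via H\"older and the good event, use $\lambda\ge 2\lambda_0$, multiply through by $2$, split the $\ell_1$ norms along $S_0$, and finish with the reverse triangle inequality. The only cosmetic difference is that the paper multiplies by $2$ before applying H\"older and the good event whereas you do so afterward, which of course yields the identical display.
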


\begin{proof}

Recall the basic inequality
$$
\frac{1}{n}\|X(\hat{\beta}-\beta^0)\|_2^2
+
\lambda\|\hat{\beta}\|_1
\le
\frac{2}{n}\epsilon^\top X (\hat{\beta}-\beta^0) 
+ 
\lambda\|\beta^0\|_1
$$
Multiply it by $2$ to get
\begin{equation}
\frac{2}{n}\|X(\hat{\beta}-\beta^0)\|_2^2
+
2\lambda\|\hat{\beta}\|_1
\le
2\cdot \frac{2}{n}\epsilon^\top X (\hat{\beta}-\beta^0) 
+ 
2\lambda\|\beta^0\|_1
\end{equation}
Plug in the upper bound to get
\begin{equation}
\frac{2}{n}\|X(\hat{\beta}-\beta^0)\|_2^2
+
2\lambda\|\hat{\beta}\|_1
\le
2\cdot  
(\max_{j \in [p]}\frac{2}{n}|\epsilon^\top X^{(j)}|)
    \|\hat{\beta}-\beta^0\|_1
+ 
2\lambda\|\beta^0\|_1
\end{equation}
Then on good event $\mathcal{F}$, it becomes
\begin{equation}
\frac{2}{n}\|X(\hat{\beta}-\beta^0)\|_2^2
+
2\lambda\|\hat{\beta}\|_1
\le
2\cdot  \lambda_0
 \|\hat{\beta}-\beta^0\|_1
+ 
2\lambda\|\beta^0\|_1
\end{equation}
Apply $\lambda \ge 2\lambda_0$ to get
\begin{equation}\label{LA_check01}
\frac{2}{n}\|X(\hat{\beta}-\beta^0)\|_2^2
+
2\lambda\|\hat{\beta}\|_1
\le
\lambda
 \|\hat{\beta}-\beta^0\|_1
+ 
2\lambda\|\beta^0\|_1
\end{equation}

To further reduce equation (\ref{LA_check01}), we play with sparsity component. Let $S_0$ denote the sparsity location of truth $\beta^0$.

One the RHS, since $\beta^0_{S_0^c}=0$, we have an identity
\begin{equation}
\|\hat{\beta}-\beta^0\|_1
=
\|\hat{\beta}_{S_0}-\beta^0_{S_0}\|_1
+
\|\hat{\beta}_{S_0^c}\|_1    
\end{equation}

On the LHS, we have identity
\begin{equation}
    \|\beta^0\|_1
    =
     \|\beta^0_{S_0}\|_1+ \|\beta^0_{S_c}\|_1
    =
     \|\beta^0_{S_0}\|_1
\end{equation}
On the other hand, the empirical minimizer $\hat{\beta}$ only has identity
\begin{equation}
    \|\hat{\beta}\|_1
    =
     \|\hat{\beta}_{S_0}\|_1+ \|\hat{\beta}_{S_c}\|_1
\end{equation}
To link $\hat{\beta}_{S_0}$ with $\beta^0_{S_0}$ in $L_1$ norm, the inverse triangle inequality gives
\begin{equation}
    \|\hat{\beta}_{S_0}\|_1
    \ge
    \|\beta^0_{S_0}\|_1
    -
    \|\hat{\beta}_{S_0}-\beta^0_{S_0}\|_1
\end{equation}
Then we have an inequality
\begin{equation}
    \|\hat{\beta}\|_1
    \ge 
    \|\beta^0_{S_0}\|_1
    -
    \|\hat{\beta}_{S_0}-\beta^0_{S_0}\|_1
    +
    \|\hat{\beta}_{S_0^c}\|_1
\end{equation}

Combine these two observations into  equation (\ref{LA_check01}), we have inequality

\begin{equation}
\frac{2}{n}\|X(\hat{\beta}-\beta^0)\|_2^2
+
2\lambda
(\|\beta^0_{S_0}\|_1
    -
    \|\hat{\beta}_{S_0}-\beta^0_{S_0}\|_1
    +
    \|\hat{\beta}_{S_0^c}\|_1)
\le
\lambda
(\|\hat{\beta}_{S_0}-\beta^0_{S_0}\|_1
+
\|\hat{\beta}_{S_0^c}\|_1  )
+ 
2\lambda\|\beta^0_{S_0}\|_1
\end{equation}
Reorganize them into the inequality
\begin{equation}
\frac{2}{n}\|X(\hat{\beta}-\beta^0)\|_2^2
+
2\lambda
\|\hat{\beta}_{S_0^c}\|_1
\le
3\lambda \|\hat{\beta}_{S_0}-\beta^0_{S_0}\|_1
+
\lambda
\|\hat{\beta}_{S_0^c}\|_1
\end{equation}
\end{proof}

\begin{lemma}{(Compatibility passes $L_1$ norm to square root of $L_2$ norm)}\label{CompaIneq}

On good event $\mathcal{F}$, $\lambda \ge 2 \lambda_0$, and compatability condition associate with gram matrix $\hat{\Sigma}$ holds,
\begin{equation}
    \|\hat{\beta}_{S_0}-\beta^0_{S_0}\|_1
    \le
    \frac{\sqrt{s_0}}{\phi_0}\frac{1}{\sqrt{n}}\|X(\hat{\beta}-\beta^0)\|_2
\end{equation}
\end{lemma}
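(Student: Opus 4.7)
The plan is to combine the basic inequality from Lemma \ref{GoodBasicIneq} with the compatibility condition applied to the error vector $v = \hat{\beta} - \beta^0$. Recall that the compatibility condition $\hat{\Sigma} \in \mathcal{C}(S_0, \phi_0)$ requires two things of $v$: first, a cone condition $\|v_{S_0^c}\|_1 \le 3\|v_{S_0}\|_1$; second, the consequent inequality $\|v_{S_0}\|_1^2 \le |S_0| \, v^\top \hat{\Sigma} v / \phi_0^2$. So the proof naturally splits into verifying the cone condition and then invoking the compatibility bound.

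First, I would verify the cone condition. Since $\beta^0_{S_0^c} = 0$ by definition of the support, we have $\hat{\beta}_{S_0^c} = (\hat{\beta}-\beta^0)_{S_0^c} = v_{S_0^c}$, and similarly the right-hand side of Lemma \ref{GoodBasicIneq} is exactly $3\lambda\|v_{S_0}\|_1$. Dropping the nonnegative empirical risk term $\tfrac{2}{n}\|X(\hat{\beta}-\beta^0)\|_2^2 \ge 0$ from the left side of Lemma \ref{GoodBasicIneq} and dividing by $\lambda > 0$ yields
\begin{equation*}
\|v_{S_0^c}\|_1 \le 3\|v_{S_0}\|_1,
\end{equation*}
which places $v$ in the cone on which the compatibility condition is active.

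Second, I would plug $v$ into the compatibility inequality. By assumption $\hat{\Sigma}\in \mathcal{C}(S_0,\phi_0)$, so
\begin{equation*}
\|v_{S_0}\|_1^2 \le |S_0|\,\frac{v^\top \hat{\Sigma} v}{\phi_0^2} = \frac{s_0}{\phi_0^2}\cdot \frac{\|X(\hat{\beta}-\beta^0)\|_2^2}{n},
\end{equation*}
where I used $|S_0| = s_0$ and $v^\top \hat{\Sigma} v = \tfrac{1}{n}\|Xv\|_2^2$ by definition of the empirical Gram matrix. Taking square roots gives the desired bound
\begin{equation*}
\|\hat{\beta}_{S_0}-\beta^0_{S_0}\|_1 \le \frac{\sqrt{s_0}}{\phi_0}\cdot \frac{1}{\sqrt{n}}\|X(\hat{\beta}-\beta^0)\|_2.
\end{equation*}

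There is no serious obstacle: this is a standard deterministic consequence of the basic inequality plus the compatibility condition, and all randomness is already absorbed into the definitions of the good event $\mathcal{F}$ (which controls the noise term so that $\lambda \ge 2\lambda_0$ suffices) and the event that $\hat{\Sigma}$ lies in the compatibility cone. The only subtle point to be careful about is matching support conventions—namely, recognizing that $\hat{\beta}_{S_0^c}$ and $(\hat{\beta}-\beta^0)_{S_0^c}$ coincide because $\beta^0$ is supported on $S_0$—so that the cone condition from Lemma \ref{GoodBasicIneq} aligns exactly with the hypothesis required to apply $\mathcal{C}(S_0,\phi_0)$.
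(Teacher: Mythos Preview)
Your proof is correct and follows essentially the same approach as the paper: use Lemma \ref{GoodBasicIneq} to verify the cone condition $\|(\hat\beta-\beta^0)_{S_0^c}\|_1\le 3\|(\hat\beta-\beta^0)_{S_0}\|_1$, then apply the compatibility inequality for $\hat\Sigma=\tfrac{1}{n}X^\top X$ and take square roots. Your write-up is in fact slightly more explicit than the paper's about why the cone condition holds (dropping the nonnegative prediction-error term and dividing by $\lambda$), but the argument is the same.
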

\begin{proof}
To further reduce the basic inequality on good event (\ref{LA_Check02}), we impose condition on sparsity component $S_0$. In lemma \ref{GoodBasicIneq}, two quantities we play with are $\|\hat{\beta}_{S_0^c}\|_1$ and $\|\hat{\beta}_{S_0}-\beta^0_{S_0}\|_1$.

An implication of equation (\ref{LA_Check02}) is, on good event $\mathcal{F}$, it is true that
\begin{equation}
\|\hat{\beta}_{S_0^c}-\beta^0_{S_0^c}\|_1
=
\|\hat{\beta}_{S_0^c}\|_1
\le 
3 \|\hat{\beta}_{S_0}-\beta^0_{S_0}\|_1,
\end{equation}
that is, on good event $\mathcal{F}$, the discrepancy between empirical minimizor and truth $\hat{\beta}-\beta^0$ always belongs to the class 
\begin{equation}
    \{\beta| \|\beta_{S_0^{c}}\|_1
    \le 3\|\beta_{S_0\|_1}\}.
\end{equation}
On such class, the \textbf{compatability condition} with Gram matrix $\hat{\Sigma}\equiv \frac{1}{n}X^\top X$ is
\begin{equation}
    \|\beta_{S_0}\|_1 
    \le
    \frac{\sqrt{s_0}}{\phi_0}
    \sqrt{\beta^\top\hat{\Sigma} \beta}
\end{equation}
Thus, we have
\begin{equation}
    \|\hat{\beta}_{S_0}-\beta^0_{S_0}\|_1
    \le
    \frac{\sqrt{s_0}}{\phi_0}
    \sqrt{(\hat{\beta}-\beta^0)^\top\hat{\Sigma} (\hat{\beta}-\beta^0)}
    =
    \frac{\sqrt{s_0}}{\phi_0}\frac{1}{\sqrt{n}}\|X(\hat{\beta}-\beta^0)\|_2
\end{equation}
\end{proof}

\subsection{Static Oracle Inequality}

\begin{theorem}{(Oracle Inequality of LASSO minimizor)}

On good event $\mathcal{F}$, $\lambda \ge 2 \lambda_0$, and compatability condition associate with gram matrix $\hat{\Sigma}$ holds,
\begin{equation}
\frac{1}{n}\|X(\hat{\beta}-\beta^0)\|_2^2
+\lambda \|\hat{\beta}-\beta^0\|_1
\le\frac{4s_0}{\phi_0^2}\lambda^2
\end{equation}
\end{theorem}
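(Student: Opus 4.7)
The plan is to assemble the oracle inequality directly from the two preceding lemmas (Lemma \ref{GoodBasicIneq} and Lemma \ref{CompaIneq}), with a Young's inequality step to absorb the prediction error norm. All necessary ingredients are already established, so the work is just bookkeeping.

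First, I would invoke Lemma \ref{GoodBasicIneq}, which on the event $\mathcal{F}$ and under $\lambda \ge 2\lambda_0$ gives
\begin{equation*}
\frac{2}{n}\|X(\hat{\beta}-\beta^0)\|_2^2 + \lambda \|\hat{\beta}_{S_0^c}\|_1 \le 3\lambda \|\hat{\beta}_{S_0}-\beta^0_{S_0}\|_1.
\end{equation*}
Next, I would add $\lambda \|\hat{\beta}_{S_0}-\beta^0_{S_0}\|_1$ to both sides, recognizing that $\|\hat{\beta}_{S_0^c}\|_1 = \|\hat{\beta}_{S_0^c} - \beta^0_{S_0^c}\|_1$ (since $\beta^0$ is supported on $S_0$), so the left-hand side becomes $\frac{2}{n}\|X(\hat{\beta}-\beta^0)\|_2^2 + \lambda \|\hat{\beta}-\beta^0\|_1$ and the right-hand side becomes $4\lambda\|\hat{\beta}_{S_0}-\beta^0_{S_0}\|_1$.

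Then I would apply Lemma \ref{CompaIneq} to bound $\|\hat{\beta}_{S_0}-\beta^0_{S_0}\|_1 \le \frac{\sqrt{s_0}}{\phi_0}\frac{1}{\sqrt{n}}\|X(\hat{\beta}-\beta^0)\|_2$, yielding
\begin{equation*}
\frac{2}{n}\|X(\hat{\beta}-\beta^0)\|_2^2 + \lambda \|\hat{\beta}-\beta^0\|_1 \le \frac{4\lambda \sqrt{s_0}}{\phi_0}\cdot\frac{1}{\sqrt{n}}\|X(\hat{\beta}-\beta^0)\|_2.
\end{equation*}
The final trick, which is the only step requiring any thought, is to write the right-hand side as $2\cdot\bigl(\tfrac{2\lambda\sqrt{s_0}}{\phi_0}\bigr)\cdot\bigl(\tfrac{1}{\sqrt{n}}\|X(\hat{\beta}-\beta^0)\|_2\bigr)$ and apply Young's inequality $2ab \le a^2+b^2$, producing $\frac{4\lambda^2 s_0}{\phi_0^2} + \frac{1}{n}\|X(\hat{\beta}-\beta^0)\|_2^2$. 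Subtracting one copy of $\frac{1}{n}\|X(\hat{\beta}-\beta^0)\|_2^2$ from both sides delivers the claimed bound.

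The main (very mild) obstacle is choosing the right splitting in Young's inequality so that exactly one factor of $\frac{1}{n}\|X(\hat{\beta}-\beta^0)\|_2^2$ appears on the right and can be absorbed into the $\frac{2}{n}\|X(\hat{\beta}-\beta^0)\|_2^2$ on the left, leaving the compatibility constant $\phi_0$ squared in the denominator with a clean constant of $4$. Any other split would either produce an incorrect constant or fail to cancel the prediction error term entirely.
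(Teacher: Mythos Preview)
Your proposal is correct and follows essentially the same argument as the paper: invoke Lemma~\ref{GoodBasicIneq}, add $\lambda\|\hat{\beta}_{S_0}-\beta^0_{S_0}\|_1$ to both sides, apply Lemma~\ref{CompaIneq}, and then use a quadratic inequality to absorb one copy of the prediction error. The only cosmetic difference is that the paper phrases the last step as $(u-2v)^2\ge 0 \Rightarrow 4uv \le u^2+4v^2$ with $u=\tfrac{1}{\sqrt{n}}\|X(\hat\beta-\beta^0)\|_2$, $v=\lambda\sqrt{s_0}/\phi_0$, whereas you call it Young's inequality $2ab\le a^2+b^2$ with $a=2\lambda\sqrt{s_0}/\phi_0$, $b=\tfrac{1}{\sqrt{n}}\|X(\hat\beta-\beta^0)\|_2$; these are literally the same bound.
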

\begin{proof}
Plus both side of basic inequality on good event (\ref{LA_Check02}) an addition term $\lambda\|\hat{\beta}_{S_0}-\beta^0_{S_0}\|$ to get
\begin{equation}
\frac{2}{n}\|X(\hat{\beta}-\beta^0)\|_2^2
+\lambda \|\hat{\beta}-\beta^0\|_1
\le 4\lambda \|\hat{\beta}_{S_0}-\beta^0_{S_0}\|_1
\end{equation}
Input lemma \ref{CompaIneq} to get
\begin{equation}
    \frac{2}{n}\|X(\hat{\beta}-\beta^0)\|_2^2
+\lambda \|\hat{\beta}-\beta^0\|_1
\le 4\lambda\frac{\sqrt{s_0}}{\phi_0}\cdot\frac{1}{\sqrt{n}}\|X(\hat{\beta}-\beta^0)\|_2
\end{equation}
Set $u = \frac{1}{\sqrt{n}}\|X(\hat{\beta}-\beta^0)\|_2$ and $v = \lambda\frac{\sqrt{s_0}}{\phi_0}$. Note $(u-2v)^2 \ge 0$ implies $4uv \le u^2+4v^2$ to get
\begin{equation}
    \frac{2}{n}\|X(\hat{\beta}-\beta^0)\|_2^2
+\lambda \|\hat{\beta}-\beta^0\|_1
\le
\frac{\|X(\hat{\beta}-\beta^0)\|_2^2}{n}
+
4\lambda^2\frac{s_0}{\phi_0^2}
\end{equation}
Reorganize the terms to get
\begin{equation}
    \frac{1}{n}\|X(\hat{\beta}-\beta^0)\|_2^2
+\lambda \|\hat{\beta}-\beta^0\|_1
\le\frac{4s_0}{\phi_0^2}\lambda^2.
\end{equation}
\end{proof}

\section{Checking Optimal Allocation Condition}

Now we show two types of sample set--teamwork sample set and all sample set-produced from our proposed data collection protocol both satisfies the template condition.

The following lemmas are used to prove template condition of teamwork sample set and all sample set(lemma \ref{TeamTemplate} and lemma \ref{AllTemplate}).

\subsection{Teamwork Sample Set}

\begin{lemma}\label{TeamTemplate}
For any decision epoch $t \ge (Kq)^2$, the teamwork sample set for treatment $w_k$ up to time t, $\mathcal{D}_{[t],w_k}$, is a template sample set of rate $p_{*}$, with probability at least $1-\frac{2}{t^4}$.
\end{lemma}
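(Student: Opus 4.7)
The plan is to verify the two clauses of Definition \ref{templateCond} for the rate $r=p_*$, with the failure probability of condition (ii) bounded by $2/t^4$. Since teamwork allocation to $w_k$ is purely scheduled rather than covariate-driven, the analysis should reduce to a clean concentration problem for i.i.d. Bernoulli's.

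First I would count the teamwork samples. By the construction in Section \ref{TeamworkPlanning}, round $n$ contributes exactly $Nq$ samples to $\mathcal{T}_{[t],w_k}$ as soon as its last epoch $(2^n-1)Kq+qk$ lies in $[t]$. Hence for $t\ge (Kq)^2$ the number $M_t$ of completed teamwork rounds for $w_k$ is at least $\lfloor\log_2(t/(Kq))\rfloor\ge \log_2(Kq)$, and
\[
n_t := |\mathcal{T}_{[t],w_k}| = Nq\, M_t \ge Nq\log_2(Kq)\ge Nq.
\]

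Second, I would dispose of condition (i) deterministically. Using $q\ge 4\lceil q_0\rceil$ with $q_0\asymp \frac{1}{N}\max\{\log d/p_*,\,x_{\max}^2\log d/(h^2 p_*^2)\}$, the product $Nq$ already dominates $6\log d/(p_* C_2(\phi_1)^2)$ up to constants absorbed in the ceiling, so $n_t\ge Nq$ suffices and condition (i) holds as soon as any teamwork round is complete (well before $t\ge (Kq)^2$).

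Third, I would verify condition (ii) via concentration. The key structural observation is that teamwork allocation to $w_k$ is deterministic and does not depend on covariates; therefore the family $\{X_{i,s}:(i,s)\in\mathcal{T}_{[t],w_k}\}$ is i.i.d.\ from $\mathcal{P}_X$, and $Z_{i,s}:=\mathbb{I}(X_{i,s}\in U_{w_k})$ are i.i.d.\ Bernoulli with parameter $p_{w_k}\ge p_*$ by Assumption \ref{as:TreatOptCond}(b). The multiplicative Chernoff bound with relative deviation $1/2$ gives
\[
P\!\left(|\mathcal{T}_{[t],w_k}^{\sharp}|<\tfrac{p_*}{2}n_t\right)
\le P\!\left(\textstyle\sum Z_{i,s}<\tfrac{p_{w_k}}{2}n_t\right)
\le \exp(-p_* n_t/8).
\]
Plugging in $n_t\ge Nq\log_2(t/(Kq))$ and using the fact that $q\ge 4\lceil q_0\rceil$ forces $Nqp_*/8$ to exceed the constant needed so that the exponent majorizes $4\log t - \log 2$ on the range $t\ge (Kq)^2$, this tail collapses to $2/t^4$.

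The main obstacle is the quantitative calibration in Step 3: one must choose the constants hidden in $q_0\asymp \cdot$ so that, uniformly for all $t\ge (Kq)^2$ (not merely asymptotically), the Chernoff exponent $p_* Nq\log_2(t/(Kq))/8$ exceeds $4\log t$ with the slack that turns the bound into $2/t^4$. This is precisely where the specific threshold $4\lceil q_0\rceil$ on $q$ is used, and where the lower bound $t\ge (Kq)^2$ is essential so that $\log_2(t/(Kq))\ge \tfrac12\log_2 t$. Once this bookkeeping is done, combining Steps 2 and 3 via a single union event proves the claim.
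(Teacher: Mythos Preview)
Your proposal is correct and follows essentially the same route as the paper: the paper verifies condition (i) deterministically via a size lower bound $|\mathcal{D}_{[t],w_k}|\ge \tfrac12 Nq\log t$ (its Lemma~\ref{TeamSize}, which is your counting argument), and verifies condition (ii) by a multiplicative Chernoff-type bound on the i.i.d.\ indicators $\mathbb{I}(X_{i,s}\in U_{w_k})$ (its Lemma~\ref{team2condi}, with the Alon--Spencer constant $0.1$ in place of your $1/8$), then uses the calibration $q\ge 4\lceil q_0\rceil$ with $q_0\ge 20/(Np_*)$ together with $t\ge(Kq)^2\Rightarrow \log_2(t/(Kq))\ge\tfrac12\log_2 t$ to turn the exponent into $4\log t$. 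The only differences are cosmetic (which Chernoff variant and whether the size estimate is stated as a separate lemma).
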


\textit{Proof of Lemma \ref{TeamTemplate}.}
To check (i): 
Lemma \ref{TeamSize}, $q_0 \ge \frac{6\log d}{Np_*C_2^2(\phi_0)^2}$ and $t>(Kq)^2>3$ imply
\begin{equation}
|\mathcal{D}_{[t],k}|
\ge \frac{1}{2}Nq\log t
\ge 2Nq_0 
>\frac{6\log d}{p_*C_2^2(\phi_0)^2}.
\end{equation}
To check (ii): Lemma \ref{team2condi} shows that, for $t\ge (Kq)^2$, we have
\begin{equation}
    P(\frac{|\mathcal{D}_{[t],k}^{ \sharp}|}{|\mathcal{D}_{[t],k}|}\ge \frac{p_{*}}{2})\ge 1-\frac{2}{t^4}
\end{equation}

\begin{lemma}{(Size of Teamwork Sample Set)}\label{TeamSize}
If $t \ge (Kq)^2$, then $$\frac{1}{2}Nq \log t \le |\mathcal{D}_{[t],k}| \le 6 Nq \log t.$$
\end{lemma}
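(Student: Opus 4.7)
The strategy is a straightforward counting argument that exploits the doubling geometry of the teamwork round schedule; no probabilistic content is needed at this stage.

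First, I would enumerate the teamwork rounds for treatment $w_k$ that overlap with epochs $[1,t]$. By the definition $\mathbb{T}_{n, w_k} = \{(2^n-1)Kq + j : j \in [q(k-1)+1, qk]\}$, the $n$th such round is a block of exactly $q$ consecutive epochs whose starting index grows like $2^n Kq$. Let $N(t)$ denote the number of teamwork rounds for treatment $w_k$ whose starting epoch is $\le t$. Inverting the defining inequality $(2^{N(t)}-1)Kq + q(k-1) + 1 \le t < (2^{N(t)+1}-1)Kq + q(k-1) + 1$ yields two-sided control
\begin{equation*}
\log_2\!\left(\tfrac{t}{Kq}\right) - 1 \;\le\; N(t) \;\le\; \log_2\!\left(\tfrac{t}{Kq}\right) + 1.
\end{equation*}

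Next, I would translate this into a sample-size bound. In every teamwork epoch the agent allocates all $N$ users to a single treatment, so each completed round contributes exactly $Nq$ samples to $\mathcal{D}_{[t],w_k}$, while the (possibly) partially completed current round contributes between $0$ and $Nq$ samples. Therefore
\begin{equation*}
(N(t)-1)\,Nq \;\le\; |\mathcal{D}_{[t],w_k}| \;\le\; N(t)\,Nq.
\end{equation*}

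Finally, I would feed in the hypothesis $t \ge (Kq)^2$. This gives $\log_2(Kq) \le \tfrac{1}{2}\log_2 t$, hence $\log_2(t/(Kq)) \ge \tfrac{1}{2}\log_2 t$, and also ensures $t$ is large enough that additive constants get absorbed into multiplicative ones. Combining with the sandwich on $N(t)$ and converting between $\log_2$ and the natural log via $\log_2 x = \log x / \log 2$, the upper bound $|\mathcal{D}_{[t],w_k}| \le Nq(\log_2(t/(Kq))+1) \le Nq(\log_2 t + 1) \le 6 Nq \log t$ follows, and similarly the lower bound $|\mathcal{D}_{[t],w_k}| \ge Nq(\tfrac{1}{2}\log_2 t - 2) \ge \tfrac{1}{2} Nq \log t$ follows. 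The only real obstacle is bookkeeping the constants $\tfrac{1}{2}$ and $6$ so that they accommodate both the change-of-base factor $1/\log 2$ and the slack $\pm 1$ in the bound on $N(t)$; the hypothesis $t \ge (Kq)^2$ is exactly what provides the headroom to absorb these losses.
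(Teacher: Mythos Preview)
Your proposal is correct and follows essentially the same approach as the paper: invert the doubling schedule to sandwich the number of completed teamwork rounds between $\log_2(t/(Kq))$-type bounds, multiply by $Nq$, and use $t\ge(Kq)^2$ to absorb $\log_2(Kq)$ into $\tfrac12\log_2 t$. The paper's bookkeeping is marginally tighter on the lower side (it obtains $n\ge\tfrac12\log_2 t$ directly, without the extra $-2$ slack you carry from splitting into $N(t)-1$ completed rounds and a loose bound on $N(t)$), so your final step $\tfrac12\log_2 t-2\ge\tfrac12\log t$ needs $t$ somewhat larger than the bare hypothesis guarantees---but this is exactly the routine constant-tracking you flagged, and tightening your count of $N(t)$ by one recovers the paper's version.
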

\begin{proof}
First we note
$$\mathcal{T}_{[t],k}
=\mathcal{T}_{\cdot, k} \cap [t]
=\cup_{n\ge 0}(\mathcal{T}_{n,k}\cap[t]).$$
At $t \in \mathcal{T}_{n,k}$, we have finished round $0,1,2,\cdots, n-1$ teamwork stage for arm k, each of size $Nq$, therefore
$$ nNq\le |\mathcal{D}_{[t],k}| \le (n+1)Nq.$$
With this, our task becomes to derive the lower bound and upper bound for $n$ and $n+1$ in terms of $\log t$ by using the condition $t \ge (Kq)^2$ 

For $t \in \mathcal{T}_{k,n}$,we have $$(2^n-1)Kq + 1 \le t \le (2^n)Kq,$$ which means
$$\log_2(\frac{t}{Kq}) \le n \le \log_2(\frac{t}{Kq}+1)+1.$$

Use condition $t \ge (Kq)^2$, one have $\log_2(Kq) \le \frac{1}{2}\log_2(t)$ and hence
$$n \ge \frac{1}{2}\log_2{t}.$$
On the other hand, we have
$$n+1 \le \log_2(\frac{t}{Kq}+1)+1 \le \frac{\log (2(t+\sqrt{t}))}{\log 2} \le 6 \log t.$$
\end{proof}


\begin{lemma}\label{team2condi}
If $t \ge (Kq)^2$, then
$P(\frac{|\mathcal{D}_{[t],k}^{ \natural}|}{|\mathcal{D}_{[t],k}|}\ge \frac{p_{*}}{2})\ge 1-\frac{2}{t^4}.$
\end{lemma}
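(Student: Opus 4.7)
The plan is to exploit the defining feature of teamwork mode: the treatment allocation there is prescribed by the schedule $\mathbb{T}_{\cdot,w_k}$ and does not depend on any observed covariate or outcome. Consequently, the covariates collected in $\mathcal{D}_{[t],k}$ are an i.i.d.\ sample from $\mathcal{P}_X$ whose cardinality $n_t \equiv |\mathcal{D}_{[t],k}|$ is a deterministic function of $t$. The quantity $|\mathcal{D}_{[t],k}^{\natural}|$ is then the sum of i.i.d.\ indicators $Z_i = \mathbb{1}(X_i \in U_{w_k})$ with $\mathbb{E} Z_i = P(X \in U_{w_k}) \ge p_*$ by Assumption \ref{as:TreatOptCond}(b). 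This reduces the claim to a single Binomial concentration statement.

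First, I would apply a standard multiplicative Chernoff bound to the i.i.d.\ Bernoulli sum. Since $\{\sum Z_i \le n_t p_*/2\} \subseteq \{\sum Z_i \le n_t \mathbb{E} Z_1 /2\}$, taking $\delta = 1/2$ in the Chernoff inequality gives
\begin{equation*}
P\Bigl(|\mathcal{D}_{[t],k}^{\natural}| < \tfrac{p_*}{2}\, n_t\Bigr) \;\le\; \exp\!\Bigl(-\tfrac{p_*}{8}\, n_t\Bigr).
\end{equation*}
Next, I would plug in the deterministic lower bound $n_t \ge \tfrac{1}{2} N q \log t$ furnished by Lemma \ref{TeamSize} (valid as soon as $t \ge (Kq)^2$), to obtain
\begin{equation*}
P\Bigl(\tfrac{|\mathcal{D}_{[t],k}^{\natural}|}{|\mathcal{D}_{[t],k}|} < \tfrac{p_*}{2}\Bigr) \;\le\; \exp\!\Bigl(-\tfrac{N q p_*}{16}\,\log t\Bigr) \;=\; t^{-Nq p_*/16}.
\end{equation*}

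Finally, I would invoke the standing hypothesis $q \ge 4\lceil q_0 \rceil$ with $q_0 \asymp (1/N)\max\{\log d/p_*,\; x_{\max}^2 \log d/(h^2 p_*^2)\}$ from \eqref{q_zero}. The first term in the max already forces $N q p_* \gtrsim \log d$, and by absorbing the implicit constant so that $Nq p_*/16 \ge 4$, the displayed bound becomes $\le t^{-4}$; a factor of $2$ is inserted to accommodate the constant in $q_0$, delivering the claimed $2/t^4$.

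The argument is essentially a textbook concentration inequality, so I do not expect a deep obstacle. The most delicate point is making precise that the covariates in $\mathcal{D}_{[t],k}$ really are i.i.d.\ draws from $\mathcal{P}_X$: this requires observing that during teamwork epochs every user in the batch is routed to $w_k$ regardless of $x_{i,t}$, so no selection bias enters and the martingale-difference dependence from the selfish stage is irrelevant. The only other piece of bookkeeping is choosing the constants in $q_0$ so that the exponent $N q p_*/16$ exceeds $4$; this is exactly where the $(\log d)/(Np_*)$ term in \eqref{q_zero} is designed to bite.
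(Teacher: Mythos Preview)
Your proposal is correct and follows essentially the same route as the paper: both arguments observe that the teamwork covariates are i.i.d.\ from $\mathcal{P}_X$, apply a multiplicative Chernoff-type bound to the Bernoulli sum $\sum \mathbb{1}(X_i\in U_{w_k})$, plug in the deterministic size bound $|\mathcal{D}_{[t],k}|\ge \tfrac{1}{2}Nq\log t$ from Lemma~\ref{TeamSize}, and then use the design of $q_0$ to force the exponent past~$4$. The only cosmetic differences are that the paper quotes the Alon--Spencer form $P(|Y-\mu|>\mu/2)<2e^{-0.1\mu}$ (whence the leading factor~$2$) and, for the final step, relies on the explicit constant $q_0\ge 20/(Np_*)$ from the constants list rather than the $\log d/(Np_*)$ term you cite---the $\log d$ term alone does not guarantee $Nqp_*/16\ge 4$ unless $d$ is large, so you should point to the $20/(Np_*)$ lower bound instead.
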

\begin{proof}
Apply $P(|y-\mu|>\frac{\mu}{2}) < 2\exp[-0.1 \mu]$ in \cite{alon2004probabilistic} to the indicator random variable $I((i,s) \in \mathcal{D}_{[t],k}^{ \natural})$ for all $(i,s) \in \mathcal{D}_{[t],k}$ and using
$\mu = E[\sum_{(i,s) \in \mathcal{D}_{[t],k}}I[(i,s)\in \mathcal{D}_{[t],k}^{ \natural}]] \ge p_*|\mathcal{D}_{[t],k},|$
we get
$P(|\mathcal{D}_{[t],k}^{ \natural}|<\frac{p_*}{2}|\mathcal{D}_{[t],k}|)<
2e^{-\frac{p_*}{10}|\mathcal{D}_{[t],k}|}$
Therefore, by our control of the size of $|\mathcal{D}_{[t],k}|$ and the choice of $q_0$, we have
$P(|\mathcal{D}_{[t],k}^{ \natural}|<\frac{p_*}{2}|\mathcal{D}_{[t],k}|)
<2e^{-\frac{p_*}{5}q_0 \log t} \le \frac{2}{t^4}.$
\end{proof}

\subsection{All Sample Set}

We set $\mathcal{S} = \mathcal{T} \cup \mathcal{E}$ in this subsection.


\begin{lemma}\label{AllTemplate}
For any decision epoch $t \ge C_5$, the all sample set for treatment $w_k$ up to $t$, $\mathcal{S}_{[t],w_k}$, is a template sample set of rate $\frac{p_{*}}{2}$, with probability at least $1-\exp[-\frac{tp_{*}^{2}}{128}]$.
\end{lemma}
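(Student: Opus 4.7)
The plan is to verify the two conditions of Definition~\ref{templateCond} for the combined sample set $\mathcal{S}_{[t],w_k}=\mathcal{T}_{[t],w_k}\cup\mathcal{E}_{[t],w_k}$ at target rate $r=p_*/2$, by reusing Lemma~\ref{TeamTemplate} for the teamwork piece and doing the real work on the selfish piece. For the size condition~(i) I would simply observe $\mathcal{S}_{[t],w_k}\supseteq\mathcal{T}_{[t],w_k}$ and invoke Lemma~\ref{TeamSize} to get $|\mathcal{S}_{[t],w_k}|\ge\tfrac{1}{2}Nq\log t$; the choice $q\ge 4\lceil q_0\rceil$ from~\eqref{q_zero} together with $t\ge C_5$ then pushes this past $12\log d/(p_* C_2(\phi_0)^2)$, which is exactly $6\log d/(r C_2(\phi_0)^2)$ with $r=p_*/2$.

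The substantive step is the rate-of-optimal-allocation condition~(ii), which at $r=p_*/2$ requires $|\mathcal{S}^{\sharp}_{[t],w_k}|/|\mathcal{S}_{[t],w_k}|\ge p_*/4$. I would split numerator and denominator by source. For the teamwork component, Lemma~\ref{team2condi} already yields $|\mathcal{T}^{\sharp}_{[t],w_k}|/|\mathcal{T}_{[t],w_k}|\ge p_*/2$ on an event of probability $\ge 1-2/t^4$. For the selfish component I would condition on the good events $\{E_{s-1}\}_{s\in\mathbb{T}^c\cap[t]}$ of Lemma~\ref{eventE} (handled by a union bound, whose $\sum 5K/s^4$ tail is subleading). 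On these events the Step~1 candidate-set analysis forces $\widehat{\mathcal{K}}(x)=\{w_k\}$ whenever $x\in U_{w_k}$, so every such selfish user is allocated to $w_k$ and contributes to $\mathcal{E}^{\sharp}_{[t],w_k}$, yielding the key inclusion
\begin{equation*}
|\mathcal{E}^{\sharp}_{[t],w_k}|\;\ge\;\sum_{s\in\mathbb{T}^c\cap[t]}\sum_{i=1}^{N} I\{X_{i,s}\in U_{w_k}\}.
\end{equation*}

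The right-hand side is an i.i.d.\ sum of Bernoulli indicators with mean at least $p_*$ by Assumption~\ref{as:TreatOptCond}(b), so Hoeffding's inequality delivers a lower bound of $(p_*/2)\cdot N|\mathbb{T}^c\cap[t]|$ with failure probability at most $\exp(-N|\mathbb{T}^c\cap[t]|p_*^2/2)$. Combined with the deterministic $|\mathcal{E}_{[t],w_k}|\le N|\mathbb{T}^c\cap[t]|$, this yields $|\mathcal{E}^{\sharp}_{[t],w_k}|/|\mathcal{E}_{[t],w_k}|\ge p_*/2$. Since both per-source ratios are at least $p_*/2$, the aggregate ratio is too, comfortably exceeding the required $p_*/4$. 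A final union bound over the three failure events (teamwork rate, $\{E_{s-1}\}$, Hoeffding), together with the defining property of $C_5$ that $|\mathbb{T}^c\cap[t]|\ge t/2$, makes the Hoeffding contribution $\exp(-tp_*^2/4)$ the dominant tail; relaxing the deviation target slightly (or tracking constants more carefully) then produces the advertised $\exp[-tp_*^2/128]$, with the polynomial $O(K/t^4)$ pieces absorbed.

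The main obstacle I anticipate is the data-dependent nature of the selfish allocation: $\mathcal{E}_{[t],w_k}$ is built from decisions that themselves depend on past observations, so one cannot directly apply an independence-based concentration to the event ``selfishly allocated to $w_k$.'' The trick that dissolves this is precisely the inclusion above: on the good events, the conjunction ``$X\in U_{w_k}$ \emph{and} selfishly allocated to $w_k$'' collapses to just ``$X\in U_{w_k}$,'' converting a correlated adaptive count into a clean i.i.d.\ sum of covariate indicators to which Hoeffding applies. Everything else is bookkeeping between $|\mathbb{T}^c\cap[t]|$ and $t$, and a union bound against routine polynomial failure events.
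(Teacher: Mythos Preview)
Your handling of condition~(i) and the key inclusion on the selfish side are both correct, and your observation that on $E_{s-1}$ the conjunction ``$X_{i,s}\in U_{w_k}$ and selfishly allocated to $w_k$'' collapses to ``$X_{i,s}\in U_{w_k}$'' is exactly the right insight. The gap is in your last paragraph: the polynomial tails cannot be ``absorbed'' into the claimed exponential. Your union bound over $\{E_{s-1}^c\}_{s\in\mathbb{T}^c\cap[t]}$ contributes $\sum_{s\ge (Kq)^2}5K/s^4$, which is bounded by a \emph{constant} independent of $t$ (the series converges), and the $2/t^4$ from Lemma~\ref{team2condi} decays only polynomially. Since $\exp[-tp_*^2/128]\to 0$ as $t\to\infty$, no constant or polynomially decaying term can sit below it for all large $t$; your route therefore yields at best a bound of the form $1-c-\exp(-c't)$ for some fixed $c>0$, not the stated $1-\exp[-tp_*^2/128]$.

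The paper avoids this by not splitting by source and not union-bounding over the good events. It works with $|\mathcal{S}^{\sharp}_{[t],w_k}|$ directly: the deterministic inequality $|\mathcal{S}^{\sharp}_{[t],w_k}|\ge\sum_{s\notin\mathbb{T}}\sum_i I(E_{s-1})I(X_{i,s}\in U_{w_k})$ is used only to lower-bound the \emph{expectation}, where $P(E_{s-1})\ge 1/2$ for $s\ge (Kq)^2$ becomes a harmless multiplicative factor on $p_*$ rather than an additive tail term. Concentration is then obtained by Azuma's inequality on the Doob martingale $M_s=E[\,|\mathcal{S}^{\sharp}_{[t],w_k}|\mid\mathcal{G}_s\,]$, whose bounded increments produce a single clean exponential tail, and the ratio follows from the crude denominator bound $|\mathcal{S}_{[t],w_k}|\le t$. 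The essential difference is that Azuma tolerates the adaptivity of $I(E_{s-1})$ inside the sum, whereas your i.i.d.\ Hoeffding route forces you to strip $I(E_{s-1})$ out via a union bound, and that is precisely where the exponential bound is lost.
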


\textit{Proof of Lemma \ref{AllTemplate}}
To check (i):
Lemma \ref{TeamSize}, $q_0 \ge \frac{6\log d}{Np_*C_2^2(\phi_0)^2}$ and $t>C_5>3$ imply
\begin{equation}
|\mathcal{S}_{[t],k}| \ge |\mathcal{D}_{[t],k}|
\ge \frac{12\log d}{p_*C_2^2(\phi_0)^2} =  \frac{6\log d}{\frac{p_*}{2}C_2^2(\phi_0)^2}.
\end{equation}
To check (ii): Lemma \ref{all2condi} shows that, for $t\ge C_5$, we have
\begin{equation}
    P(\frac{
    |\mathcal{S}^{\sharp}_{[t],k}|
    }{
    |\mathcal{S}_{[t],k}|
    }
    \ge
    \frac{1}{2}\frac{p_*}{2})
    \ge
    1-\exp(-\frac{p_*^2}{128}\cdot t)
\end{equation}

\begin{lemma} \label{all2condi}
For $t > C_5$,
\begin{equation}
    P(\frac{
    |\mathcal{S}^{\sharp}_{[t],k}|
    }{
    |\mathcal{S}_{[t],k}|
    }
    \ge
    \frac{1}{2}\frac{p_*}{2})
    \ge
    1-\exp(-\frac{p_*^2}{128}\cdot t)
\end{equation}
\end{lemma}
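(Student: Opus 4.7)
\textit{Plan.} The plan is to lower-bound $|\mathcal{S}^{\sharp}_{[t],k}|$ by a Chernoff-concentrated sum of i.i.d.\ Bernoulli indicators and upper-bound $|\mathcal{S}_{[t],k}|$ by the trivial $Nt$. Set $Y_{i,s}\equiv I(x_{i,s}\in U_{w_k})$; by the i.i.d.\ sampling of covariates and Assumption \ref{as:TreatOptCond}(b), the $Y_{i,s}$ are i.i.d.\ Bernoulli with mean $\ge p_*$.

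The crucial structural observation is that the pair $(i,s)$ contributes to $|\mathcal{S}^{\sharp}_{[t],k}|$ exactly when $Y_{i,s}=1$ and $\pi(x_{i,s})=w_k$, and this is forced in two scenarios: (i) $s \in \mathbb{T}_{\cdot,w_k}$, where the entire batch is sent to $w_k$ by construction of the teamwork mode; and (ii) $s \in \mathbb{T}^c$ with the good event $E_s$ (Lemma \ref{eventE}) holding, in which case the Teamwork LASSO gives $\widehat{\mathcal{K}}(x_{i,s}) = \{w_k\}$ for every $x_{i,s}\in U_{w_k}$ and hence $\pi(x_{i,s}) = w_k$. Combining the two scenarios and subtracting the complementary contributions yields
\begin{equation*}
|\mathcal{S}^{\sharp}_{[t],k}| \;\ge\; \sum_{s=1}^{t}\sum_{i=1}^{N} Y_{i,s} \;-\; N\!\!\sum_{s\in \mathbb{T}^c \cap [t]}\!\! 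I(E_s^c) \;-\; N\!\!\!\!\sum_{s\in \mathbb{T}\cap[t],\, s \notin \mathbb{T}_{\cdot,w_k}}\!\!\!\! 1.
\end{equation*}

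I would then close the proof with three concentration estimates. A Chernoff (equivalently Hoeffding) bound on the i.i.d.\ total $\sum_{s,i} Y_{i,s}$ of mean $\ge Ntp_*$ at additive deviation $p_*/2$ yields $\sum Y_{i,s} \ge Ntp_*/2$ on an event of probability at least $1-\exp(-\Theta(p_*^2 t))$; the union bound together with Lemma \ref{eventE} controls $\sum_s I(E_s^c)$ via the convergent tail $\sum_{s\ge(Kq)^2}5K/s^4$; and Lemma \ref{TeamSize} deterministically bounds the teamwork-for-other-arms term by $6(K-1)Nq\log t$. The hypothesis $t \ge C_5 \equiv \min\{t: t \ge 24Nq\log t+4(Kq)^2\}$ is engineered so that these two corrections are both dominated by $Ntp_*/4$, giving $|\mathcal{S}^{\sharp}_{[t],k}| \ge Ntp_*/4$; combined with $|\mathcal{S}_{[t],k}| \le Nt$ this produces the ratio $\ge p_*/4$.

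The main obstacle is assembling all three failure sources into the single clean exponential $\exp(-p_*^2 t/128)$. The $p_*^2$ dependence emerges naturally from concentration of Bernoulli$(p_*)$ variables at additive deviation $\epsilon = p_*/2$, while the constants $128$ and $C_5$ are tuned precisely so that the polynomial $E_s$-tail contribution and the $O(Nq\log t)$ teamwork-overhead contribution are both absorbed by the Chernoff rate once $t\ge C_5$. Careful bookkeeping of these three terms, rather than any single hard inequality, is the main technical burden.
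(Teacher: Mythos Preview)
Your decomposition of $|\mathcal{S}^{\sharp}_{[t],k}|$ is correct and your identification of the two deterministic corrections (teamwork-for-other-arms and the initialization window $s<(Kq)^2$) is fine. The gap is in how you handle the random correction $N\sum_{s\in\mathbb{T}^c\cap[t]} I(E_s^c)$. Lemma~\ref{eventE} gives only the polynomial tail $P(E_s^c)\le 5K/s^4$ for $s\ge (Kq)^2$; a union bound over such $s$ yields a failure probability equal to the convergent series $\sum_{s\ge(Kq)^2}5K/s^4$, a fixed constant that does not decay in $t$, and Markov's inequality on $\sum_s I(E_s^c)$ is likewise only polynomial in $t$. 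Consequently you cannot fold this term into a single $\exp(-p_*^2 t/128)$ bound: for large $t$ the $E_s$-failure contribution already dominates it. A second, related issue is that $C_5$ contains no factor of $p_*$, so the assertion that the deterministic corrections are dominated by $Ntp_*/4$ does not follow from $t\ge C_5$ as you claim.

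The paper sidesteps both issues by never isolating $\sum_s I(E_s^c)$ as a random term. It lower-bounds $|\mathcal{S}^{\sharp}_{[t],k}|$ by $\sum_{s,i} I(E_{s-1})\,I(x_{i,s}\in U_{w_k})\,I(s\notin\mathbb{T})$, forms the Doob martingale $M_s=E[|\mathcal{S}^{\sharp}_{[t],k}|\mid\mathcal{G}_s]$, and applies Azuma's inequality with bounded increments. The events $E_{s-1}$ then enter only through the \emph{expectation} lower bound $E[|\mathcal{S}^{\sharp}_{[t],k}|]\ge [t-|\mathbb{T}\cap[t]|-(Kq)^2]\cdot p_*/2\ge 3p_*t/8$, where the factor $p_*/2$ arises from $P(E_{s-1})P(X\in U_{w_k})\ge p_*/2$ once $s\ge (Kq)^2$. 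Because $p_*$ now multiplies the whole bracket, the condition $t\ge C_5$ (which controls $|\mathbb{T}\cap[t]|+(Kq)^2$ relative to $t$ without any $p_*$) suffices, and Azuma with $\eta=p_*t/8$ produces the clean exponential $\exp(-p_*^2 t/128)$. In short: the $E_s$ randomness has to be absorbed into the martingale's conditional expectation rather than subtracted off as a separate random correction, because its own tail is only polynomial and would otherwise dominate the final bound.
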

\begin{proof}
We start from noting the fact that the all sample set for treatment $w_k$, $\mathcal{S}_{[t],k}$, can have at most $t$ elements up to time t ($|\mathcal{S}_{[t],k}|\le t$) implies
\begin{eqnarray}
    &&
    P(\frac{
    |\mathcal{S}^{\sharp}_{[t],k}|
    }{
    |\mathcal{S}_{[t],k}|
    }
    <
    \frac{1}{2}\frac{p_*}{2})
    \ge
    P(\frac{
    |\mathcal{S}^{\sharp}_{[t],k}|
    }{
    t
    }
    <
    \frac{p_*}{4})
    =
    P(|\mathcal{S}^{\sharp}_{[t],k}| 
    < \frac{p_*}{4} \cdot t)
\end{eqnarray}

To handle RHS, we note that the size of $\mathcal{S}_{[t],k}^{ \sharp}$ admits a representation 
\begin{equation}
     |\mathcal{S}_{[t],k}^{ \sharp}|= \sum_{s=1}^{t}\sum_{i \in N(s)}I((X_{(i,s)}, Y_{(i,s)})\in \mathcal{S}_{[t],k}^{ \sharp}).
\end{equation}
The strategy to utilize such representation is first to construct a martingale difference sequence and then apply Azuma's inequality to attain desired result.

First, all samples been collected in $\mathcal{S}_{[t],k}^{ \sharp}$ are  optimal allocation in selfish stage given good event happens. Thus, whether a sample $(X_{i,s}, Y_{i,s})$ belongs to $\mathcal{S}_{[t],k}^{ \sharp}$ has a representation
\begin{equation}
I((X_{i,s}, Y_{i,s})\in S^{\sharp}_{[t],k})
=
I(E_{s-1})I(X_{(i,s)}\in U_{w_k})I(s \notin \mathcal{T}_{[t], \cdot}).
\end{equation}

Recall that samples in $S^{\sharp}_{[s],k}$ also satisfies model assumption and hence can be written as $Y = X^\top \beta + \epsilon$.
Let $\mathcal{G}_{s}$ be the sigma algebra generated by the first $N(s)\equiv |S^{\sharp}_{[s],k}|$ 
rows of the design matrix $X$ and the first $N(s)$ entries of the noise vector $\epsilon$, and let $\mathcal{G}_{0} = \phi$. With this, $I(E_{s-1})$ is $\mathcal{G}_{s-1}$ measurable; $I(X_{(i,s)} \in U_{w_k})$ is $\mathcal{G}_{s}$ measurable and independent of $\mathcal{G}_{s-1}$; $I(s \notin \mathcal{T}_{[t],\cdot})$ is deterministic by planning of teamwork stage. Follow the Doob's martingale construction, define
\begin{equation}
    M_{s}=E[|\mathcal{S}_{[t],k}^{ \natural}||\mathcal{G}_{s}]
\end{equation} for all $s\in[t]\cup\{0\}$. The resulting sequence $M_0, M_1,\cdots, M_t$ is a martingale adapted to the filtration $\mathcal{G}_{0}\subset \mathcal{G}_{1} \subset \cdots \mathcal{G}_{t}$ with $M_0 = E[|\mathcal{S}_{[t],k}^{ \natural}|]$ and $M_t=|\mathcal{S}_{[t],k}^{ \natural}|.$ The desired martingale differences is thus $M_{s} - M_{s-1}$.

Now since the martingale differences $M_{s}-M_{s-1}$ are bounded by $N(s)-N(s-1)$, the Azuma's inequality, (see. Theorem 7.2.1 from Alon and Spencer 1992), to obtain for all $\eta >0,$
\begin{equation}
P(|\mathcal{S}_{[t],k}^{ \natural}|
<E(|\mathcal{S}_{[t],k}^{ \natural}|)
-\eta)
\le \exp(-\frac{\eta^2}{2N(t)}).
\end{equation}
Now a lower bound for expected size of $\mathcal{S}_{[t],k}^{ \natural}$ follows from adopted policy that
\begin{align*}
E[|\mathcal{S}_{[t],k}^{\sharp}|]
&=
\sum_{s=1}^{t}\sum_{i \in N(s)}P((X_{(i,s)}, Y_{(i,s)})\in \mathcal{S}_{[t],k}^{ \sharp})
\ge[t-|\mathcal{T}_{[t],\cdot}|-(Kq)^2]\frac{p_{*}}{2}\\
&\ge
[t-6KNq\log t-(Kq)^2]\frac{p_{*}}{2}
\ge\frac{3p_{*}}{8}t,
\end{align*}
where the last inequality from the definition of constant $C_5$. Thus, taking $\eta = \frac{p_*}{8}t,$ we have
\begin{equation}
P(|\mathcal{S}_{[t],k}^{ \sharp}|< \frac{p_{*}}{4}t)\le \exp(-\frac{p_{*}^2}{128}t).
\end{equation}
\end{proof}

\section{Deviation Inequalities of Teamwork LASSO and ALL LASSO}

\subsection{Teamwork LASSO-Proof in Corollary \ref{TeamworkOracle}}

\textit{Proof.}
Note $C_1(\frac{\phi_1 \sqrt{r}}{2}) = \frac{r^2}{16}C_1(\phi_1)$.
Apply Theorem \ref{TeamSize} for $\chi= \frac{h}{4x_{\max}}$ and $r=p_*$. First, $q_0 \ge \frac{512 x_{\max}^2}{NC_1(\phi_1)p_*^2 h^2}$ and lemma \ref{TeamSize} imply
\begin{eqnarray}
&&-C_1(\frac{\phi_1 \sqrt{p_*}}{2})
|\mathcal{D}_{[t],w_k}|\chi^2
+\log d
\le
\frac{NC_1(\phi_1)p_*^2 h^2}{128 x_{\max}^2}\log t \cdot q_0 \le -4\log t.
\end{eqnarray}
Second, $|\mathcal{A}^{\sharp}_{w_k}| \ge \frac{r}{2}|\mathcal{A}_{w_k}|\ge \frac{p_*}{2}Nq_0$ and $q_0 \ge \frac{8}{NC_2(\phi_1)^2p_*}$ and it implies
\begin{equation}
-|\mathcal{D}_{[t],w_k}^{\sharp}|C_2(\phi_1)^2
\le - \frac{NC_2(\phi_1)^2p_*}{2}\cdot q_0 \le -4\log t
\end{equation}
Last, we find
\begin{eqnarray}
&&P(\|\hat{\beta}_{w_k}(\mathcal{D}_{[t],k}, \lambda_1)
-\beta_{w_k}\|_1 >\frac{h}{4x_{\max}})\\
&\le&
P(\|\hat{\beta}_{w_k}(\mathcal{D}_{[t],k}, \lambda_1)
-\beta_{w_k}\|_1 >\frac{h}{4x_{\max}}, \frac{|\mathcal{D}_{[t],k}^{ \sharp}|}{|\mathcal{D}_{[t],k}|}\ge \frac{p_{*}}{2})
+
P(\frac{|\mathcal{D}_{[t],k}^{ \sharp}|}{|\mathcal{D}_{[t],k}|}< \frac{p_{*}}{2})\\
&\le& 2\cdot 
\frac{1}{t^4} + \frac{1}{t^4}+ \frac{2}{t^4} =\frac{5}{t^4} 
\end{eqnarray}

\subsection{All LASSO--Proof in Corollary \ref{AllOracle}}

\textit{Proof.}
Note $C_1(\frac{\phi_1 \sqrt{r}}{2}) = \frac{r^2}{16}C_1(\phi_1)$.
Apply Theorem \ref{MainOra} for $\chi= \frac{16}{\sqrt{p_*^3 C_1(\phi_0)}}
\sqrt{
\frac{\log t + \log d}{t}
}$ and $r=\frac{p_*}{2}$.
First, $|\mathcal{S}_{[t],w_k}| \ge \frac{p_* t}{4}$ and lemma \ref{TeamSize} imply
\begin{equation}
-C_1(\frac{\phi_1 \sqrt{p_*/2}}{2})
|\mathcal{S}_{[t],w_k}|\chi^2
+\log d
\le
-\frac{p_*^2}{64}C_1\cdot \frac{p_* t}{4}\cdot 256\frac{\log t + \log d}{tp_*^3 C_{1}}
+\log d
=-\log t.
\end{equation}
Second, $|\mathcal{S}_{[t],w_k}^{\sharp}| \ge \frac{p_* }{4}|\mathcal{S}_{[t],w_k}| \ge \frac{p_*^2 t}{16}$ and $C_2^2 \ge \frac{1}{2}$ imply
\begin{equation}
-|\mathcal{S}_{[t],w_k}^{\sharp}|C_2(\phi_1)^2
\le - \frac{p_*^2}{32}\cdot t
\end{equation}
Last, we find
\begin{eqnarray}
&&P(\|\hat{\beta}_{w_k}(\mathcal{D}_{[t],k}, \lambda_1)
-\beta_{w_k}\|_1 >\frac{16}{\sqrt{p_*^3 C_1(\phi_0)}}
    \sqrt{
    \frac{\log t + \log d}{t}
    })\\
&\le&
P(\|\hat{\beta}_{w_k}(\mathcal{D}_{[t],k}, \lambda_1)
-\beta_{w_k}\|_1 >\frac{16\sqrt{
    \frac{\log t + \log d}{t}
    }}{\sqrt{p_*^3 C_1(\phi_0)}}
    , \frac{|\mathcal{S}_{[t],k}^{ \sharp}|}{|\mathcal{S}_{[t],k}|}\ge \frac{p_{*}}{2})
+
P(\frac{|\mathcal{S}_{[t],k}^{ \sharp}|}{|\mathcal{S}_{[t],k}|}< \frac{p_{*}}{2})\\
&\le& 2\cdot 
\frac{1}{t} + \exp(-\frac{p_*^2}{32}\cdot t)
+\exp(-\frac{p_*^2}{128}\cdot t)
\le
2(\frac{1}{t}+\exp(-\frac{p_*^2}{32}\cdot t))
\end{eqnarray}

\section{Regret Analysis}

We show the properties of $\hat{K}(x)$ for $x \in \mathcal{X}$ and for $x \in U_{w}$ of a available treatment $w$. 
In words, for any given observed covariate $x \in \mathcal{X}$, Teamwork LASSO excludes those sub-optimal treatment of $x$ up to tolerance level $h$. If $x \in U_{w}$, then Teamwork excludes all treatment other than the optimal treatment of $x$. Therefore, the probability of random covariate $X$ belongs to $U_{w}$ matters.

\subsection{Proof of lemma \ref{allCandi}}

\begin{lemma}{(For $x \in 
\mathcal{X}$)}\label{allCandi}
Suppose the ($t-1$)th decision epoch is in selfish stage and event $E_{t-1}$ holds. Then for each available treatment $w_i \in \mathcal{W}$ and any possible observed covariate $x \in \mathcal{X}$, the estimated optimal treatment candidate set contains the optimal treatment of $x$: $w^*(x) \equiv \arg\max_{w \in \mathcal{W}} \langle x, \beta_{w}
\rangle$
and no any sub-optimal treatment $w \in \mathcal{W}_{\text{sub}}$. That is,
\begin{equation}
    w^*(x) \in \widehat{K}(x)~~~\text{ and }~~~
    w \notin \widehat{K}(x) ~~\text{for all}~~w \in \mathcal{W}_{\text{sub}}
\end{equation}
\end{lemma}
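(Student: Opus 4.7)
The plan is to reduce the claim to two numerical inequalities about the estimated scores $\hat\mu_{w}(x) \equiv \langle x, \hat\beta_{w}(\mathcal{T}_{[t-1],w},\lambda_1)\rangle$, leveraging event $E_{t-1}$ to transfer information between the estimated and true treatment efficacies.

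First I would convert the $L_1$-estimation bound in $E_{t-1}$ into a scorewise bound. By Hölder's inequality and the covariate bound $\|x\|_\infty\le x_{\max}$,
\begin{equation*}
\bigl|\hat\mu_{w}(x) - \langle x,\beta_{w}\rangle\bigr|
\;\le\;
\|x\|_{\infty}\,\|\hat\beta_{w}(\mathcal{T}_{[t-1],w},\lambda_1)-\beta_{w}\|_1
\;\le\;
x_{\max}\cdot\frac{h}{4x_{\max}} \;=\; \frac{h}{4},
\end{equation*}
uniformly over $w\in\mathcal{W}$, on the event $E_{t-1}$. This is the only place the hypothesis $E_{t-1}$ enters.

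Next, for $w^{*}(x)\in\widehat{K}(x)$, I would take any $w\in\mathcal{W}$ and chain three facts: $\hat\mu_{w^{*}(x)}(x)\ge \langle x,\beta_{w^{*}(x)}\rangle - h/4$ by the scorewise bound, $\langle x,\beta_{w^{*}(x)}\rangle \ge \langle x,\beta_{w}\rangle$ by definition of $w^{*}(x)$, and $\langle x,\beta_{w}\rangle \ge \hat\mu_{w}(x) - h/4$ again by the scorewise bound. Combining these gives $\hat\mu_{w^{*}(x)}(x) \ge \hat\mu_{w}(x) - h/2$ for every $w$, which is exactly membership of $w^{*}(x)$ in $\widehat{K}(x)$ as defined in \eqref{OptCandidate}.

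Finally, for exclusion of $w\in\mathcal{W}_{\text{sub}}$, I would invoke Assumption \ref{as:TreatOptCond}(a): there exists some $\tilde w\ne w$ with $\langle x,\beta_{\tilde w}\rangle > \langle x,\beta_{w}\rangle + h$. Applying the scorewise bound to both sides,
\begin{equation*}
\hat\mu_{\tilde w}(x) \;\ge\; \langle x,\beta_{\tilde w}\rangle - \tfrac{h}{4}
\;>\; \langle x,\beta_{w}\rangle + h - \tfrac{h}{4}
\;\ge\; \hat\mu_{w}(x) + \tfrac{h}{2}.
\end{equation*}
Since $\max_{w'\in\mathcal{W}}\hat\mu_{w'}(x)\ge \hat\mu_{\tilde w}(x)$, this forces $\hat\mu_{w}(x) < \max_{w'} \hat\mu_{w'}(x) - h/2$, so $w\notin\widehat{K}(x)$. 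Both halves are elementary once the scorewise bound $h/4$ is in place; the only conceptual step is recognizing that the choice of constants in $E_{t-1}$ (i.e., the factor $1/(4x_{\max})$) is exactly calibrated so that two instantiations of the scorewise slack ($h/4 + h/4 = h/2$) match the tolerance in the definition of $\widehat{K}(x)$ and leave a strict gap against the margin $h$. There is no real technical obstacle, but one must be careful that the argument is purely deterministic conditional on $E_{t-1}$ and uses no assumption on $\mathcal{P}_X$ beyond $\mathcal{X}\subseteq\{\|x\|_\infty\le x_{\max}\}$.
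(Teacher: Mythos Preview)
Your proposal is correct and follows essentially the same route as the paper: convert the $L_1$ bound in $E_{t-1}$ into a uniform $h/4$ scorewise bound via H\"older and $\|x\|_\infty\le x_{\max}$, then chain two applications of this bound with the optimality of $w^*(x)$ (for inclusion) and with Assumption~\ref{as:TreatOptCond}(a) (for exclusion of $w\in\mathcal{W}_{\text{sub}}$). The only cosmetic difference is that the paper routes the comparisons through the estimated maximizer $w^{\text{team}}(x)$ and through $w^*(x)$ rather than through an arbitrary $w$ and $\tilde w$, but the inequalities and constants are identical.
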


\begin{proof}
First, we show $ w^*(x) \in \widehat{K}(x)$. Note at the $t$th decision epoch, the optimal treatment suggested by Teamwork LASSO is $w^{\text{team}}(x) \equiv \arg\max_{w \in \mathcal{W}} x^\top \widehat{\beta}_{w}(\mathcal{D}_{[t-1], w}, \lambda_1)$.
Since $E_{t-1}$ holds, it implies $x^\top 
\widehat{\beta}_{w}(\mathcal{D}_{[t-1], w}, \lambda_1) - x^\top \beta_{w} < x_{\max}\cdot \frac{h}{4 x_{\max}}=\frac{h}{4}$ for all available treatment $w$, which includes $w^{*}$ and $w^{\text{team}}$. 
\begin{eqnarray}
&&
x^\top \widehat{\beta}(\mathcal{D}_{[t-1], w^{\text{team}}})
-
x^\top \widehat{\beta}(\mathcal{D}_{[t-1], w^{*}})\\
&=&
(x^\top\widehat{\beta}(\mathcal{D}_{[t-1], w^{\text{team}}})
-x^\top \beta_{w^{\text{team}}})
+
(x^\top \beta_{w^{\text{team}}}
-
x^\top\beta_{w^{*}})
+
(x^\top\beta_{w^{*}}
-
x^\top\widehat{\beta}(\mathcal{D}_{[t-1], w^{*}})\\
&\le& \frac{h}{4} + 0 + \frac{h}{4} = \frac{h}{2},
\end{eqnarray}
where the last inequality is from the definition of $w^{*}(x)$ that $x^\top\beta_{w^{*}}- x^\top\beta_{w^{\text{team}}}<0$.

Second, we show $w_{\text{sub}} \notin \widehat{K}(x) ~~\text{for all}~~w_{\text{sub}} \in \mathcal{W}_{\text{sub}}$. Since $E_{t-1}$ holds, it implies $x^\top 
\widehat{\beta}_{w}(\mathcal{D}_{[t-1], w}, \lambda_1) - x^\top \beta_{w} > -x_{\max}\cdot \frac{h}{4 x_{\max}}=-\frac{h}{4}$ for all available treatment $w$, which includes $w^{\text{team}}$ and $w^{\text{sub}}$.
\begin{eqnarray}
&&
x^\top \widehat{\beta}(\mathcal{D}_{[t-1], w^{\text{team}}})
-
x^\top \widehat{\beta}(\mathcal{D}_{[t-1], w^{\text{sub}}})\\
&\ge&
x^\top \widehat{\beta}(\mathcal{D}_{[t-1], w^{*}})
-
x^\top \widehat{\beta}(\mathcal{D}_{[t-1], w^{\text{sub}}})\\
&=&
(x^\top\widehat{\beta}(\mathcal{D}_{[t-1], w^{*}})
-x^\top \beta_{w^{*}})
+
(x^\top \beta_{w^{*}}
-
x^\top\beta_{w^{\text{sub}}})
+
(x^\top\beta_{w^{\text{sub}}}
-
x^\top\widehat{\beta}(\mathcal{D}_{[t-1], w^{\text{sub}}})\\
&\ge& -\frac{h}{4} + h + -\frac{h}{4} = \frac{h}{2},
\end{eqnarray}
where the last inequality is from the definition of $\mathcal{W}_{\text{sub}}$ that $x^\top\beta_{w^{*}}- x^\top\beta_{w^{\text{sub}}}>h$.
\end{proof}

\subsection{Proof of lemma \ref{optCandi}}

\begin{lemma}(For $x \in 
{U}_{w_{i}}$)\label{optCandi}
Suppose the ($t-1$)th decision epoch is in selfish stage and event $E_{t-1}$ holds. Then for each available treatment $w_i \in \mathcal{W}$, if a observed covariate $x$ belongs to $U_{w_i}$, then the estimated optimal treatment candidate set contains only treatment $w_i$, that is
\begin{equation}
    \widehat{K}(x) = \{w_i\}.
\end{equation}
\end{lemma}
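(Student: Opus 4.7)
\textbf{Proof proposal for Lemma \ref{optCandi}.}

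The plan is to translate the $\ell_1$-error bound guaranteed by the good event $E_{t-1}$ into a per-covariate pointwise bound on inner products, and then to use the definition of $U_{w_i}$ to show the Teamwork LASSO rankings reproduce the true ranking with an $h/2$ safety margin. Throughout, write $\hat{\beta}_w$ as shorthand for $\hat{\beta}_w(\mathcal{T}_{[t-1],w},\lambda_1)$.

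The first step is a one-line consequence of H\"older's inequality together with the uniform covariate bound $\|x\|_\infty \le x_{\max}$ and the defining inequality of $E_{t-1}$: for every treatment $w \in \mathcal{W}$,
\begin{equation*}
\bigl|\langle x,\hat{\beta}_w\rangle-\langle x,\beta_w\rangle\bigr|
\;\le\;\|x\|_\infty\,\|\hat{\beta}_w-\beta_w\|_1
\;\le\;x_{\max}\cdot\frac{h}{4x_{\max}}\;=\;\frac{h}{4}.
\end{equation*}
The second step uses the hypothesis $x \in U_{w_i}$, which gives $\langle x,\beta_{w_i}\rangle > \langle x,\beta_w\rangle + h$ for every $w\neq w_i$. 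Chaining the two-sided pointwise bound above around both $\beta_{w_i}$ and $\beta_w$, I obtain for any $w\neq w_i$
\begin{equation*}
\langle x,\hat{\beta}_{w_i}\rangle
\;\ge\;\langle x,\beta_{w_i}\rangle-\tfrac{h}{4}
\;>\;\langle x,\beta_w\rangle+h-\tfrac{h}{4}
\;\ge\;\langle x,\hat{\beta}_w\rangle+\tfrac{h}{2}.
\end{equation*}

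The third step simply reads off $\widehat{K}(x)$ from this strict gap. Since $\langle x,\hat{\beta}_{w_i}\rangle$ strictly dominates every other $\langle x,\hat{\beta}_w\rangle$, the maximum $\max_{w\in\mathcal{W}}\langle x,\hat{\beta}_w\rangle$ equals $\langle x,\hat{\beta}_{w_i}\rangle$, so $w_i\in\widehat{K}(x)$ trivially. For any $w\neq w_i$ the previous display yields $\langle x,\hat{\beta}_w\rangle < \langle x,\hat{\beta}_{w_i}\rangle-\tfrac{h}{2} = \max_{w'}\langle x,\hat{\beta}_{w'}\rangle-\tfrac{h}{2}$, which violates the membership criterion of $\widehat{K}(x)$ in \eqref{OptCandidate}. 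Hence $\widehat{K}(x)=\{w_i\}$.

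There is no real obstacle in this argument beyond arithmetic: the work was done upstream in setting the Teamwork LASSO tolerance $h/(4x_{\max})$ so that two H\"older hits total at most $h/2$, leaving a strict $h/2$ separation from the $h$-gap definition of $U_{w_i}$. The only item worth double-checking in the write-up is consistency of the direction of the inequality in \eqref{OptCandidate} (``within $h/2$ of the maximum'') with the strict inequality derived above, which is what guarantees exclusion rather than mere non-inclusion of each $w\neq w_i$.
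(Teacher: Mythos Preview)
Your proof is correct and follows essentially the same approach as the paper: both use H\"older's inequality with $\|x\|_\infty\le x_{\max}$ to convert the $\ell_1$ bound from $E_{t-1}$ into a uniform $h/4$ bound on inner products, then combine two such bounds with the $h$-gap defining $U_{w_i}$ to obtain $\langle x,\hat\beta_{w_i}\rangle \ge \langle x,\hat\beta_{w_j}\rangle + h/2$ for all $w_j\neq w_i$, and read off $\widehat{K}(x)=\{w_i\}$ from the definition. Your write-up is in fact slightly more explicit than the paper's in verifying both the inclusion $w_i\in\widehat{K}(x)$ and the exclusion of every other $w$.
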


\begin{proof}
For every treatment $w_j$ other than $w_i$, 
since $x \in U_{w_i}$, definition of $U_{w_i}$ implies $x^\top \beta_{w_i} - x^\top \beta_{w_j} > h$; since $E_{t-1}$ holds, it implies $x^\top 
\widehat{\beta}_{w}(\mathcal{D}_{[t-1], w}, \lambda_1) - x^\top \beta_{w} > -x_{\max}\cdot \frac{h}{4 x_{\max}}=-\frac{h}{4}$. Combine them to obtain, for every treatment $w_j$ other than $w_i$
\begin{eqnarray}
&&
x^\top \widehat{\beta}_{w_i}(\mathcal{D}_{[t-1], w_i}, \lambda_1)
-
x^\top \widehat{\beta}_{w_j}(\mathcal{D}_{[t-1], w_j}, \lambda_1)\\
&=&
x^\top [\widehat{\beta}_{w_i}(\mathcal{D}_{[t-1], w_i}, \lambda_1) -\beta_{w_i}]
-
x^\top [\widehat{\beta}_{w_j}(\mathcal{D}_{[t-1], w_j}, \lambda_1)-\beta_{w_j}]
+
x^\top[\beta_{w_i}-\beta_{w_j}]\\
&\ge& -\frac{h}{4}-\frac{h}{4}+h 
= \frac{h}{2}.
\end{eqnarray}
That is, for every treatment $w_j$ other than $w_i$,
\begin{equation}
x^\top \widehat{\beta}_{w_i}(\mathcal{D}_{[t-1], w_i}, \lambda_1)
\ge
x^\top \widehat{\beta}_{w_j}(\mathcal{D}_{[t-1], w_j}, \lambda_1) + \frac{h}{2}.
\end{equation}
Therefore, by construction of optimal treatment candidate set, we conclude $\widehat{K}(x) = \{w_i\}$.
\end{proof}

\subsection{Regret bound for case (4)}
\label{case4regret}

\begin{lemma}\label{techRegret}
\begin{equation}
    f(t) = 
    [4Kbx_{\max} + C_3(\phi_0, p_{*})\cdot \log d]\frac{1}{t}
    +
    8Kbx_{\max}
    \exp[-\frac{p_*^2C_2(\phi_0)^2}{32}\cdot t]
    +
    C_3(\phi_0, p_{*})
    \frac{\log t}{t}
\end{equation}
\end{lemma}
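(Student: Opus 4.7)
The plan is to bound $E[r_{i,t} I(G_t)]$ by decomposing on whether the All LASSO estimators are accurate at time $t-1$. Set
$$\delta := \frac{16}{\sqrt{p_*^3 C_1(\phi_0)}}\sqrt{\frac{\log(t-1) + \log d}{t-1}},$$
and define the ``All LASSO accurate'' event
$$\mathcal{A}_t := \bigcap_{w \in \mathcal{W}_{\text{opt}}}\left\{\|\widehat{\beta}_w((\mathcal{T}\cup \mathcal{E})_{[t-1], w}, \lambda_{2,t-1}) - \beta_w\|_1 \le \delta\right\}.$$
By Corollary \ref{AllOracle} and a union bound over the at most $K$ optimal treatments, $P(\mathcal{A}_t^c) \le 2K\left(1/(t-1) + \exp(-p_*^2 C_2(\phi_0)^2 (t-1)/32)\right)$. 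Using the trivial bound $r_{i,t} \le 2bx_{\max}$ on $\mathcal{A}_t^c$ will account for the $4Kbx_{\max}/t$ and $8Kbx_{\max}\exp[\cdot]$ contributions to $f(t)$, with harmless constants absorbed using $t \ge C_5$.

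The substantive work is to bound $E[r_{i,t} I(G_t \cap \mathcal{A}_t)]$. On $G_t$ the event $E_{t-1}$ holds, so Lemma \ref{allCandi} shows that the sub-optimal arms are screened out ($\widehat{K}(x_{i,t}) \subseteq \mathcal{W}_{\text{opt}}$) and $w^*_{i,t} \in \widehat{K}(x_{i,t})$; moreover Lemma \ref{optCandi} gives $\widehat{K}(x_{i,t}) = \{w^*_{i,t}\}$ whenever $x_{i,t} \in \bigcup_w U_w$, so the regret vanishes on that sub-event. Restricting to $x_{i,t} \in \mathcal{X}\setminus\bigcup_w U_w$, the selfish step selects $w_{i,t}$ as the maximizer of $\langle x_{i,t}, \hat\beta_w\rangle$ over $\widehat{K}(x_{i,t}) \ni w^*_{i,t}$, so the add-and-subtract identity
$$r_{i,t} = \langle \beta_{w^*_{i,t}} - \hat\beta_{w^*_{i,t}}, x_{i,t}\rangle + \langle \hat\beta_{w^*_{i,t}} - \hat\beta_{w_{i,t}}, x_{i,t}\rangle + \langle \hat\beta_{w_{i,t}} - \beta_{w_{i,t}}, x_{i,t}\rangle$$
has a non-positive middle term and, combined with $\|x_{i,t}\|_\infty \le x_{\max}$ and the definition of $\mathcal{A}_t$, yields $r_{i,t} \le 4 x_{\max}\delta$ on $G_t \cap \mathcal{A}_t$.

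To finish I will invoke the Margin Condition (Assumption \ref{as:Margin_Condition}) applied to each pair $(w_i, w_j)$; summing over the at most $K^2$ such pairs gives $P(0 < r_{i,t} \le \kappa) \le K^2 C_0 \kappa$. Therefore
$$E[r_{i,t} I(G_t \cap \mathcal{A}_t)] \le 4 x_{\max}\delta \cdot P(0 < r_{i,t} \le 4 x_{\max}\delta) \le 16 K^2 C_0 x_{\max}^2 \delta^2 = O\!\left(\frac{\log t + \log d}{t}\right),$$
which is exactly the $C_3(\phi_0,p_*)(\log t + \log d)/t$ piece of $f(t)$ once the constants in $\delta$ are absorbed into $C_3$. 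Aggregating this with the worst-case contribution on $\mathcal{A}_t^c$ reproduces all three terms of $f(t)$. The main obstacle will be making sure that Corollary \ref{AllOracle}, which applies to the adaptively collected sample set $(\mathcal{T}\cup \mathcal{E})_{[t-1],w}$, can be union-bounded cleanly across $\mathcal{W}_{\text{opt}}$ while preserving the filtration structure used to derive it; the margin step itself is routine given the pointwise bound $r_{i,t} \le 4x_{\max}\delta$.
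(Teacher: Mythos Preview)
Your approach is correct and reaches the same $f(t)$ as the paper, but the decomposition is organized differently. The paper does not isolate a global ``All LASSO accurate'' event $\mathcal{A}_t$; instead it works arm-by-arm, defining for each $w_k\in\widehat{K}(x_{i,t})$ the margin set $B_{w_k}=\{x:x^\top(\beta_{w_1}-\beta_{w_k})>2\delta x_{\max}\}$ and splitting on $x_{i,t}\in B_{w_k}$ versus $x_{i,t}\in B_{w_k}^c$. On $B_{w_k}$ the regret is at most $2bx_{\max}$ and the event $\{x^\top\widehat\beta_{w_k}>x^\top\widehat\beta_{w_1}\}\cap B_{w_k}$ forces one of the two All LASSO estimates to deviate by more than $\delta$, so its probability is controlled by Corollary~\ref{AllOracle}; on $B_{w_k}^c$ the regret is at most $2\delta x_{\max}$ and the probability is handled by the Margin Condition for the single fixed pair $(w_1,w_k)$. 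Your route---first peeling off $\mathcal{A}_t^c$ via a union bound of Corollary~\ref{AllOracle}, then using the three-term add-and-subtract to get the pointwise bound $r_{i,t}\le 4x_{\max}\delta$ on $G_t\cap\mathcal{A}_t$, and finally union-bounding the margin event over all arm pairs---is arguably cleaner in that it separates the two error sources (LASSO inaccuracy versus near-tie covariates) into disjoint pieces; the price is a $K^2$ rather than $K$ inside your effective $C_3$, which is immaterial for the final rate. The concern you flag about union-bounding Corollary~\ref{AllOracle} across $\mathcal{W}_{\text{opt}}$ is not a real obstacle: the corollary already delivers a marginal probability bound for each fixed $w\in\mathcal{W}_{\text{opt}}$, so a plain union bound over at most $K$ arms suffices without touching the filtration argument.
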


\begin{proof}
Without loss of generality, for a observed covariate vector  $x_{i,t}$ of the $i$th user at the $t$th decision epoch, assume $w_1$ is the optimal treatment, that is $x_{i,t}^\top\beta_{w_1} = \max_{w \in \mathcal{W}}x_{i,t}^\top \beta_{w}$. First, we note that the instantaneous regret occurs if we allocate treatment other than $w_1$ to covariate $x$. This happens when $x^\top \widehat{\beta}(\mathcal{S}_{[t-1], w}) > x^\top \widehat{\beta}(\mathcal{S}_{[t-1], w_1})$ for some treatments $w$. This observation suggests
\begin{eqnarray}
r_{i,t}
&=&
E[\sum_{w_k\in \hat{K}(x_{i,t})}x_{i,t}^\top(\beta_{w_1} - \beta_{w_k})I(\pi(x_{i,t} = w_k)]\\
&\le&
E[\sum_{w_k\in \hat{K}(x_{i,t})}x_{i,t}^\top(\beta_{w_1} - \beta_{w_k})
I(
x_{i,t}^\top
\widehat{\beta}(\mathcal{S}_{[t],w_{k}})
>
x_{i,t}^\top
\widehat{\beta}(\mathcal{S}_{[t],w_{1}})
]
\end{eqnarray}
Second, to handle RHS, define a function $g(x) \equiv x^\top (\beta_{w_1}-\beta_{w_k})$ consider the set
\begin{equation}
    B_{w_k}\equiv\{x|x^\top(\beta_{w_1}-\beta_{w_k})>2\delta x_{\max}\}.
\end{equation}
The boundness assumption on observed covariate $x$ and efficacy parameter $\beta_{w}$ suggests $g(x) \le 2bx_{\max}$ for all $x \in B_{w_k}$; the definition of set $B_{w_k}$ suggests $g(x) \le 2\delta x_{\max}$ for all $x \in B_{w_k}^{c}$. This observation suggests 
\begin{eqnarray}
r_{i,t}
&\le& 
|\hat{K}(x_{i,t})|
\cdot
2b x_{\max}
\cdot 
E[I(
x_{i,t}^\top
\widehat{\beta}(\mathcal{S}_{[t],w_{k}})
>
x_{i,t}^\top
\widehat{\beta}(\mathcal{S}_{[t],w_{1}})I(x_{i,t} \in B_{w_k})]\\
&+&
|\hat{K}(x_{i,t})|
\cdot
2\delta x_{\max}
\cdot 
E[I(
x_{i,t}^\top
\widehat{\beta}(\mathcal{S}_{[t],w_{k}})
>
x_{i,t}^\top
\widehat{\beta}(\mathcal{S}_{[t],w_{1}})I(x_{i,t} \in B_{w_k}^{c})]\\
&\le&
K2b x_{\max}
E[I(
x_{i,t}^\top
\widehat{\beta}(\mathcal{S}_{[t],w_{k}})
>
x_{i,t}^\top
\widehat{\beta}(\mathcal{S}_{[t],w_{1}})I(x_{i,t}^\top (\beta_{w_1}-\beta_{w_k})>2\delta x_{\max})] \label{keyeq01}\\
&+&
K2\delta x_{\max}
E[I(x_{i,t}^\top (\beta_{w_1}-\beta_{w_k})\le 2\delta x_{\max})]\label{keyeq02}
\end{eqnarray}
Third, we handle equation (\ref{keyeq01}) and (\ref{keyeq02}). We note the marginal condition implies
\begin{equation}
(\ref{keyeq02})
=K2\delta x_{\max} P(X^\top (\beta_{w_1}-\beta_{w_k})\le 2\delta x_{\max})
\le C_0\cdot 2\delta x_{\max}.
\end{equation}
Based on this observation, we have
\begin{eqnarray}
(\ref{keyeq01})
&\le&
K2bx_{\max}\cdot (P(\|\beta_{w_1}-\widehat{\beta}_{w_1}(\mathcal{S}_{[t],w_1})\|_1>\delta)
+
P(\|\widehat{\beta}_{w_k}(\mathcal{S}_{[t],w_k})-\beta_{w_k}\|_1>\delta))\\
&\le&
K2bx_{\max}\cdot 2 \cdot (\frac{1}{t} + 2\exp(-\frac{p_*^2 C_2(\phi_0)^2}{32}\cdot t))
\end{eqnarray}
Last, combine above results and take $\delta = 16\sqrt{\frac{\log t + \log d}{p_{*}^3 C_1 t}}$, we have
\begin{eqnarray}
&&r_{i,t}\\
&\le&
K2bx_{\max}\cdot 2 \cdot (\frac{1}{t} + 2\exp(-\frac{p_*^2 C_2(\phi_0)^2}{32}\cdot t))
+
2\delta x_{\max}\cdot C_0\cdot 2\delta x_{\max})\\
&=&
K4bx_{\max} (\frac{1}{t} + 2\exp(-\frac{p_*^2 C_2(\phi_0)^2}{32}\cdot t))
+
4\delta^2 x_{\max}^2\cdot C_0)\\
&=&
[4Kbx_{\max} + C_3(\phi_0, p_{*}) \log d]\frac{1}{t}
    +
    8Kbx_{\max}
    \exp[-\frac{p_*^2C_2(\phi_0)^2}{32}\cdot t]
    +
    C_3(\phi_0, p_{*})
    \frac{\log t}{t},
\end{eqnarray}
as desired.
\end{proof}


\subsection{Full Regret Bound--Proof of Theorem \ref{thm:regret_upper}}

The regret can be bounded by:
\begin{eqnarray}
&& 
R_{T}=
\sum_{t \in [T]}\sum_{i \in [N]}r_{i,t}\\
&=&
\sum_{t \in [C_5]}\sum_{i \in [N]}r_{i,t}
+
\sum_{t \in [C_5: T] \cap \mathcal{T}}\sum_{i \in [N]}r_{i,t}
+
\sum_{t \in [C_5: T] \cap \mathcal{T^{c}}}\sum_{i \in [N]}r_{i,t}\\
&\le&
N\cdot C_5 \cdot 2bx_{\max}
+
N\cdot |\mathcal{T}| \cdot 2bx_{\max}
+N \cdot 
\sum_{t \in [C_5: T] \cap \mathcal{T^{c}}}[\frac{K}{t^4}\cdot 2bx_{\max}+f(t)]\\
&\le& N C_5  2bx_{\max}
+
N (6q\log T K)  2bx_{\max}
+
NK2bx_{\max}
\int_{1}^{T}\frac{1}{t^4}dt
+
N\cdot \int_{1}^{T}f(t)dt\\
&\le&
N\cdot \{2bx_{\max}\cdot [C_5 + 6qK \log T + K]\\
&+&
[4Kbx_{\max} + C_3(\phi_0, p_{*})\cdot \log d] \log T
    +
    8Kbx_{\max} C_4
    +
    C_3(\phi_0, p_{*})
    (\log T)^2\}
\end{eqnarray}

\section{Constants}
Here we list the constants that appear in the proof.
\begin{itemize}
\item 
$C_1(\phi_0)\equiv \frac{\phi_0^4}{512 s_0^2\sigma^2 x_{\max}^2}$ 
\item $C_2\equiv \min\{\frac{1}{2}, \frac{\phi_0^2}{256 s_0 x_{\max}^2}\}$
\item
$C_3\equiv \frac{1024 K C_0 x_{\max}^2}{p_*^3 C_1}$
\item
$C_4 \equiv \frac{8Kb x_{\max}}{1-\exp(-\frac{p_*^2}{32})}$
\item
$C_5 \equiv \{t \in \mathbb{Z}^{+}| t \ge 24Kq \log t + 4(Kq)^2\}$
\item
$q_0 \equiv \max\{\frac{20}{Np_{*}}, \frac{4}{Np_* C_2^2}, \frac{3\log d}{Np_*C_2^2}, \frac{1024 x_{\max}^2\log d}{Nh^2p_*^2 C_1}\}$.

\end{itemize}

\section{Experiment} \label{append:experiment}

In Figure. \ref{fig:teamworkLasso_exp}, we compare our \texttt{Teamwork LASSO Bandit} with batch size $N=4$ and $N=12$
to the \texttt{LASSO Bandit}  in \cite{bastani2020online}. In the attached plot, covariate dimension d = 200, 500 and 1000, number of treatments (arms) K =3, the length of exploration phase q = 1,2,3,4,5,6 with a total number of decisions 5000. N is the batch size, where N=1 corresponds to LASSO Bandit and N=4, 12 corresponds our Teamwork LASSO Bandit. We run 100 replications for each setting.

\textbf{Remark on cumulative regret and covariate vector dimension.}
In the experiment, we increase the covariate vector dimension from 200, 500 to 1000. The performance of high update frequency algorithm is more sensitive to the increase in covariate dimension than our low update frequency algorithm.

\textbf{Remark on the length of exploration phase $q$.}
In real world practice, the length of exploration phase q is pre-specified and then an explore-exploitation policy follows the choice of q. Given the same total number of decisions, it is often the case that one prefers a smaller value of q, which means fewer regret from exploration and is more time efficient in the sense that more rounds of explore-exploit can be done.

\begin{figure*}[b]
\vskip 0.2in
\begin{center}
\centerline{\includegraphics[width=0.9\textwidth]{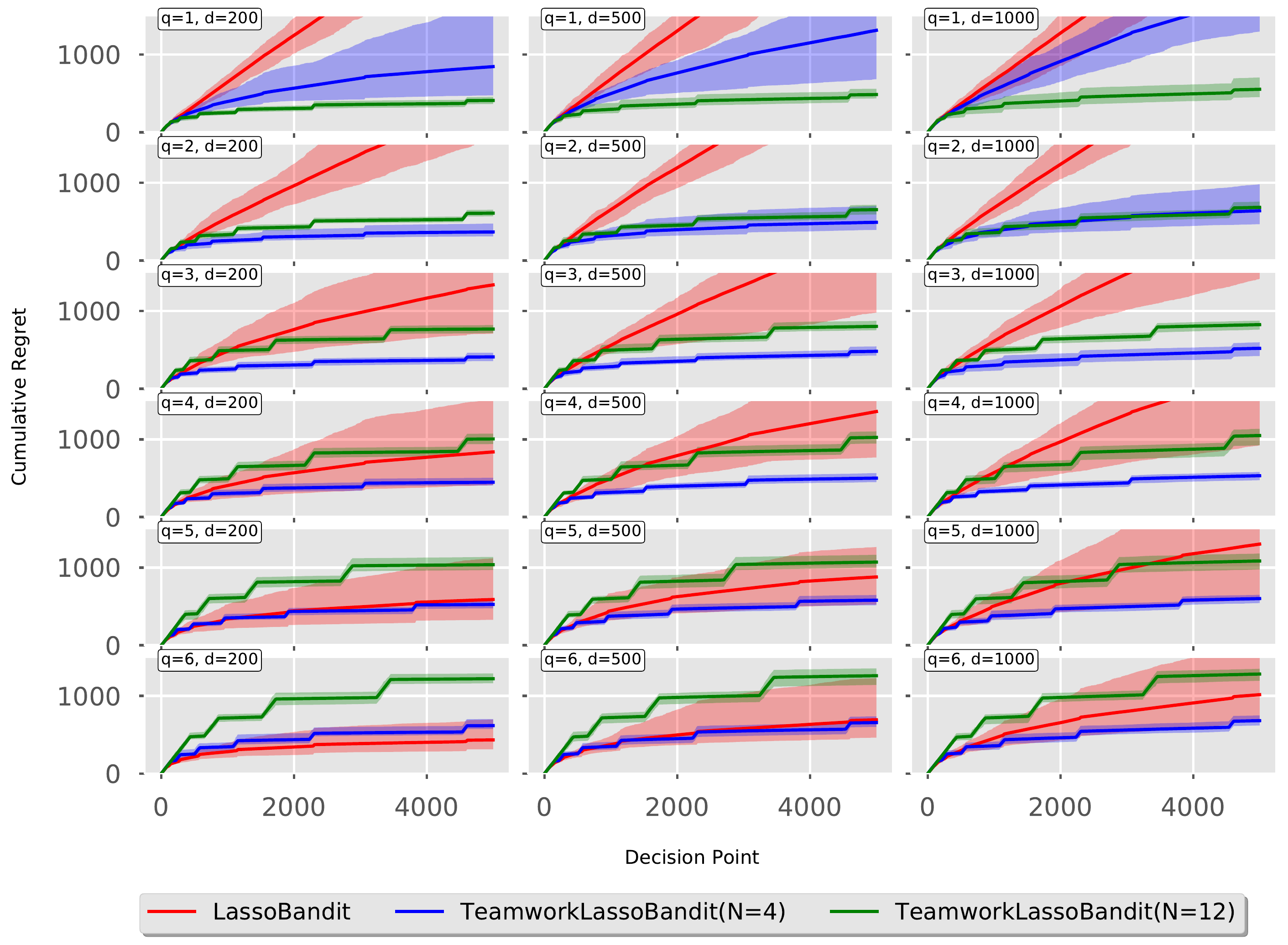}}
\caption{Comparison of our \texttt{Teamwork LASSO Bandit} with batch size $N=4$ and $N=12$
to the \texttt{LASSO Bandit}  in \cite{bastani2020online}. The error bars represent the maximum and minimum of the regret among 100 replications.}
\label{fig:teamworkLasso_exp}
\end{center}
\vskip -0.2in
\end{figure*}

\end{document}